\documentclass[11pt]{article}

\usepackage{epsfig,epsf,fancybox}
\usepackage{amsmath}
\usepackage{mathrsfs}
\usepackage{amssymb}
\usepackage{graphicx}
\usepackage{color}
\usepackage{multirow}
\usepackage{paralist}
\usepackage{verbatim}
\usepackage{galois}
\usepackage{algorithm}
\usepackage{algorithmic}
\usepackage{boxedminipage}
\usepackage{booktabs}
\usepackage{accents}
\usepackage{stmaryrd}
\usepackage{subfig}
\usepackage{wrapfig}
\usepackage[bottom]{footmisc}
\usepackage{natbib}

\usepackage[colorlinks,linkcolor=magenta,citecolor=blue, pagebackref=true,backref=true]{hyperref}
\renewcommand*{\backrefalt}[4]{%
    \ifcase #1 \footnotesize{(Not cited.)}%
    \or        \footnotesize{(Cited on page~#2.)}%
    \else      \footnotesize{(Cited on pages~#2.)}%
    \fi}

\long\def\comment#1{}
\usepackage{nicefrac}

\textheight 8.5truein
\topmargin 0.25in
\headheight 0in
\headsep 0in
\textwidth 6.8truein
\oddsidemargin  0in
\evensidemargin 0in

\newtheorem{theorem}{Theorem}[section]

\newtheorem{lemma}[theorem]{Lemma}
\newtheorem{proposition}[theorem]{Proposition}

\newtheorem{definition}{Definition}[section]

\newtheorem{remark}[theorem]{Remark}

\numberwithin{equation}{section}

\newcommand{\st}{\textnormal{s.t.}}

\newcommand{\diag}{\textnormal{diag}}
\newcommand{\grad}{\textnormal{grad}\,}

\newcommand{\subdiff}{\textnormal{subdiff}\,}
\newcommand{\Diag}{\textnormal{Diag}\,}

\newcommand{\retr}{\textnormal{Retr}}

\newcommand{\sign}{\textnormal{sign}}

\newcommand{\argmin}{\mathop{\rm argmin}}
\newcommand{\argmax}{\mathop{\rm argmax}}

\newcommand{\GCal}{\mathcal{G}}

\newcommand{\SCal}{\mathcal{S}}
\newcommand{\XCal}{\mathcal{X}}
\newcommand{\YCal}{\mathcal{Y}}

\newcommand{\PScr}{\mathscr{P}}

\newcommand{\trac}{\textnormal{Trace}}
\newcommand{\prox}{\textnormal{prox}}
\newcommand{\proj}{\textnormal{proj}}

\newcommand{\dist}{\textnormal{dist}}
\newcommand{\St}{\textnormal{St}}
\newcommand{\Tg}{\textnormal{T}}

\newcommand{\br}{\mathbb{R}}

\newcommand{\ba}{\begin{array}}
\newcommand{\ea}{\end{array}}

\newcommand{\WCal}{\mathcal{W}}

\newcommand{\RCal}{\mathcal{R}}
\newcommand{\PCal}{\mathcal{P}}

\newcommand{\NCal}{\mathcal{N}}

\newcommand{\UCal}{\mathcal{U}}
\newcommand{\one}{\textbf{1}}
\newcommand{\zero}{\textbf{0}}
\newcommand{\bigO}{O}
\newcommand{\bigOtil}{\widetilde{O}}
\newcommand{\mydefn}{:=}

\newcommand{\sk}{Shakespeare\ }


\begin{document}

\begin{center}

{\bf{\LARGE{Projection Robust Wasserstein Distance and \\ [.2cm] Riemannian Optimization}}}

\vspace*{.2in}
{\large{
\begin{tabular}{ccccc}
Tianyi Lin$^{\star, \diamond}$ & Chenyou Fan$^{\star, \circ, \square}$ & Nhat Ho$^\ddagger$ & Marco Cuturi$^{\triangleleft, \triangleright}$ & Michael I. Jordan$^{\diamond, \dagger}$
\end{tabular}
}}

\vspace*{.2in}

\begin{tabular}{c}
Department of Electrical Engineering and Computer Sciences$^\diamond$ \\
Department of Statistics$^\dagger$ \\ 
University of California, Berkeley \\
The Chinese University of Hong Kong, Shenzhen$^\square$ \\
Department of Statistics and Data Sciences, University of Texas, Austin$^\ddagger$ \\ 
CREST - ENSAE$^\triangleleft$, Google Brain$^\triangleright$
\end{tabular}

\vspace*{.2in}

\today

\vspace*{.2in}

\begin{abstract}
Projection robust Wasserstein (PRW) distance, or Wasserstein projection pursuit (WPP), is a robust variant of the Wasserstein distance. Recent work suggests that this quantity is more robust than the standard Wasserstein distance, in particular when comparing probability measures in high-dimensions. However, it is ruled out for practical application because the optimization model is essentially non-convex and non-smooth which makes the computation intractable. Our contribution in this paper is to revisit the original motivation behind WPP/PRW, but take the hard route of showing that, despite its non-convexity and lack of nonsmoothness, and even despite some hardness results proved by~\citet{Niles-2019-Estimation} in a minimax sense, the original formulation for PRW/WPP \textit{can} be efficiently computed in practice using Riemannian optimization, yielding in relevant cases better behavior than its convex relaxation. More specifically, we provide three simple algorithms with solid theoretical guarantee on their complexity bound (one in the appendix), and demonstrate their effectiveness and efficiency by conducting extensive experiments on synthetic and real data. This paper provides a first step into a computational theory of the PRW distance and provides the links between optimal transport and Riemannian optimization. 
\end{abstract}
\let\thefootnote\relax\footnotetext{$^*$ Tianyi Lin and Chenyou Fan contributed equally to this work.}
\let\thefootnote\relax\footnotetext{$^\circ$ Chenyou Fan contributed during working at Google.}
\end{center}

\section{Introduction}
Optimal transport (OT) theory~\citep{Villani-2003-Topics, Villani-2008-Optimal} has become an important source of ideas and algorithmic tools in machine learning and related fields. Examples include contributions to generative modelling~\citep{Arjovsky-2017-Wasserstein, Salimans-2018-Improving, Tolstikhin-2018-Wasserstein, Genevay-2018-Learning}, domain adaptation~\citep{Courty-2017-Optimal}, clustering~\citep{Srivastava-2015-WASP, Ho-2017-Multilevel}, dictionary learning~\citep{Rolet-2016-Fast, Schmitz-2018-Wasserstein}, text mining~\citep{Lin-2019-Sparsemax}, neuroimaging~\citep{Janati-2020-Multi} and single-cell genomics~\citep{Schiebinger-2019-Optimal, Yang-2020-Predicting}. The Wasserstein geometry has also provided a simple and useful analytical tool to study latent mixture models~\citep{Ho-2016-Convergence}, reinforcement learning~\citep{Bellemare-2017-Distributional}, sampling~\citep{Cheng-2018-Underdamped, Dalalyan-2019-User, Mou-2019-High, Bernton-2018-Langevin} and stochastic optimization~\citep{Nagaraj-2019-Sgd}. For an overview of OT theory and the relevant applications, we refer to the recent survey~\citep{Peyre-2019-Computational}. 

\paragraph{Curse of Dimensionality in OT.} A significant barrier to the direct application of OT in machine learning lies in some inherent statistical limitations. It is well known that the sample complexity of approximating Wasserstein distances between densities using only samples can grow exponentially in dimension~\citep{Dudley-1969-Speed, Fournier-2015-Rate, Weed-2019-Sharp, Lei-2020-Convergence}. Practitioners have long been aware of this issue of the curse of dimensionality in applications of OT, and it can be argued that most of the efficient computational schemes that are known to improve computational complexity also carry out, implicitly through their simplifications, some form of statistical regularization. There have been many attempts to mitigate this curse when using OT, whether through entropic regularization~\citep{Cuturi-2013-Sinkhorn, Cuturi-2014-Barycenter, Genevay-2019-Sample, Mena-2019-Statistical}; other regularizations~\citep{Dessein-2018-Regularized, Blondel-2018-Smooth}; quantization~\citep{Canas-2012-Learning, Forrow-2019-Statistical}; simplification of the dual problem in the case of 1-Wasserstein distance~\citep{Shirdhonkar-2008-Approximate, Arjovsky-2017-Wasserstein} or by only using second-order moments of measures to fall back on the Bures-Wasserstein distance~\citep{Bhatia-2018-Bures, Muzellec-2018-Elliptical, Chen-2018-Optimal}. 

\paragraph{Subspace projections: PRW and WPP.} We focus in this paper on another important approach to regularize the Wasserstein distance: Project input measures onto lower-dimensional subspaces and compute the Wasserstein distance between these reductions, instead of the original measures. The simplest and most representative example of this approach is the sliced Wasserstein distance~\citep{Rabin-2011-Wasserstein, Bonneel-2015-Sliced, Kolouri-2019-Generalized, Nguyen-2020-Distributional}, which is defined as the average Wasserstein distance obtained between random 1D projections. In an important extension,~\citet{Paty-2019-Subspace} and~\citet{Niles-2019-Estimation} proposed very recently to look for the $k$-dimensional subspace ($k>1$) that would maximize the Wasserstein distance between two measures after projection.~\citep{Paty-2019-Subspace} called that quantity the \textit{projection robust Wasserstein} (PRW) distance, while~\citet{Niles-2019-Estimation} named it \textit{Wasserstein Projection Pursuit} (WPP). PRW/WPP are conceptually simple, easy to interpret, and do solve the curse of dimensionality in the so called spiked model as proved in~\citet[Theorem 1]{Niles-2019-Estimation} by recovering an optimal $1/\sqrt{n}$ rate. Very recently,~\citet{Lin-2021-Projection} further provided several fundamental statistical bounds for PRW as well as asymptotic guarantees for learning generative models with PRW. Despite this appeal,~\citep{Paty-2019-Subspace} quickly rule out PRW for practical applications because it is non-convex, and fall back on a convex relaxation, called the \emph{subspace robust Wasserstein} (SRW) distance, which is shown to work better empirically than the usual Wasserstein distance. Similarly,~\citet{Niles-2019-Estimation} seem to lose hope that it can be computed, by stating \textit{``it is unclear how to implement WPP efficiently,''} and after having proved positive results on sample complexity, conclude their paper on a negative note, showing hardness results which apply for WPP when the ground cost is the Euclidean metric (the 1-Wasserstein case). Our contribution in this paper is to revisit the original motivation behind WPP/PRW, but take the hard route of showing that, despite its non-convexity and lack of nonsmoothness, and even despite some hardness results proved in~\cite{Niles-2019-Estimation} in a minimax sense, the original formulation for PRW/WPP \textit{can} be efficiently computed in practice using Riemannian optimization, yielding in relevant cases better behavior than SRW. For simplicity, we refer from now on to PRW/WPP as PRW.

\paragraph{Contribution:} In this paper, we study the computation of the PRW distance between two discrete probability measures of size $n$. We show that the resulting optimization problem has a special structure, allowing it to be solved in an efficient manner using Riemannian optimization~\citep{Absil-2009-Optimization, Boumal-2019-Global, Kasai-2019-Riemannian, Chen-2020-Proximal}. Our contributions can be summarized as follows. 
\begin{enumerate}
\item We propose a max-min optimization model for computing the PRW distance. The maximization and minimization are performed over the Stiefel manifold and the transportation polytope, respectively. We prove the existence of the subdifferential (Lemma~\ref{lemma:bound-subdiff}), which allows us to properly define an \emph{$\epsilon$-approximate pair of optimal subspace projection and optimal transportation plan} (Definition~\ref{def:optimal-pair}) and carry out a finite-time analysis of the algorithm. 

\item We define an entropic regularized PRW distance between two finite discrete probability measures, and show that it is possible to efficiently optimize this distance over the transportation polytope using the Sinkhorn iteration. This poses the problem of performing the maximization over the Stiefel manifold, which is not solvable by existing optimal transport algorithms~\citep{Cuturi-2013-Sinkhorn, Altschuler-2017-Near, Dvurechensky-2018-Computational, Lin-2019-Efficient, Lin-2019-Acceleration, Guminov-2019-Accelerated}. To this end, we propose two new algorithms, which we refer to as \textit{Riemannian gradient ascent with Sinkhorn} (RGAS) and \textit{Riemannian adaptive gradient ascent with Sinkhorn} (RAGAS), for computing the entropic regularized PRW distance. These two algorithms are guaranteed to return an \emph{$\epsilon$-approximate pair of optimal subspace projection and optimal transportation plan} with a complexity bound of $\bigOtil(n^2d\|C\|_\infty^4\epsilon^{-4} + n^2\|C\|_\infty^8\epsilon^{-8} + n^2\|C\|_\infty^{12}\epsilon^{-12})$. To the best of our knowledge, our algorithms are the first provably efficient algorithms for the computation of the PRW distance. 

\item We provide comprehensive empirical studies to evaluate our algorithms on synthetic and real datasets. Experimental results confirm our conjecture that the PRW distance performs better than its convex relaxation counterpart, the SRW distance. Moreover, we show that the RGAS and RAGAS algorithms are faster than the Frank-Wolfe algorithm while the RAGAS algorithm is more robust than the RGAS algorithm. 
\end{enumerate}

\paragraph{Organization.} The remainder of the paper is organized as follows. In Section~\ref{sec:background}, we present the nonconvex max-min optimization model for computing the PRW distance and its entropic regularized version. We also briefly summarize various concepts of geometry and optimization over the Stiefel manifold. In Section~\ref{sec:algorithm}, we propose and analyze the RGAS and RAGAS algorithms for computing the entropic regularized PRW distance and prove that both algorithms achieve the finite-time guarantee under stationarity measure. In Section~\ref{sec:experiment}, we conduct extensive experiments on both synthetic and real datasets, demonstrating that the PRW distance provides a computational advantage over the SRW distance in real application problems. In the supplementary material, we provide further background materials on Riemannian optimization, experiments with the algorithms, and proofs for key results. For the sake of completeness, we derive a near-optimality condition (Definition~\ref{def:near-stationarity} and~\ref{def:near-optimal-pair}) for the max-min optimization model and propose another \emph{Riemannian SuperGradient Ascent with Network simplex iteration} (RSGAN) algorithm for computing the PRW distance without regularization and prove the finite-time convergence under the near-optimality condition.  

\paragraph{Notation.} We let $[n]$ be the set $\{1, 2, \ldots, n\}$ and $\br^n_+$ be the set of all vectors in $\br^n$ with nonnegative components. $\one_n$ and $\zero_n$ are the $n$-dimensional vectors of ones and zeros. $\Delta^n = \{u \in \br^n_+: \one_n^\top u = 1\}$ is the probability simplex. For a vector $x \in \br^n$, the Euclidean norm stands for $\|x\|$ and the Dirac delta function at $x$ stands for $\delta_x(\cdot)$. The notation $\Diag(x)$ denotes an $n \times n$ diagonal matrix with $x$ as the diagonal elements. For a matrix $X \in \br^{n \times n}$, the right and left marginals are denoted $r(X) = X\one_n$ and $c(X) = X^\top\one_n$, and $\|X\|_\infty = \max_{1 \leq i, j \leq n} |X_{ij}|$ and $\|X\|_1 = \sum_{1 \leq i, j \leq n} |X_{ij}|$. The notation $\diag(X)$ stands for an $n$-dimensional vector which corresponds to the diagonal elements of $X$. If $X$ is symmetric, $\lambda_{\max}(X)$ stands for largest eigenvalue. The notation $\St(d, k) \mydefn \{X \in \br^{d \times k}: X^\top X = I_k\}$ denotes the Stiefel manifold. For $X, Y \in \br^{n \times n}$, we denote $\langle X, Y\rangle = \trac(X^\top Y)$ as the Euclidean inner product and $\|X\|_F$ as the Frobenius norm of $X$. We let $P_\SCal$ be the orthogonal projection onto a closed set $\SCal$ and $\dist(X, \SCal) = \inf_{Y \in \SCal} \|X-Y\|_F$ denotes the distance between $X$ and $\SCal$. Lastly, $a = \bigO(b(n, d, \epsilon))$ stands for the upper bound $a \leq C \cdot b(n, d, \epsilon)$ where $C>0$ is independent of $n$ and $1/\epsilon$ and $a = \bigOtil(b(n, d, \epsilon))$ indicates the same inequality where $C$ depends on the logarithmic factors of $n$, $d$ and $1/\epsilon$.  

\section{Projection Robust Wasserstein Distance}\label{sec:background}
In this section, we present the basic setup and optimality conditions for the computation of the projection robust 2-Wasserstein (PRW) distance between two discrete probability measures with at most $n$ components. We also review basic ideas in Riemannian optimization. 

\subsection{Structured max-min optimization model}
In this section we define the PRW distance~\citep{Paty-2019-Subspace} and show that computing the PRW distance between two discrete probability measures supported on at most $n$ points reduces to solving a structured max-min optimization model over the Stiefel manifold and the transportation polytope. 

Let $\PScr(\br^d)$ be the set of Borel probability measures in $\br^d$ and let $\PScr_2(\br^d)$ be the subset of $\PScr(\br^d)$ consisting of probability measures that have finite second moments. Let $\mu, \nu \in \PScr_2(\br^d)$ and $\Pi(\mu, \nu)$ be the set of couplings between $\mu$ and $\nu$. The 2-Wasserstein distance~\citep{Villani-2008-Optimal} is defined by 
\begin{equation*}
\WCal_2(\mu, \nu) \ \mydefn \ \left(\inf_{\pi \in \Pi(\mu, \nu)} \int \|x-y\|^2 \ d\pi(x, y)\right)^{1/2}. 
\end{equation*}
To define the PRW distance, we require the notion of the push-forward of a measure by an operator. Letting $\XCal, \YCal \subseteq \br^d$ and $T: \XCal \rightarrow \YCal$, the push-forward of $\mu \in \PScr(\XCal)$ by $T$ is defined by $T_{\#} \mu \in \mathscr{P}(\YCal)$. In other words, $T_{\#} \mu$ is the measure satisfying $T_{\#} \mu(A) = \mu(T^{-1}(A))$ for any Borel set in $\YCal$.
\begin{definition}
For $\mu, \nu \in \PScr_2(\br^d)$, let $\GCal_k = \{E \subseteq \br^d \mid \dim(E) = k\}$ be the Grassmannian of $k$-dimensional subspace of $\br^d$ and let $P_E$ be the orthogonal projector onto $E$ for all $E \in \GCal_k$.  The $k$-dimensional PRW distance is defined as $\PCal_k(\mu, \nu) \mydefn \sup_{E \in \GCal_k} \WCal_2(P_{E\#}\mu, P_{E\#}\nu)$. 
\end{definition}
\citet[Proposition~5]{Paty-2019-Subspace} have shown that there exists a subspace $E^* \in \GCal_k$ such that $\PCal_k(\mu, \nu) = \WCal_2(P_{E^*\#}\mu, P_{E^*\#}\nu)$ for any $k \in [d]$ and $\mu, \nu \in \PScr_2(\br^d)$. For any $E \in \GCal_k$, the mapping $\pi \mapsto \int \|P_E(x-y)\|^2 \ d\pi(x, y)$ is lower semi-continuous. This together with the compactness of $\Pi(\mu, \nu)$ implies that the infimum is a minimum. Therefore, we obtain a structured max-min optimization problem:
\begin{equation*}
\PCal_k(\mu, \nu) = \max_{E \in \GCal_k} \min_{\pi \in \Pi(\mu, \nu)} \left(\int \|P_E(x-y)\|^2 \ d\pi(x, y)\right)^{1/2}. 
\end{equation*}
Let us now consider this general problem in the case of discrete probability measures, which is the focus of the current paper. Let $\{x_1, x_2, \ldots, x_n\} \subseteq \br^d$ and $\{y_1, y_2, \ldots, y_n\} \subseteq \br^d$ denote sets of $n$ atoms, and let $(r_1, r_2, \ldots, r_n) \in \Delta^n$ and $(c_1, c_2, \ldots, c_n) \in \Delta^n$ denote weight vectors.  We define discrete probability measures $\mu \mydefn \sum_{i=1}^n r_i\delta_{x_i}$ and $\nu \mydefn \sum_{j=1}^n c_j\delta_{y_j}$. In this setting, the computation of the $k$-dimensional PRW distance between $\mu$ and $\nu$ reduces to solving a structured max-min optimization model where the maximization and minimization are performed over the Stiefel manifold $\St(d, k) : = \{U \in \br^{d \times k} \mid U^\top U = I_k\}$ and the transportation polytope $\Pi(\mu, \nu) : = \{\pi \in \br_+^{n \times n} \mid r(\pi) = r, \ c(\pi) = c\}$ respectively. Formally, we have
\begin{equation}\label{prob:main}
\max\limits_{U \in \br^{d \times k}} \min\limits_{\pi \in \br_+^{n \times n}} \sum_{i=1}^n \sum_{j=1}^n \pi_{i, j}\|U^\top x_i - U^\top y_j\|^2 \quad \st \ U^\top U = I_k, \ r(\pi) = r, \ c(\pi) = c.
\end{equation}
The computation of this PRW distance raises numerous challenges. Indeed, there is no guarantee for finding a global Nash equilibrium as the special case of nonconvex optimization is already NP-hard~\citep{Murty-1987-Some}; moreover, Sion's minimax theorem~\citep{Sion-1958-General} is not applicable here due to the lack of quasi-convex-concave structure. More practically, solving Eq.~\eqref{prob:main} is expensive since (i) preserving the orthogonality constraint requires the singular value decompositions (SVDs) of a $d \times d$ matrix, and (ii) projecting onto the transportation polytope results in a costly quadratic network flow problem. To avoid this, ~\citep{Paty-2019-Subspace} proposed a convex surrogate for Eq.~\eqref{prob:main}: 
\begin{equation}\label{prob:main-CCP}
\max\limits_{0 \preceq \Omega \preceq I_d} \min\limits_{\pi \in \br_+^{n \times n}} \sum_{i=1}^n \sum_{j=1}^n \pi_{i, j}(x_i - y_j)^\top\Omega(x_i - y_j), \quad \st \ \trac(\Omega)=k, \ r(\pi) = r, \ c(\pi) = c.
\end{equation}
Eq.~\eqref{prob:main-CCP} is intrinsically a bilinear minimax optimization model which makes the computation tractable. Indeed, the constraint set $\RCal = \{\Omega \in \br^{d \times d} \mid 0 \preceq \Omega \preceq I_d, \trac(\Omega)=k\}$ is convex and the objective function is bilinear since it can be rewritten as $\langle \Omega, \sum_{i=1}^n \sum_{j=1}^n \pi_{i, j} (x_i - y_j)(x_i - y_j)^\top\rangle$.  Eq.~\eqref{prob:main-CCP} is, however, only a convex relaxation of Eq.~\eqref{prob:main} and its solutions are not necessarily good approximate solutions for the original problem. Moreover, the existing algorithms for solving Eq.~\eqref{prob:main-CCP} are also unsatisfactory---in each loop, we need to solve a OT or entropic regularized OT exactly and project a $d \times d$ matrix onto the set $\RCal$ using the SVD decomposition, both of which are computationally expensive as $d$ increases (see Algorithm~1 and~2 in \cite{Paty-2019-Subspace}).
\begin{algorithm}[!t]
\caption{Riemannian Gradient Ascent with Sinkhorn Iteration (RGAS)}\label{alg:grad-sinkhorn}
\begin{algorithmic}[1]
\STATE \textbf{Input:} $\{(x_i, r_i)\}_{i \in [n]}$ and $\{(y_j, c_j)\}_{j \in [n]}$, $k = \bigOtil(1)$, $U_0 \in \St(d, k)$ and $\epsilon$.  
\STATE \textbf{Initialize:} $\widehat{\epsilon} \leftarrow \frac{\epsilon}{10\|C\|_\infty}$, $\eta \leftarrow \frac{\epsilon\min\{1, 1/\bar{\theta}\}}{40\log(n)}$ and $\gamma \leftarrow \frac{1}{(8 L_1^2 + 16L_2)\|C\|_\infty + 16 \eta^{-1}L_1^2\|C\|_\infty^2}$. 
\FOR{$t = 0, 1, 2, \ldots$}
\STATE Compute $\pi_{t+1} \leftarrow \textsc{regOT}(\{(x_i, r_i)\}_{i \in [n]}, \{(y_j, c_j)\}_{j \in [n]}, U_t, \eta, \widehat{\epsilon})$.
\STATE Compute $\xi_{t+1} \leftarrow P_{\Tg_{U_t}\St}(2V_{\pi_{t+1}}U_t)$. 
\STATE Compute $U_{t+1} \leftarrow \retr_{U_t}(\gamma\xi_{t+1})$.  
\ENDFOR
\end{algorithmic}
\end{algorithm}

\subsection{Entropic regularized projection robust Wasserstein}
Eq.~\eqref{prob:main} has structure that can be exploited. Indeed, fixing a $U \in \St(d, k)$, the problem reduces to minimizing a linear function over the transportation polytope, i.e., the OT problem. Therefore, we can reformulate Eq.~\eqref{prob:main} as the maximization of the function $f(U) \mydefn \min_{\pi \in \Pi(\mu, \nu)} \sum_{i=1}^n \sum_{j=1}^n \pi_{i, j} \|U^\top x_i - U^\top y_j\|^2$ over the Stiefel manifold $\St(d, k)$. 

Since the OT problem admits multiple optimal solutions, $f$ is not differentiable which makes the optimization over the Stiefel manifold hard~\citep{Absil-2019-Collection}. Computations are greatly facilitated by adding smoothness, which allows the use of gradient-type and adaptive gradient-type algorithms. This inspires us to consider an entropic regularized version of Eq.~\eqref{prob:main}, where an entropy penalty is added to the PRW distance. The resulting optimization model is as follows: 
\begin{equation}\label{prob:main-regularized}
\max\limits_{U \in \br^{d \times k}} \min\limits_{\pi \in \br_+^{n \times n}} \sum_{i=1}^n \sum_{j=1}^n \pi_{i, j}\|U^\top x_i - U^\top y_j\|^2 - \eta H(\pi) \quad \st \ U^\top U = I_k, \ r(\pi) = r, \ c(\pi) = c,
\end{equation}
where $\eta > 0$ is the regularization parameter and $H(\pi) \mydefn - \langle \pi, \log(\pi) - \one_n\one_n^\top\rangle$ denotes the entropic regularization term. We refer to Eq.~\eqref{prob:main-regularized} as the computation of \emph{entropic regularized PRW distance}. Accordingly, we define the function $f_\eta = \min_{\pi \in \Pi(\mu, \nu)} \{\sum_{i=1}^n \sum_{j=1}^n \pi_{i, j} \|U^\top x_i - U^\top y_j\|^2 - \eta H(\pi)\}$ and reformulate Eq.~\eqref{prob:main-regularized} as the maximization of the differentiable function $f_\eta$ over the Stiefel manifold $\St(d, k)$. Indeed, for any $U \in \St(d, k)$ and a fixed $\eta > 0$, there exists a unique solution $\pi^* \in \Pi(\mu, \nu)$ such that $\pi \mapsto \sum_{i=1}^n \sum_{j=1}^n \pi_{i, j}\|U^\top x_i - U^\top y_j\|^2 - \eta H(\pi)$ is minimized at $\pi^*$. When $\eta$ is large, the optimal value of Eq.~\eqref{prob:main-regularized} may yield a poor approximation of Eq.~\eqref{prob:main}. To guarantee a good approximation, we scale the regularization parameter $\eta$ as a function of the desired accuracy of the approximation. Formally, we consider the following relaxed optimality condition for $\widehat{\pi} \in \Pi(\mu, \nu)$ given $U \in \St(d, k)$. 
\begin{definition}\label{def:approx_transportation_plan}
The transportation plan $\widehat{\pi} \in \Pi(\mu, \nu)$ is called an \emph{$\epsilon$-approximate optimal transportation plan for a given $U \in \St(d, k)$} if the following inequality holds: 
\begin{equation*}
\sum_{i=1}^n \sum_{j=1}^n \widehat{\pi}_{i, j} \|U^\top x_i - U^\top y_j\|^2 \ \leq \ \min\limits_{\pi \in \Pi(\mu, \nu)} \sum_{i=1}^n \sum_{j=1}^n \pi_{i, j} \|U^\top x_i - U^\top y_j\|^2 + \epsilon. 
\end{equation*}
\end{definition}

\subsection{Optimality condition}
Recall that the computation of the PRW distance in Eq.~\eqref{prob:main} and the entropic regularized PRW distance in Eq.~\eqref{prob:main-regularized} are equivalent to
\begin{equation}\label{prob:Stiefel-nonsmooth}
\max\limits_{U \in \St(d, k)} \ \left\{f(U) \mydefn \min\limits_{\pi \in \Pi(\mu, \nu)} \sum_{i=1}^n \sum_{j=1}^n \pi_{i, j} \|U^\top x_i - U^\top y_j\|^2\right\},  
\end{equation}
and
\begin{equation}\label{prob:Stiefel-smooth}
\max\limits_{U \in \St(d, k)} \ \left\{f_\eta(U) \mydefn \min\limits_{\pi \in \Pi(\mu, \nu)} \sum_{i=1}^n \sum_{j=1}^n \pi_{i, j} \|U^\top x_i - U^\top y_j\|^2 - \eta H(\pi)\right\}.   
\end{equation}
Since $\St(d, k)$ is a compact matrix submanifold of $\br^{d \times k}$~\citep{Boothby-1986-Introduction}, Eq.~\eqref{prob:Stiefel-nonsmooth} and Eq.~\eqref{prob:Stiefel-smooth} are both special instances of the Stiefel manifold optimization problem. The dimension of $\St(d, k)$ is equal to $dk - k(k+1)/2$ and the tangent space at the point $Z \in \St(d, k)$ is defined by $\Tg_Z\St \mydefn \{\xi \in \br^{d \times k}: \xi^\top Z + Z^\top\xi = 0\}$. We endow $\St(d, k)$ with Riemannian metric inherited from the Euclidean inner product $\langle X, Y\rangle$ for any $X, Y \in \Tg_Z\St$ and $Z \in \St(d, k)$. Then the projection of $G \in \br^{d \times k}$ onto $\Tg_Z\St$ is given by~\citet[Example~3.6.2]{Absil-2009-Optimization}: $P_{\Tg_Z\St}(G) = G - Z(G^\top Z + Z^\top G)/2$. We make use of the notion of a \textit{retraction}, which is the first-order approximation of an exponential mapping on the manifold and which is amenable to computation~\citep[Definition~4.1.1]{Absil-2009-Optimization}. For the Stiefel manifold, we have the following definition: 
\begin{definition}\label{def:retraction}
A retraction on $\St \equiv \St(d, k)$ is a smooth mapping $\retr: \Tg\St \rightarrow \St$ from the tangent bundle $\Tg\St$ onto $\St$ such that the restriction of $\retr$ onto $\Tg_Z\St$, denoted by $\retr_Z$, satisfies that (i) $\retr_Z(0) = Z$ for all $Z \in \St$ where $0$ denotes the zero element of $\textnormal{T}\St$, and (ii) for any $Z \in \St$, it holds that $\lim_{\xi \in \Tg_Z\St, \xi \rightarrow 0} \|\retr_Z(\xi) - (Z + \xi)\|_F/\|\xi\|_F = 0$.
\end{definition}
The retraction on the Stiefel manifold has the following well-known properties~\citep{Boumal-2019-Global, Liu-2019-Quadratic} which are important to subsequent analysis in this paper.  
\begin{proposition}\label{prop:retraction}
For all $Z \in \St \equiv \St(d, k)$ and $\xi \in \Tg_Z\St$, there exist constants $L_1 > 0$ and $L_2 > 0$ such that the following two inequalities hold: 
\begin{eqnarray*}
\|\retr_Z(\xi) - Z\|_F & \leq & L_1\|\xi\|_F, \\
\|\retr_Z(\xi) - (Z + \xi)\|_F & \leq & L_2\|\xi\|_F^2. 
\end{eqnarray*}
\end{proposition}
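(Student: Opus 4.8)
The plan is to lean on two structural facts. First, $\St(d,k)$ is a \emph{compact} embedded submanifold of $\br^{d\times k}$, so $\|Z\|_F=\sqrt{k}$ for every $Z\in\St(d,k)$ and $\diam(\St(d,k))\le 2\sqrt{k}$. Second, by Definition~\ref{def:retraction} the assignment $(Z,\xi)\mapsto\retr_Z(\xi)$ is smooth on the whole tangent bundle $\Tg\St$, with $\retr_Z(0)=Z$ and --- this is exactly what the rigidity clause (ii) encodes --- $D_\xi\retr_Z(0)=\mathrm{id}_{\Tg_Z\St}$ (cf.~\citet[Prop.~4.1.2]{Absil-2009-Optimization}). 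I would prove the quadratic (second) inequality first and then read off the linear (first) one from it.

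For the second inequality, introduce the smooth ambient-space error map $E(Z,\xi):=\retr_Z(\xi)-Z-\xi$ on $\Tg\St$; the two facts give $E(Z,0)=0$ and $D_\xi E(Z,0)=0$ for every $Z$. Restrict to the unit disk bundle $\mathcal{D}:=\{(Z,\xi): Z\in\St(d,k),\ \|\xi\|_F\le 1\}$, which is compact since $\St(d,k)$ is. Because $\retr$ is $C^2$ and $\mathcal{D}$ is compact, a second-order Taylor expansion of $\xi\mapsto\retr_Z(\xi)$ about $\xi=0$ --- with the remainder controlled by $\sup_{\mathcal{D}}$ of the second $\xi$-derivative of $\retr$ --- produces a constant $c>0$ with $\|E(Z,\xi)\|_F\le c\|\xi\|_F^2$ for all $Z$ and all $\|\xi\|_F\le 1$. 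For $\|\xi\|_F>1$, I would bound crudely: $\retr_Z(\xi)\in\St(d,k)$ and $Z\in\St(d,k)$, so $\|E(Z,\xi)\|_F\le\|\retr_Z(\xi)-Z\|_F+\|\xi\|_F\le 2\sqrt{k}+\|\xi\|_F\le(2\sqrt{k}+1)\|\xi\|_F^2$, where the last step uses $1<\|\xi\|_F\le\|\xi\|_F^2$. Taking $L_2:=\max\{c,\,2\sqrt{k}+1\}$ settles the second inequality.

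For the first inequality I would split the same way. When $\|\xi\|_F\le 1$, the triangle inequality and the bound just proved give $\|\retr_Z(\xi)-Z\|_F\le\|E(Z,\xi)\|_F+\|\xi\|_F\le(c+1)\|\xi\|_F$, using $\|\xi\|_F^2\le\|\xi\|_F$. When $\|\xi\|_F>1$, compactness again gives $\|\retr_Z(\xi)-Z\|_F\le 2\sqrt{k}\le 2\sqrt{k}\,\|\xi\|_F$. Hence $L_1:=\max\{c+1,\,2\sqrt{k}\}$ works.

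The only step that is more than bookkeeping is obtaining the quadratic remainder bound \emph{uniformly in} $Z$: a per-base-point Taylor expansion alone only gives a constant depending on $Z$, and it is precisely the compactness of $\St(d,k)$ --- hence of the unit disk bundle $\mathcal{D}$ --- that collapses this into a single constant. One also has to extract $D_\xi\retr_Z(0)=\mathrm{id}$ from clause (ii) of Definition~\ref{def:retraction}, which is standard. Everything else is the two elementary case splits above, and in fact the whole statement is recorded as ``well known'' in~\citet{Boumal-2019-Global} and~\citet{Liu-2019-Quadratic}, either of which could be cited in lieu of the argument.
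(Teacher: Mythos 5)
Your argument is correct, and it is the standard compactness proof of these bounds. Worth noting: the paper itself offers \emph{no} proof of Proposition~\ref{prop:retraction} --- it simply cites \citet{Boumal-2019-Global} and \citet{Liu-2019-Quadratic} as establishing these ``well-known properties'' --- so there is no in-paper argument to compare against. Your remark at the end already acknowledges this. The key steps you isolate are exactly right: smoothness of $\retr$ on $\Tg\St$ plus the two clauses of Definition~\ref{def:retraction} give $E(Z,0)=0$ and $D_\xi E(Z,0)=0$, a second-order Taylor remainder bounds $\|E\|_F$ by $\|\xi\|_F^2$ times a constant that is uniform in $Z$ precisely because the unit disk bundle over the compact $\St(d,k)$ is compact, and the $\|\xi\|_F>1$ regime is handled by the crude diameter bound $\|A-B\|_F\le 2\sqrt{k}$ for $A,B\in\St(d,k)$. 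The reduction of the linear estimate to the quadratic one via the case split on $\|\xi\|_F\lessgtr 1$ is also fine. One could add, for completeness, that the paper's Definition~\ref{def:retraction} takes the retraction to be globally defined on the whole tangent bundle (as is indeed the case for the polar, QR, Cayley, and exponential retractions on the Stiefel manifold), which is what licenses your diameter bound for large $\|\xi\|_F$; some authors define retractions only on a neighborhood of the zero section, and in that case the proposition would need a restriction on $\|\xi\|_F$ or on the step sizes used, but that caveat does not arise here.
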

For the sake of completeness, we provide four popular restrictions~\citep{Edelman-1998-Geometry, Wen-2013-Feasible, Liu-2019-Quadratic, Chen-2020-Proximal} on the Stiefel manifold in practice. Determining which one is the most efficient in the algorithm is still an open question; see the discussion after~\citet[Theorem~3]{Liu-2019-Quadratic} or before~\citet[Fact~3.6]{Chen-2020-Proximal}. 
\begin{itemize}
\item \textbf{Exponential mapping.} It takes $8dk^2 + \bigO(k^3)$ flops and has the closed-form expression:  
\begin{equation*}
\retr_Z^{\exp}(\xi) \ = \ \begin{bmatrix} Z & Q\end{bmatrix}\exp\left(\begin{bmatrix} -Z^\top\xi & -R^\top \\ R & 0 \end{bmatrix}\right)\begin{bmatrix} I_k \\ 0 \end{bmatrix}.
\end{equation*}
where $QR = -(I_k - ZZ^\top)\xi$ is the unique QR factorization. 
\item \textbf{Polar decomposition.} It takes $3dk^2 + \bigO(k^3)$ flops and has the closed-form expression: 
\begin{equation*}
\retr_Z^{\textnormal{polar}}(\xi) \ = \ (Z + \xi)(I_k + \xi^\top\xi)^{-1/2}.
\end{equation*} 
\item \textbf{QR decomposition.} It takes $2dk^2 + \bigO(k^3)$ flops and has the closed-form expression: 
\begin{equation*}
\retr_Z^{\textnormal{qr}}(\xi) \ = \ \textnormal{qr}(Z + \xi), 
\end{equation*} 
where $\textnormal{qr}(A)$ is the Q factor of the QR factorization of $A$.
\item \textbf{Cayley transformation.} It takes $7dk^2 + \bigO(k^3)$ flops and has the closed-form expression:
\begin{equation*}
\retr_Z^{\textnormal{cayley}}(\xi) \ = \ \left(I_n - \frac{1}{2}W(\xi) \right)^{-1}\left(I_n + \frac{1}{2}W(\xi) \right)Z,
\end{equation*}
where $W(\xi) = (I_n - ZZ^\top/2)\xi Z^\top - Z\xi^\top(I_n - ZZ^\top/2)$.
\end{itemize}
We now present a novel approach to exploiting the structure of $f$. We begin with several definitions.
\begin{algorithm}[!t]
\caption{Riemannian Adaptive Gradient Ascent with Sinkhorn Iteration (RAGAS)}\label{alg:adagrad-sinkhorn}
\begin{algorithmic}[1]
\STATE \textbf{Input:} $\{(x_i, r_i)\}_{i \in [n]}$ and $\{(y_j, c_j)\}_{j \in [n]}$, $k = \bigOtil(1)$, $U_0 \in \St(d, k)$, $\epsilon$ and $\alpha \in (0, 1)$. 
\STATE \textbf{Initialize:} $p_0 = \zero_d$, $q_0 = \zero_k$, $\widehat{p_0} = \alpha\|C\|_\infty^2\one_d$, $\widehat{q_0} = \alpha\|C\|_\infty^2\one_k$, $\widehat{\epsilon} \leftarrow \frac{\epsilon\sqrt{\alpha}}{20\|C\|_\infty}$, $\eta \leftarrow \frac{\epsilon\min\{1, 1/\bar{\theta}\}}{40\log(n)}$ and $\gamma \leftarrow \frac{\alpha}{16L_1^2 + 32L_2 + 32\eta^{-1}L_1^2\|C\|_\infty}$. 
\FOR{$t = 0, 1, 2, \ldots$}
\STATE Compute $\pi_{t+1} \leftarrow \textsc{regOT}(\{(x_i, r_i)\}_{i \in [n]}, \{(y_j, c_j)\}_{j \in [n]}, U_t, \eta, \widehat{\epsilon})$.
\STATE Compute $G_{t+1} \leftarrow P_{\Tg_{U_t}\St}(2V_{\pi_{t+1}}U_t)$. 
\STATE Update $p_{t+1} \leftarrow \beta p_t + (1-\beta)\diag(G_{t+1}G_{t+1}^\top)/k$ and $\widehat{p}_{t+1} \leftarrow \max\{\widehat{p}_t, p_{t+1}\}$.
\STATE Update $q_{t+1} \leftarrow \beta q_t + (1-\beta)\diag(G_{t+1}^\top G_{t+1})/d$ and $\widehat{q}_{t+1} \leftarrow \max\{\widehat{q}_t, q_{t+1}\}$. 
\STATE Compute $\xi_{t+1} \leftarrow P_{\Tg_{U_t}\St}(\Diag(\widehat{p}_{t+1})^{-1/4}G_{t+1}\Diag(\widehat{q}_{t+1})^{-1/4})$. 
\STATE Compute $U_{t+1} \leftarrow \retr_{U_t}(\gamma\xi_{t+1})$.  
\ENDFOR
\end{algorithmic}
\end{algorithm}
\begin{definition}\label{def:C}
The \emph{coefficient matrix} between $\mu = \sum_{i=1}^n r_i\delta_{x_i}$ and $\nu = \sum_{j=1}^n c_j\delta_{y_j}$ is defined by $C = (C_{ij})_{1 \leq i, j \leq n} \in \br^{n \times n}$ with each entry $C_{ij} = \|x_i - y_j\|^2$. 
\end{definition}
\begin{definition}\label{def:V}
The \emph{correlation matrix} between $\mu = \sum_{i=1}^n r_i\delta_{x_i}$ and $\nu = \sum_{j=1}^n c_j\delta_{y_j}$ is defined by $V_\pi = \sum_{i=1}^n \sum_{j=1}^n \pi_{i, j} (x_i - y_j)(x_i - y_j)^\top \in \br^{d \times d}$. 
\end{definition}
The first lemma shows that the structure of the function $f$ is not very bad regardless of nonconvexity and the lack of smoothness. 
\begin{lemma}\label{lemma:obj-weak-concave}
The function $f$ is $2\|C\|_\infty$-weakly concave. 
\end{lemma}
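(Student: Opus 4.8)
The plan is to reduce the statement to an elementary curvature bound. First I would observe that, for any fixed coupling $\pi\in\Pi(\mu,\nu)$, the inner objective is a quadratic function of $U$: since $\|U^\top x_i-U^\top y_j\|^2=(x_i-y_j)^\top UU^\top(x_i-y_j)$, we have $\sum_{i,j}\pi_{i,j}\|U^\top x_i-U^\top y_j\|^2=\langle UU^\top,V_\pi\rangle=\tr(U^\top V_\pi U)=:h_\pi(U)$, where $V_\pi$ is the correlation matrix of Definition~\ref{def:V}. Thus $f(U)=\min_{\pi\in\Pi(\mu,\nu)}h_\pi(U)$ is a pointwise minimum, over the transportation polytope, of the quadratics $h_\pi$, and it is a finite-valued function on all of $\br^{d\times k}$; weak concavity will be understood with respect to the ambient Euclidean structure, so no Riemannian considerations enter at this stage.

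Next I would analyze a single $h_\pi$. Expanding, $h_\pi(U+\xi)=h_\pi(U)+2\langle V_\pi U,\xi\rangle+\tr(\xi^\top V_\pi\xi)$, so the entire curvature of $h_\pi$ is controlled by the spectrum of $V_\pi$. Because $V_\pi=\sum_{i,j}\pi_{i,j}(x_i-y_j)(x_i-y_j)^\top$ is a nonnegative combination of rank-one positive semidefinite matrices, $V_\pi\succeq 0$, so $h_\pi$ is convex. For the matching upper bound I would compute $\lambda_{\max}(V_\pi)=\max_{\|v\|=1}\sum_{i,j}\pi_{i,j}(v^\top(x_i-y_j))^2\le\sum_{i,j}\pi_{i,j}\|x_i-y_j\|^2=\sum_{i,j}\pi_{i,j}C_{ij}\le\|C\|_\infty\sum_{i,j}\pi_{i,j}=\|C\|_\infty$, where the last equality uses $\sum_{i,j}\pi_{i,j}=\one_n^\top r=1$ since $\pi\in\Pi(\mu,\nu)$ and $r\in\Delta^n$. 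Hence $0\le\tr(\xi^\top V_\pi\xi)\le\|C\|_\infty\|\xi\|_F^2$, which is exactly the statement that $h_\pi(\cdot)-\|C\|_\infty\|\cdot\|_F^2$ is concave; equivalently, each $h_\pi$ is $2\|C\|_\infty$-weakly concave.

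Finally I would invoke the fact that a pointwise infimum of concave functions is concave: $f(U)-\|C\|_\infty\|U\|_F^2=\min_{\pi\in\Pi(\mu,\nu)}\bigl(h_\pi(U)-\|C\|_\infty\|U\|_F^2\bigr)$ is concave, which is precisely the assertion that $f$ is $2\|C\|_\infty$-weakly concave. There is no genuinely hard step here; the only points requiring care are (i) making the quadratic-in-$U$ reparametrization through $V_\pi$ explicit, and (ii) obtaining the \emph{uniform} spectral bound $\lambda_{\max}(V_\pi)\le\|C\|_\infty$ over all couplings, which is what pins the weak-concavity constant to $2\|C\|_\infty$ rather than to a quantity depending on the magnitudes of the atoms. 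Once these are in place, closure under the pointwise minimum is immediate from the chosen characterization of weak concavity.
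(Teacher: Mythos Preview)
Your proposal is correct and follows essentially the same route as the paper: rewrite the inner objective as $\tr(U^\top V_\pi U)$, bound the curvature of this quadratic uniformly in $\pi$ by controlling $V_\pi$, and then pass to the pointwise minimum. The only cosmetic differences are that you bound $\lambda_{\max}(V_\pi)\le\|C\|_\infty$ directly (the paper instead bounds $\|V_\pi\|_F\le\|C\|_\infty$, which implies your estimate), and you close with the elementary fact that an infimum of concave functions is concave, whereas the paper phrases this step as an appeal to Danskin's theorem; your justification is the more natural one here.
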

\begin{proof}
By~\citet[Proposition~4.3]{Vial-1983-Strong}, it suffices to show that the function $f(U) - \|C\|_\infty\|U\|_F^2$ is concave for any $U \in \br^{d \times k}$. By the definition of $f$, we have
\begin{equation*}
f(U) \ = \ \min\limits_{\pi \in \Pi(\mu, \nu)} \trac\left(U^\top V_\pi U\right). 
\end{equation*}
Since $\{x_1, x_2, \ldots, x_n\} \subseteq \br^d$ and $\{y_1, y_2, \ldots, y_n\} \subseteq \br^d$ are two given groups of $n$ atoms in $\br^d$, the coefficient matrix $C$ is independent of $U$ and $\pi$. Furthermore, $\sum_{i=1}^n \sum_{j=1}^n \pi_{i, j} = 1$ and $\pi_{i, j} \geq 0$ for all $i, j \in [n]$ since $\pi \in \Pi(\mu, \nu)$. Putting these pieces together with Jensen's inequality, we have
\begin{equation*}
\|V_\pi\|_F \ \leq \ \sum_{i=1}^n \sum_{j=1}^n \pi_{i, j} \|(x_i - y_j)(x_i - y_j)^\top\|_F \ \leq \ \max_{1 \leq i, j \leq n} \|x_i - y_j\|^2 \ = \ \|C\|_\infty. 
\end{equation*}
This implies that $U \mapsto \trac(U^\top V_\pi U) - \|C\|_\infty\|U\|_F^2$ is concave for any $\pi \in \Pi(\mu, \nu)$. Since $\Pi(\mu, \nu)$ is compact, Danskin's theorem~\citep{Rockafellar-2015-Convex} implies the desired result. 
\end{proof}
The second lemma shows that the subdifferential of the function $f$ is independent of $U$ and bounded by a constant $2\|C\|_\infty$. 
\begin{lemma}\label{lemma:bound-subdiff}
Each element of the subdifferential $\partial f(U)$ is bounded by $2\|C\|_\infty$ for all $U \in \St(d, k)$. 
\end{lemma}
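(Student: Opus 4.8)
The plan is to reuse the structure exploited in the proof of Lemma~\ref{lemma:obj-weak-concave}: write $f$ as a pointwise minimum of smooth functions, apply Danskin's theorem to obtain an explicit description of $\partial f(U)$, and then bound the generators of that description in Frobenius norm.

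Concretely, I would recall that $f(U) = \min_{\pi \in \Pi(\mu, \nu)} g_\pi(U)$ with $g_\pi(U) \mydefn \trac(U^\top V_\pi U)$. Since each $V_\pi$ is symmetric, $g_\pi$ is continuously differentiable with $\nabla g_\pi(U) = 2 V_\pi U$, the map $(\pi, U) \mapsto 2 V_\pi U$ is jointly continuous, and $\Pi(\mu, \nu)$ is compact. Danskin's theorem (applied to $-f = \max_\pi(-g_\pi)$, exactly as in the previous lemma) then yields that $f$ is directionally differentiable and
\[
\partial f(U) \ \subseteq \ \conv\bigl\{\, 2 V_\pi U \ : \ \pi \in \Pi^*(U) \,\bigr\}, \qquad \Pi^*(U) \mydefn \argmin_{\pi \in \Pi(\mu, \nu)} g_\pi(U).
\]
Because $f$ is $2\|C\|_\infty$-weakly concave by Lemma~\ref{lemma:obj-weak-concave}, the Fréchet, limiting and Clarke subdifferentials of $f$ all agree and coincide with the convex hull above, so the precise subdifferential convention is immaterial.

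It then remains to bound a single generator. For $U \in \St(d, k)$ we have $U^\top U = I_k$, hence the operator norm $\|U\|_{\mathrm{op}} = 1$; combining the submultiplicativity estimate $\|AB\|_F \le \|A\|_F\|B\|_{\mathrm{op}}$ with the bound $\|V_\pi\|_F \le \|C\|_\infty$ already derived inside the proof of Lemma~\ref{lemma:obj-weak-concave}, we get $\|2 V_\pi U\|_F \le 2\|V_\pi\|_F \le 2\|C\|_\infty$ for every $\pi \in \Pi(\mu, \nu)$. Since the Frobenius-norm ball of radius $2\|C\|_\infty$ is convex, this bound is inherited by every convex combination, hence by every element of $\partial f(U)$, and it holds uniformly over $U \in \St(d, k)$.

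The only genuinely delicate point is the first step: invoking Danskin's theorem in the correct form for a minimum of smooth (here even convex) functions and checking that the resulting set matches the notion of $\partial f$ used elsewhere in the paper. Weak concavity (Lemma~\ref{lemma:obj-weak-concave}) resolves this, since it forces all reasonable subdifferentials to agree; the remaining ingredients — the norm inequality and convexity of the ball — are routine.
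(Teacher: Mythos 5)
Your proof is correct and follows essentially the same route as the paper: identify elements of $\partial f(U)$ as convex combinations of $2V_\pi U$ over $\pi$ in the argmin set, bound $\|V_\pi U\|_F \le \|V_\pi\|_F \le \|C\|_\infty$ using $U \in \St(d,k)$ and the bound on $\|V_\pi\|_F$ already established in Lemma~\ref{lemma:obj-weak-concave}, and conclude by convexity of the norm ball. You are more explicit than the paper about the Danskin-type characterization of $\partial f$ (the paper folds this into ``by the definition of the subdifferential'') and you use the submultiplicativity inequality $\|AB\|_F \le \|A\|_F\|B\|_{\mathrm{op}}$ where the paper appeals to the symmetric PSD structure of $V_\pi$; both are valid and give the same bound.
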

\begin{proof}
By the definition of the subdifferential $\partial f$, it suffices to show that $\|V_\pi U \|_F \leq \|C\|_\infty$ for all $\pi \in \Pi(\mu, \nu)$ and $U \in \St(d, k)$. Indeed, by the definition, $V_\pi$ is symmetric and positive semi-definite. Therefore, we have
\begin{equation*}
\max_{U \in \St(d, k)} \|V_\pi U \|_F \ \leq \ \|V_\pi\|_F \ \leq \ \|C\|_\infty. 
\end{equation*}
Putting these pieces together yields the desired result. 
\end{proof}
\begin{remark}
Lemma~\ref{lemma:obj-weak-concave} implies there exists a concave function $g: \br^{d \times k} \rightarrow \br$ such that $f(U) = g(U) + \|C\|_\infty\|U\|_F^2$ for any $U \in \br^{d \times k}$. Since $g$ is concave, $\partial g$ is well defined and~\citet[Proposition~4.6]{Vial-1983-Strong} implies that $\partial f(U) = \partial g(U) + 2\|C\|_\infty U$ for all $U \in \br^{d \times k}$.
\end{remark}
This result together with~\citet[Proposition~4.5]{Vial-1983-Strong} and~\citet[Theorem~5.1]{Yang-2014-Optimality} lead to the Riemannian subdifferential defined by $\subdiff f(U) = P_{\Tg_U\St}(\partial f(U))$ for all $U \in \St(d, k)$. 
\begin{definition}\label{def:stationarity}
The subspace projection $\widehat{U} \in \St(d, k)$ is called an \emph{$\epsilon$-approximate optimal subspace projection} of $f$ over $\St(d, k)$ in Eq.~\eqref{prob:Stiefel-nonsmooth} if it satisfies  $\dist(0, \subdiff f(\widehat{U})) \leq \epsilon$. 
\end{definition}
\begin{definition}\label{def:optimal-pair}
The pair of subspace projection and transportation plan $(\widehat{U}, \widehat{\pi}) \in \St(d, k) \times \Pi(\mu, \nu)$ is an \emph{$\epsilon$-approximate pair of optimal subspace projection and optimal transportation plan} for the computation of the PRW distance in Eq.~\eqref{prob:main} if the following statements hold true: (i) $\widehat{U}$ is an $\epsilon$-approximate optimal subspace projection of $f$ over $\St(d, k)$ in Eq.~\eqref{prob:Stiefel-nonsmooth}. (ii) $\widehat{\pi}$ is an $\epsilon$-approximate optimal transportation plan for the subspace projection $\widehat{U}$.
\end{definition}
The goal of this paper is to develop a set of algorithms which are guaranteed to converge to a pair of approximate optimal subspace projection and optimal transportation plan, which stand for a stationary point of the max-min optimization model in Eq.~\eqref{prob:main}. In the next section, we provide the detailed scheme of our algorithm as well as the finite-time theoretical guarantee.

\section{Riemannian (Adaptive) Gradient meets Sinkhorn Iteration}\label{sec:algorithm}
We present the \emph{Riemannian gradient ascent with Sinkhorn} (RGAS) algorithm for solving Eq.~\eqref{prob:Stiefel-smooth}. By the definition of $V_\pi$ (cf.\ Definition~\ref{def:V}), we can rewrite $f_\eta(U) = \min_{\pi \in \Pi(\mu, \nu)} \{\langle UU^\top, V_\pi\rangle - \eta H(\pi)\}$. Fix $U \in \br^{d \times k}$, and define the mapping $\pi \mapsto \langle UU^\top, V_\pi\rangle - \eta H(\pi)$ with respect to $\ell_1$-norm. By the compactness of the transportation polytope $\Pi(\mu, \nu)$, Danskin's theorem~\citep{Rockafellar-2015-Convex} implies that $f_\eta$ is smooth. Moreover, by the symmetry of $V_\pi$, we have
\begin{equation}\label{def:grad-entropy-regularization}
\nabla f_\eta(U) \ = \ 2V_{\pi^\star(U)} U \quad \textnormal{for any } U \in \br^{d \times k},
\end{equation}
where $\pi^\star(U) \mydefn \argmin_{\pi \in \Pi(\mu, \nu)} \ \{\langle UU^\top, V_\pi\rangle - \eta H(\pi)\}$. This entropic regularized OT is solved inexactly at each inner loop of the maximization and we use the output $\pi_{t+1} \approx \pi(U_t)$ to obtain an inexact gradient of $f_\eta$ which permits the Riemannian gradient ascent update; see Algorithm~\ref{alg:grad-sinkhorn}. Note that the stopping criterion used here is set as $\|\pi_{t+1} - \pi(U_t)\|_1 \leq \widehat{\epsilon}$ which implies that $\pi_{t+1}$ is $\epsilon$-approximate optimal transport plan for $U_t \in \St(d, k)$. 

The remaining issue is to approximately solve an entropic regularized OT efficiently. We leverage Cuturi's approach and obtain the desired output $\pi_{t+1}$ for $U_t \in \St(d, k)$ using the Sinkhorn iteration. By adapting the proof presented by~\citet[Theorem~1]{Dvurechensky-2018-Computational}, we derive that Sinkhorn iteration achieves a finite-time guarantee which is polynomial in $n$ and $1/\widehat{\epsilon}$. As a practical enhancement, we develop the \emph{Riemannian adaptive gradient ascent with Sinkhorn} (RAGAS) algorithm by exploiting the matrix structure of $\grad f_\eta(U_t)$ via the use of two different adaptive weight vectors, namely $\widehat{p}_t$ and $\widehat{q}_t$; see the adaptive algorithm in Algorithm~\ref{alg:adagrad-sinkhorn}. It is worth mentioning that such an adaptive strategy is proposed by~\citet{Kasai-2019-Riemannian} and has been shown to generate a search direction which is better than the Riemannian gradient $\grad f_\eta(U_t)$ in terms of robustness to the stepsize. 

\subsection{Technical lemmas}
We first show that $f_\eta$ is continuously differentiable over $\br^{d \times k}$ and the classical gradient inequality holds true over $\St(d, k)$. The derivation is novel and uncovers the structure of the computation of entropic regularized PRW in Eq.~\eqref{prob:main-regularized}. Let $g: \br^{d \times k} \times \Pi(\mu, \nu) \rightarrow \br$ be defined by 
\begin{equation*}
g(U, \pi) \ : = \ \sum_{i=1}^n \sum_{j=1}^n \pi_{i, j} \|U^\top x_i - U^\top y_j\|^2 - \eta H(\pi).
\end{equation*}
\begin{lemma}\label{lemma:lip-grad}
$f_\eta$ is differentiable over $\br^{d \times k}$ and $\|\nabla f_\eta(U)\|_F \leq 2\|C\|_\infty$ for all $U \in \St(d, k)$. 
\end{lemma}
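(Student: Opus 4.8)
The plan is to obtain differentiability from Danskin's theorem — the crucial point being that the entropic penalty makes the inner minimizer \emph{unique} — and then to bound the resulting gradient by exactly the estimates already used for Lemmas~\ref{lemma:obj-weak-concave} and~\ref{lemma:bound-subdiff}.

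For the differentiability claim, I would fix an arbitrary $U \in \br^{d \times k}$ and view $g(U, \pi) = \langle UU^\top, V_\pi\rangle - \eta H(\pi)$ as a function of $\pi$ on the compact convex polytope $\Pi(\mu, \nu)$. The first term is linear in $\pi$, while the second equals $\eta\sum_{i,j}(\pi_{i,j}\log\pi_{i,j} - \pi_{i,j})$, a separable sum of functions with strictly positive second derivatives $1/\pi_{i,j}$; hence $\pi \mapsto g(U, \pi)$ is continuous and strictly convex, so $\pi^\star(U) \mydefn \argmin_{\pi \in \Pi(\mu,\nu)} g(U, \pi)$ is a singleton. On the other hand, for each fixed $\pi$ the map $U \mapsto g(U, \pi)$ is a smooth quadratic with $\nabla_U g(U, \pi) = 2V_\pi U$, jointly continuous in $(U, \pi)$; together with compactness of $\Pi(\mu,\nu)$ this also makes $f_\eta$ locally Lipschitz. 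Danskin's theorem~\citep{Rockafellar-2015-Convex} then gives, for every $\Xi \in \br^{d \times k}$, the directional derivative $f'_\eta(U; \Xi) = \langle 2V_{\pi^\star(U)} U, \Xi\rangle$; since $\pi^\star(U)$ is the unique minimizer, this is linear in $\Xi$, so $f_\eta$ is Gâteaux differentiable at $U$, and local Lipschitz continuity on the finite-dimensional space $\br^{d \times k}$ upgrades this to Fréchet differentiability with $\nabla f_\eta(U) = 2V_{\pi^\star(U)} U$, i.e.\ Eq.~\eqref{def:grad-entropy-regularization}. (Continuity of $\nabla f_\eta$, should one want the $C^1$ statement, follows because uniqueness of $\pi^\star(U)$ makes $U \mapsto \pi^\star(U)$ continuous by Berge's maximum theorem.)

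It then remains to bound $\|\nabla f_\eta(U)\|_F$ for $U \in \St(d, k)$, which is a repetition of the argument in Lemma~\ref{lemma:bound-subdiff}. Since $V_{\pi^\star(U)}$ is symmetric and $UU^\top \preceq I_d$ on the Stiefel manifold,
\[
\|V_{\pi^\star(U)} U\|_F^2 \;=\; \trac\big(U^\top V_{\pi^\star(U)}^2 U\big) \;=\; \trac\big(V_{\pi^\star(U)}^2\, UU^\top\big) \;\leq\; \trac\big(V_{\pi^\star(U)}^2\big) \;=\; \|V_{\pi^\star(U)}\|_F^2 ,
\]
and the Jensen estimate from the proof of Lemma~\ref{lemma:obj-weak-concave}, namely $\|V_\pi\|_F \leq \max_{1 \leq i,j \leq n}\|x_i - y_j\|^2 = \|C\|_\infty$, yields $\|\nabla f_\eta(U)\|_F = 2\|V_{\pi^\star(U)} U\|_F \leq 2\|C\|_\infty$.

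The only genuinely delicate step is the differentiability part: one must verify that strict convexity of $g(U, \cdot)$ forces a unique inner minimizer, and that this uniqueness is precisely what promotes Danskin's directional-derivative formula to an honest gradient, with the local Lipschitz continuity of $f_\eta$ (inherited from the smoothness of $g$ and the compactness of $\Pi(\mu,\nu)$) supplying the Gâteaux-to-Fréchet upgrade. Once differentiability and the formula $\nabla f_\eta(U) = 2V_{\pi^\star(U)} U$ are in hand, the norm bound is routine.
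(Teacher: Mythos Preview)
Your proposal is correct and follows essentially the same approach as the paper: both invoke Danskin's theorem together with the uniqueness of the entropic minimizer to obtain $\nabla f_\eta(U) = 2V_{\pi^\star(U)} U$, and both bound this via $\|V_{\pi^\star(U)} U\|_F \leq \|V_{\pi^\star(U)}\|_F \leq \|C\|_\infty$. Your write-up is simply more explicit about the strict convexity justifying uniqueness and about the G\^ateaux-to-Fr\'echet passage, whereas the paper compresses these into a one-line appeal to Danskin.
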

\begin{proof}
It is clear that we have $f_\eta(\bullet) = \min_{\pi \in \Pi(\mu, \nu)} g(\bullet, \pi)$. Furthermore, $\pi^\star(\bullet) = \argmin_{\pi \in \Pi(\mu, \nu)} g(\bullet, \pi)$ is uniquely defined. Putting these pieces with the compactness of $\Pi(\mu, \nu)$ and the smoothness of $g(\bullet, \pi)$, Danskin's theorem~\citep{Rockafellar-2015-Convex} implies $f_\eta$ is continuously differentiable and the gradient is
\begin{equation*}
\nabla f_\eta(U) \ = \ 2V_{\pi^\star(U)}U \quad \text{for all } U \in \br^{d \times k}. 
\end{equation*}
Since $U \in \St(d, k)$ and $\pi^\star(U) \in \Pi(\mu, \nu)$, we have 
\begin{equation*}
\|\nabla f_\eta(U)\|_F \ = \ 2\|V_{\pi^\star(U)}U\|_F \ \leq \ 2\|V_{\pi^\star(U)}\|_F \ \leq \ 2\|C\|_\infty. 
\end{equation*}
This completes the proof. 
\end{proof}
\begin{lemma}\label{lemma:key-inequality}
For all $U_1, U_2 \in \St(d, k)$, the following statement holds true, 
\begin{equation*}
|f_\eta(U_1) - f_\eta(U_2) - \langle \nabla f_\eta(U_2), U_1 - U_2\rangle| \ \leq \ \left( \|C\|_\infty + \frac{2 \|C\|_\infty^2}{\eta}\right)\|U_1 - U_2\|_F^2.   
\end{equation*}
\end{lemma}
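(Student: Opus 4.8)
The plan is to establish this two‑sided quadratic bound as a descent‑type inequality: show that the classical gradient $\nabla f_\eta$ is Lipschitz continuous, with modulus $L \mydefn 2(\|C\|_\infty + 2\|C\|_\infty^2/\eta)$, along the line segment joining $U_1$ and $U_2$, and then integrate it along that segment. By Lemma~\ref{lemma:lip-grad}, $f_\eta$ is $C^1$ on $\br^{d\times k}$ with $\nabla f_\eta(U) = 2V_{\pi^\star(U)}U$, where $\pi^\star(U) = \argmin_{\pi \in \Pi(\mu,\nu)} g(U,\pi)$, so for all $U_1, U_2 \in \St(d,k)$,
\begin{equation*}
f_\eta(U_1) - f_\eta(U_2) - \langle \nabla f_\eta(U_2),\, U_1 - U_2 \rangle \ = \ \int_0^1 \left\langle \nabla f_\eta\left(U_2 + t(U_1 - U_2)\right) - \nabla f_\eta(U_2),\ U_1 - U_2 \right\rangle dt,
\end{equation*}
and a bound $\|\nabla f_\eta(W) - \nabla f_\eta(U_2)\|_F \leq L \|W - U_2\|_F$ valid for all $W$ on the segment $[U_2, U_1]$ then yields the claim, with the factor $1/2$ produced by the integration giving exactly the constant $L/2 = \|C\|_\infty + 2\|C\|_\infty^2/\eta$.

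The first step is a Lipschitz‑stability estimate for the inner minimizer: writing $\|\cdot\|_{\mathrm{op}}$ for the spectral norm, I claim $\|\pi^\star(U_1) - \pi^\star(U_2)\|_1 \leq (2\|C\|_\infty/\eta)\|U_1 - U_2\|_F$ whenever $\|U_1\|_{\mathrm{op}}, \|U_2\|_{\mathrm{op}} \leq 1$. Since $\sum_{i,j}\pi_{i,j} = 1$ on $\Pi(\mu,\nu)$, the map $\pi \mapsto g(U,\pi)$ is the sum of the linear term $\langle UU^\top, V_\pi\rangle$ and $-\eta H(\pi)$, which coincides with $\eta \langle \pi, \log \pi \rangle$ up to an additive constant; as the negative entropy is $1$‑strongly convex with respect to $\|\cdot\|_1$ on the simplex (Pinsker), $g(U,\cdot)$ is $\eta$‑strongly convex with respect to $\|\cdot\|_1$ on $\Pi(\mu,\nu)$. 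Writing the first‑order optimality conditions for $\pi^\star(U_1)$ and $\pi^\star(U_2)$ and adding the two resulting strong‑convexity inequalities, the $U$‑independent entropy contributions cancel and one is left with $\eta \|\pi^\star(U_1) - \pi^\star(U_2)\|_1^2 \leq \langle U_1 U_1^\top - U_2 U_2^\top,\ V_{\pi^\star(U_2)} - V_{\pi^\star(U_1)}\rangle$. Bounding the right‑hand side by $\|U_1 U_1^\top - U_2 U_2^\top\|_F \,\|V_{\pi^\star(U_2)} - V_{\pi^\star(U_1)}\|_F$, then using $\|V_\pi - V_{\pi'}\|_F \leq \|C\|_\infty \|\pi - \pi'\|_1$ (immediate from Definition~\ref{def:V} together with $\|x_i - y_j\|^2 \leq \|C\|_\infty$) and $\|U_1 U_1^\top - U_2 U_2^\top\|_F \leq (\|U_1\|_{\mathrm{op}} + \|U_2\|_{\mathrm{op}})\|U_1 - U_2\|_F$, and noting that every point of the segment between two Stiefel matrices has spectral norm at most $1$ by convexity of $\|\cdot\|_{\mathrm{op}}$, yields the stated stability bound on that segment.

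The second step bounds the gradient difference itself. For $W, W'$ with $\|W\|_{\mathrm{op}}, \|W'\|_{\mathrm{op}} \leq 1$, split
\begin{equation*}
\nabla f_\eta(W) - \nabla f_\eta(W') \ = \ 2 V_{\pi^\star(W)} (W - W') + 2\left(V_{\pi^\star(W)} - V_{\pi^\star(W')}\right)W';
\end{equation*}
bound the first term by $2\|V_{\pi^\star(W)}\|_{\mathrm{op}} \|W - W'\|_F \leq 2\|C\|_\infty \|W - W'\|_F$, using $\|V_\pi\|_{\mathrm{op}} \leq \|V_\pi\|_F \leq \|C\|_\infty$, and the second by $2\|V_{\pi^\star(W)} - V_{\pi^\star(W')}\|_F \|W'\|_{\mathrm{op}} \leq 2\|C\|_\infty \|\pi^\star(W) - \pi^\star(W')\|_1 \leq (4\|C\|_\infty^2/\eta)\|W - W'\|_F$ by the first step. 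This gives $\|\nabla f_\eta(W) - \nabla f_\eta(W')\|_F \leq L\|W - W'\|_F$ with $L = 2\|C\|_\infty + 4\|C\|_\infty^2/\eta = 2(\|C\|_\infty + 2\|C\|_\infty^2/\eta)$, and substituting into the integral identity completes the proof. I expect the stability estimate of the first step to be the main obstacle: one must pin down the precise strong‑convexity modulus $\eta$ of the entropic term in the $\ell_1$ geometry, exploit the cancellation of the entropy terms so that only the linear part of $g(\cdot,\pi)$ enters, and verify that all the bounds survive on the non‑manifold segment $[U_2,U_1]$, where one no longer has $W^\top W = I_k$ but still has $\|W\|_{\mathrm{op}} \le 1$. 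Note also that a cruder comparison of the endpoint gradients $\nabla f_\eta(U_1)$ and $\nabla f_\eta(U_2)$ would double the constant, so routing the argument through the integral identity is essential.
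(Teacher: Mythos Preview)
Your proposal is correct and follows essentially the same route as the paper: both reduce the claim to a Lipschitz bound on $\nabla f_\eta$ along the segment $[U_2,U_1]$, split $\nabla f_\eta(W)-\nabla f_\eta(W')=2V_{\pi^\star(W)}(W-W')+2(V_{\pi^\star(W)}-V_{\pi^\star(W')})W'$, and control the second term via an $\ell_1$-stability estimate for $\pi^\star(\cdot)$ obtained from the $\eta$-strong convexity of the entropic term together with the first-order optimality conditions. The only cosmetic differences are that the paper uses H\"older with the $\ell_1/\ell_\infty$ pairing where you use Cauchy--Schwarz in Frobenius norm, and that the paper bounds $\|U_2U_2^\top-U_\alpha U_\alpha^\top\|_F$ by explicitly expanding $U_\alpha$ as a convex combination whereas you invoke $\|W\|_{\mathrm{op}}\le 1$ on the segment; neither changes the argument or the constants.
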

\begin{proof}
It suffices to prove that
\begin{equation*}
\|\nabla f_\eta(\alpha U_1 + (1-\alpha)U_2) - \nabla f_\eta(U_2)\|_F \ \leq \ \left(2 \|C\|_\infty + \frac{4\|C\|_\infty^2}{\eta}\right) \alpha \|U_1 - U_2\|_F,
\end{equation*}
for any $U_1, U_2 \in \St(d, k)$ and any $\alpha \in [0, 1]$. Indeed, let $U_\alpha = \alpha U_1 + (1-\alpha)U_2$, we have
\begin{equation*}
\|\nabla f_\eta(U_\alpha) - \nabla f_\eta(U_2)\|_F \ \leq \ 2\|V_{\pi^\star(U_\alpha)}\|_F\|U_\alpha - U_2\|_F + 2\|V_{\pi^\star(U_\alpha)} - V_{\pi^\star(U_2)}\|_F. 
\end{equation*}
Since $\pi^\star(U_\alpha) \in \Pi(\mu, \nu)$, we have $\|V_{\pi^\star(U_\alpha)}\|_F \leq \|C\|_\infty$. By the definition of $V_\pi$, we have
\begin{equation*}
\|V_{\pi^\star(U_\alpha)} - V_{\pi^\star(U_2)}\|_F \ \leq \ \sum_{i=1}^n \sum_{j=1}^n |\pi_{i, j}^\star(U_\alpha) - \pi_{i, j}^\star(U_2)|\|x_i - y_j\|^2 \ \leq \ \|C\|_\infty\|\pi^\star(U_\alpha) - \pi^\star(U_2)\|_1. 
\end{equation*}
Putting these pieces together yields that 
\begin{equation}\label{inequality-lip-grad-first}
\|\nabla f_\eta(U_\alpha) - \nabla f_\eta(U_2)\|_F \ \leq \ 2\|C\|_\infty\|U_\alpha - U_2\|_F + 2\|C\|_\infty\|\pi^\star(U_\alpha) - \pi^\star(U_2)\|_1. 
\end{equation}
Using the property of the entropy regularization $H(\bullet)$, we have $g(U, \bullet)$ is strongly convex with respect to $\ell_1$-norm and the module is $\eta$. This implies that 
\begin{eqnarray*}
g(U_\alpha, \pi^\star(U_2)) - g(U_\alpha, \pi^\star(U_\alpha)) - \langle\nabla_\pi g(U_\alpha, \pi^\star(U_\alpha)), \pi^\star(U_2) - \pi^\star(U_\alpha)\rangle & \geq & (\eta/2)\|\pi^\star(U_\alpha) - \pi^\star(U_2)\|_1^2, \\
g(U_\alpha, \pi^\star(U_\alpha)) -  g(U_\alpha, \pi^\star(U_2)) - \langle\nabla_\pi g(U_\alpha, \pi^\star(U_2)), \pi^\star(U_\alpha) - \pi^\star(U_2)\rangle & \geq & (\eta/2)\|\pi^\star(U_\alpha) - \pi^\star(U_2)\|_1^2. 
\end{eqnarray*}
Summing up these inequalities yields 
\begin{equation}\label{inequality-lip-grad-second}
\langle \nabla_\pi g(U_\alpha, \pi^\star(U_\alpha)) - \nabla_\pi g(U_\alpha, \pi^\star(U_2)), \pi^\star(U_\alpha) - \pi^\star(U_2)\rangle \ \geq \ \eta\|\pi^\star(U_\alpha) - \pi^\star(U_2)\|_1^2. 
\end{equation}
Furthermore, by the first-order optimality condition of $\pi^\star(U_1)$ and $\pi^\star(U_2)$, we have
\begin{eqnarray*}
\langle\nabla_\pi g(U_\alpha, \pi^\star(U_\alpha)), \pi^\star(U_2) - \pi^\star(U_\alpha)\rangle & \geq & 0, \\
\langle\nabla_\pi g(U_2, \pi^\star(U_2)), \pi^\star(U_\alpha) - \pi^\star(U_2)\rangle & \geq & 0. 
\end{eqnarray*}
Summing up these inequalities yields 
\begin{equation}\label{inequality-lip-grad-third}
\langle \nabla_\pi g(U_2, \pi^\star(U_2)) - \nabla_\pi g(U_\alpha, \pi^\star(U_\alpha)), \pi^\star(U_\alpha) - \pi^\star(U_2)\rangle \ \geq \ 0. 
\end{equation}
Summing up Eq.~\eqref{inequality-lip-grad-second} and Eq.~\eqref{inequality-lip-grad-third} and further using H\"{o}lder's inequality, we have
\begin{equation*}
\|\pi^\star(U_\alpha) - \pi^\star(U_2)\|_1 \ \leq \ (1/\eta)\|\nabla_\pi g(U_2, \pi^\star(U_2)) - \nabla_\pi g(U_\alpha, \pi^\star(U_2))\|_\infty. 
\end{equation*}
By the definition of function $g$, we have
\begin{eqnarray*}
\|\nabla_\pi g(U_2, \pi^\star(U_2)) - \nabla_\pi g(U_\alpha, \pi^\star(U_2))\|_\infty & \leq & \max_{1 \leq i, j \leq n} |(x_i - x_j)^\top(U_2U_2^\top - U_\alpha U_\alpha^\top)(x_i - x_j)| \\ 
& & \hspace*{-8em} \leq \ \left(\max_{1 \leq i, j \leq n} \| x_i - y_j\|^2\right)\|U_2U_2^\top - U_\alpha U_\alpha^\top\|_F \\
& & \hspace*{-8em} = \ \|C\|_\infty\|U_2 U_2^\top - U_\alpha U_\alpha^\top\|_F. 
\end{eqnarray*}
Since $U_1, U_2 \in \St(d, k)$, we have
\begin{eqnarray*}
\|U_2U_2^\top - U_\alpha U_\alpha^\top\|_F & \leq & \|U_2(U_2 - U_\alpha)^\top\|_F + \|(U_2 - U_\alpha)U_\alpha^\top\|_F \\ 
& \leq & \|U_2 - U_\alpha\|_F + \|(U_2 - U_\alpha)(\alpha U_1 + (1-\alpha)U_2)^\top\|_F \\
& \leq & \|U_2 - U_\alpha\|_F + \alpha\|(U_2 - U_\alpha)U_1^\top\|_F + (1-\alpha)\|(U_2 - U_\alpha)U_2^\top\|_F \\
& \leq & 2\|U_2 - U_\alpha\|_F. 
\end{eqnarray*}
Putting these pieces together yields that 
\begin{equation}\label{inequality-lip-grad-fourth}
\|\pi^\star(U_\alpha) - \pi^\star(U_2)\|_1 \ \leq \ \frac{2 \|C\|_\infty}{\eta} \|U_\alpha - U_2\|_F. 
\end{equation}
Plugging Eq.~\eqref{inequality-lip-grad-fourth} into Eq.~\eqref{inequality-lip-grad-first} yields the desired result. 
\end{proof}
\begin{remark}
Lemma~\ref{lemma:key-inequality} shows that $f_\eta$ satisfies the classical gradient inequality over the Stiefel manifold. This is indeed stronger than the following statement, 
\begin{equation*}
\|\nabla f_\eta(U_1) - \nabla f_\eta(U_2)\|_F \ \leq \ \left(2 \|C\|_\infty + \frac{4 \|C\|_\infty^2}{\eta}\right)\|U_1 - U_2\|_F, \quad \textnormal{for all } U_1, U_2 \in \St(d, k),  
\end{equation*}
and forms the basis for analyzing the complexity bound of Algorithm~\ref{alg:grad-sinkhorn} and~\ref{alg:adagrad-sinkhorn}. The techniques used in proving Lemma~\ref{lemma:key-inequality} are new and may be applicable to analyze the structure of the robust variant of the Wasserstein distance with other type of regularization~\citep{Dessein-2018-Regularized, Blondel-2018-Smooth}. 
\end{remark}
Then we quantify the progress of RGAS algorithm (cf. Algorithm~\ref{alg:grad-sinkhorn}) using $f_\eta$ as a potential function and then provide an upper bound for the number of iterations to return an $\epsilon$-approximate optimal subspace projection $U_t \in \St(d, k)$ satisfying $\dist(0, \subdiff f(U_t)) \leq \epsilon$ in Algorithm~\ref{alg:grad-sinkhorn}. 
\begin{lemma}\label{lemma:obj-progress-grad}
Let $\{(U_t, \pi_t)\}_{t \geq 1}$ be the iterates generated by Algorithm~\ref{alg:grad-sinkhorn}. We have
\begin{equation*}
\frac{1}{T}\left(\sum_{t=0}^{T-1} \|\grad f_\eta(U_t)\|_F^2\right) \ \leq \ \frac{4\Delta_f}{\gamma T} + \frac{\epsilon^2}{5}, 
\end{equation*}
where $\Delta_f = \max_{U \in \St(d, k)} f_\eta(U) - f_\eta(U_0)$ is the initial objective gap. 
\end{lemma}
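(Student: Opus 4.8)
The plan is to use $f_\eta$ itself as a potential function and run the standard descent-lemma-plus-telescoping argument for \emph{inexact} Riemannian gradient ascent. Concretely, I would first establish a one-step progress bound of the shape
$f_\eta(U_{t+1}) - f_\eta(U_t) \ge \tfrac{\gamma}{2}\|\grad f_\eta(U_t)\|_F^2 - \tfrac{\gamma\epsilon^2}{50}$,
then sum over $t = 0,\dots,T-1$ and bound $f_\eta(U_T) - f_\eta(U_0) \le \Delta_f$. This yields $\tfrac1T\sum_t\|\grad f_\eta(U_t)\|_F^2 \le \tfrac{2\Delta_f}{\gamma T} + \tfrac{\epsilon^2}{25}$, which (using $\Delta_f \ge 0$) implies the stated bound.

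First I would control the error in the ascent direction coming from the inexact inner solve. Writing $g_t \mydefn \grad f_\eta(U_t) = P_{\Tg_{U_t}\St}(2V_{\pi^\star(U_t)}U_t)$ (by Lemma~\ref{lemma:lip-grad} and the gradient formula) and $\xi_{t+1} = P_{\Tg_{U_t}\St}(2V_{\pi_{t+1}}U_t)$, I use that $P_{\Tg_{U_t}\St}$ is an orthogonal projection onto a linear subspace (hence nonexpansive and self-adjoint), that $\|M U_t\|_F \le \|M\|_F$ whenever $U_t^\top U_t = I_k$, and that $\|V_{\pi_{t+1}} - V_{\pi^\star(U_t)}\|_F \le \|C\|_\infty\|\pi_{t+1} - \pi^\star(U_t)\|_1$ (an estimate already derived inside the proof of Lemma~\ref{lemma:key-inequality}). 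Together with the stopping criterion $\|\pi_{t+1} - \pi^\star(U_t)\|_1 \le \widehat\epsilon$ and the choice $\widehat\epsilon = \epsilon/(10\|C\|_\infty)$, this gives $\|\delta_t\|_F \le \epsilon/5$ for $\delta_t \mydefn \xi_{t+1} - g_t$.

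Next I would expand the potential gap. Applying Lemma~\ref{lemma:key-inequality} with $U_1 = U_{t+1}$, $U_2 = U_t$ gives $f_\eta(U_{t+1}) - f_\eta(U_t) \ge \langle\nabla f_\eta(U_t), U_{t+1}-U_t\rangle - L\|U_{t+1}-U_t\|_F^2$ with $L = \|C\|_\infty + 2\|C\|_\infty^2/\eta$. Writing $U_{t+1}-U_t = \gamma\xi_{t+1} + e_t$, Proposition~\ref{prop:retraction} gives $\|U_{t+1}-U_t\|_F \le L_1\gamma\|\xi_{t+1}\|_F$ and $\|e_t\|_F \le L_2\gamma^2\|\xi_{t+1}\|_F^2$; and since $\xi_{t+1}$ already lies in $\Tg_{U_t}\St$ and the projection is self-adjoint, $\langle\nabla f_\eta(U_t),\xi_{t+1}\rangle = \langle g_t,\xi_{t+1}\rangle$. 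Bounding the $e_t$ cross-term by $\|\nabla f_\eta(U_t)\|_F \le 2\|C\|_\infty$ (Lemma~\ref{lemma:lip-grad}), I obtain $f_\eta(U_{t+1}) - f_\eta(U_t) \ge \gamma\langle g_t,\xi_{t+1}\rangle - (2\|C\|_\infty L_2 + L L_1^2)\gamma^2\|\xi_{t+1}\|_F^2$. Comparing with $\gamma^{-1} = (8L_1^2 + 16L_2)\|C\|_\infty + 16\eta^{-1}L_1^2\|C\|_\infty^2$ shows $(2\|C\|_\infty L_2 + L L_1^2)\gamma \le \tfrac14$, so the quadratic term is at most $\tfrac{\gamma}{4}\|\xi_{t+1}\|_F^2$. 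Finally, substituting $\xi_{t+1} = g_t + \delta_t$ and applying Young's inequality gives $\langle g_t,\xi_{t+1}\rangle - \tfrac14\|\xi_{t+1}\|_F^2 \ge \tfrac12\|g_t\|_F^2 - \tfrac12\|\delta_t\|_F^2 \ge \tfrac12\|g_t\|_F^2 - \tfrac{\epsilon^2}{50}$, which is the one-step bound; telescoping then finishes the proof.

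The hard part is the constant bookkeeping in the third paragraph: one must verify that the specific step size $\gamma$ hard-coded in Algorithm~\ref{alg:grad-sinkhorn} is small enough that the second-order retraction error (the $L_2$ term) together with the ``curvature'' error from Lemma~\ref{lemma:key-inequality} (the $L$ term, which itself scales like $\|C\|_\infty^2/\eta$) is absorbed into a fixed fraction of the first-order gain $\gamma\|g_t\|_F^2$, while the inexact-OT tolerance $\widehat\epsilon$ is calibrated so that $\|\delta_t\|_F$ contributes only an $O(\epsilon^2)$ slack. Everything else — the Danskin-type gradient formula, the projection identities, and the telescoping — is routine given Lemmas~\ref{lemma:lip-grad} and~\ref{lemma:key-inequality} and Proposition~\ref{prop:retraction}.
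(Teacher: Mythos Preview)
Your proposal is correct and follows essentially the same route as the paper: apply Lemma~\ref{lemma:key-inequality} for the descent inequality, use Proposition~\ref{prop:retraction} and Lemma~\ref{lemma:lip-grad} to control the retraction error, reduce $\langle\nabla f_\eta(U_t),\xi_{t+1}\rangle$ to $\langle g_t,\xi_{t+1}\rangle$ via the tangent-space projection, bound $\|\xi_{t+1}-g_t\|_F$ by $\epsilon/5$ through the Sinkhorn stopping criterion, and telescope. The only difference is cosmetic bookkeeping: the paper bounds $\langle g_t,\xi_{t+1}\rangle$ and $\|\xi_{t+1}\|_F^2$ separately before recombining (arriving at the one-step bound $\tfrac{\gamma}{4}\|g_t\|_F^2-\tfrac{\gamma\epsilon^2}{20}$), whereas you compute $\langle g_t,\xi_{t+1}\rangle-\tfrac14\|\xi_{t+1}\|_F^2$ directly with $\xi_{t+1}=g_t+\delta_t$ and get the slightly sharper $\tfrac{\gamma}{2}\|g_t\|_F^2-\tfrac{\gamma\epsilon^2}{50}$; either way the stated lemma follows.
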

\begin{proof}
Using Lemma~\ref{lemma:key-inequality} with $U_1 = U_{t+1}$ and $U_2 = U_t$, we have
\begin{equation}\label{inequality-descent-grad-first}
f_\eta(U_{t+1}) - f_\eta(U_t) - \langle \nabla f_\eta(U_t), U_{t+1} - U_t\rangle \ \geq \ -\left(\|C\|_\infty + \frac{2 \|C\|_\infty^2}{\eta}\right)\|U_{t+1} - U_t\|_F^2.   
\end{equation}
By the definition of $U_{t+1}$, we have
\begin{eqnarray*}
\langle \nabla f_\eta(U_t), U_{t+1} - U_t\rangle & = & \ \langle \nabla f_\eta(U_t), \retr_{U_t}(\gamma\xi_{t+1}) - U_t\rangle \\ 
& & \hspace*{-6em} = \ \langle\nabla f_\eta(U_t), \gamma\xi_{t+1}\rangle + \langle\nabla f_\eta(U_t), \retr_{U_t}(\gamma\xi_{t+1}) - (U_t + \gamma\xi_{t+1})\rangle \\
& & \hspace*{-6em} \geq \ \langle\nabla f_\eta(U_t), \gamma\xi_{t+1}\rangle - \|\nabla f_\eta(U_t)\|_F\|\retr_{U_t}(\gamma\xi_{t+1}) - (U_t + \gamma\xi_{t+1})\|_F. 
\end{eqnarray*}
By Lemma~\ref{lemma:lip-grad}, we have $\|\nabla f_\eta(U)\|_F \leq 2\|C\|_\infty$. Putting these pieces with Proposition~\ref{prop:retraction} yields that 
\begin{equation}\label{inequality-descent-grad-second}
\langle \nabla f_\eta(U_t), U_{t+1} - U_t\rangle \ \geq \ \gamma\langle\nabla f_\eta(U_t), \xi_{t+1}\rangle - 2\gamma^2 L_2\|C\|_\infty\|\xi_{t+1}\|_F^2. 
\end{equation}
Using Proposition~\ref{prop:retraction} again, we have
\begin{equation}\label{inequality-descent-grad-third}
\|U_{t+1} - U_t\|_F^2 \ = \ \|\retr_{U_t}(\gamma\xi_{t+1}) - U_t\|_F^2 \ \leq \ \gamma^2 L_1^2\|\xi_{t+1}\|_F^2. 
\end{equation}
Combining Eq.~\eqref{inequality-descent-grad-first}, Eq.~\eqref{inequality-descent-grad-second} and Eq.~\eqref{inequality-descent-grad-third} yields  
\begin{equation}\label{inequality-descent-grad-fourth}
f_\eta(U_{t+1}) - f_\eta(U_t) \ \geq \ \gamma\langle\nabla f_\eta(U_t), \xi_{t+1}\rangle - \gamma^2((L_1^2 + 2 L_2)\|C\|_\infty + 2 \eta^{-1}L_1^2\|C\|_\infty^2)\|\xi_{t+1}\|_F^2. 
\end{equation}
Recall that $\grad f_\eta(U_t) = P_{\Tg_{U_t}\St}(\nabla f_\eta(U_t))$ and $\xi_{t+1} = P_{\Tg_{U_t}\St}(2V_{\pi_{t+1}}U_t)$, we have 
\begin{equation*}
\langle\nabla f_\eta(U_t), \xi_{t+1}\rangle \ = \ \langle \grad f_\eta(U_t), \xi_{t+1}\rangle \ = \ \|\grad f_\eta(U_t)\|_F^2 + \langle\grad f_\eta(U_t), \xi_{t+1} - \grad f_\eta(U_t)\rangle
\end{equation*}
Using Young's inequality, we have
\begin{equation*}
\langle\nabla f_\eta(U_t), \xi_{t+1}\rangle \ \geq \ (1/2)\left(\|\grad f_\eta(U_t)\|_F^2 - \|\xi_{t+1} - \grad f_\eta(U_t)\|_F^2\right). 
\end{equation*}
Furthermore, we have $\|\xi_{t+1}\|_F^2 \leq 2\|\grad f_\eta(U_t)\|_F^2 + 2\|\xi_{t+1} - \grad f_\eta(U_t)\|_F^2$. Putting these pieces together with Eq.~\eqref{inequality-descent-grad-fourth} yields that 
\begin{eqnarray}\label{inequality-descent-grad-fifth}
f_\eta(U_{t+1}) - f_\eta(U_t) & \geq & \gamma\left(\frac{1}{2} - \gamma(2 L_1^2\|C\|_\infty + 4L_2\|C\|_\infty + 4\eta^{-1}L_1^2\|C\|_\infty^2)\right)\|\grad f_\eta(U_t)\|_F^2 \nonumber \\
& & \hspace*{-6em} - \gamma\left(\frac{1}{2} + \gamma(2 L_1^2\|C\|_\infty + 4L_2\|C\|_\infty + 4 \eta^{-1}L_1^2\|C\|_\infty^2)\right)\|\xi_{t+1} - \grad f_\eta(U_t)\|_F^2. 
\end{eqnarray}
Since $\xi_{t+1} = P_{\Tg_{U_t}\St}(2V_{\pi_{t+1}}U_t)$ and $\grad f_\eta(U_t) = P_{\Tg_{U_t}\St}(2V_{\tilde{\pi}_t^\star}U_t)$ where $\tilde{\pi}_t^\star$ is a minimizer of the entropic regularized OT problem, i.e., $\tilde{\pi}_t^\star \in \argmin_{\pi \in \Pi(\mu, \nu)} \ \{\langle U_tU_t^\top, V_\pi\rangle - \eta H(\pi)\}$, we have
\begin{equation*}
\|\xi_{t+1} - \grad f_\eta(U_t)\|_F \ \leq \ 2\|(V_{\pi_{t+1}} - V_{\tilde{\pi}_t^\star})U_t\|_F \ = \ 2\|V_{\pi_{t+1}} - V_{\tilde{\pi}_t^\star}\|_F. 
\end{equation*}
By the definition of $V_\pi$ and using the stopping criterion: $\|\pi_{t+1} - \tilde{\pi}_t^\star\|_1 \leq \widehat{\epsilon} = \frac{\epsilon}{10\|C\|_\infty}$, we have
\begin{equation*}
\|V_{\pi_{t+1}} - V_{\tilde{\pi}_t^\star}\|_F \ \leq \ \|C\|_\infty\|\pi_{t+1} - \tilde{\pi}_t^\star\|_1 \leq \frac{\epsilon}{10}. 
\end{equation*}
Putting these pieces together yields that 
\begin{equation}\label{inequality-descent-grad-sixth}
\|\xi_{t+1} - \grad f_\eta(U_t)\|_F \ \leq \ \frac{\epsilon}{5}. 
\end{equation}
Plugging Eq.~\eqref{inequality-descent-grad-sixth} into Eq.~\eqref{inequality-descent-grad-fifth} with the definition of $\gamma$ yields that
\begin{equation*}
f_\eta(U_{t+1}) - f_\eta(U_t) \ \geq \ \frac{\gamma\|\grad f_\eta(U_t)\|_F^2}{4} - 
\frac{\gamma\epsilon^2}{20}.
\end{equation*}
Summing and rearranging the resulting inequality yields that 
\begin{equation*}
\frac{1}{T}\left(\sum_{t=0}^{T-1} \|\grad f_\eta(U_t)\|_F^2\right) \ \leq \ \frac{4(f_\eta(U_T) - f_\eta(U_0))}{\gamma T} + \frac{\epsilon^2}{5}. 
\end{equation*}
This together with the definition of $\Delta_f$ implies the desired result. 
\end{proof}
We now provide analogous results for the RAGAS algorithm (cf.\ Algorithm~\ref{alg:adagrad-sinkhorn}). 
\begin{lemma}\label{lemma:obj-progress-adagrad}
Let $\{(U_t, \pi_t)\}_{t \geq 1}$ be the iterates generated by Algorithm~\ref{alg:adagrad-sinkhorn}. Then, we have
\begin{equation*}
\frac{1}{T}\left(\sum_{t=0}^{T-1} \|\grad f_\eta(U_t)\|_F^2\right) \ \leq \ \frac{8\|C\|_\infty\Delta_f}{\gamma T} + \frac{\epsilon^2}{10}, 
\end{equation*}
where $\Delta_f = \max_{U \in \St(d, k)} f_\eta(U) - f_\eta(U_0)$ is the initial objective gap. 
\end{lemma}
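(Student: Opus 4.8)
The plan is to follow the proof of Lemma~\ref{lemma:obj-progress-grad} essentially line by line, the only genuinely new ingredient being a two-sided bound on the adaptive preconditioners $\Diag(\widehat{p}_{t+1})^{-1/4}$ and $\Diag(\widehat{q}_{t+1})^{-1/4}$. \emph{Step 1 (conditioning).} Since $p_0=\zero_d$, $q_0=\zero_k$ and the recursions only take coordinatewise maxima against $\widehat{p}_0=\alpha\|C\|_\infty^2\one_d$ and $\widehat{q}_0=\alpha\|C\|_\infty^2\one_k$, every entry of $\widehat{p}_{t+1}$ and $\widehat{q}_{t+1}$ is at least $\alpha\|C\|_\infty^2$. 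For the upper bound, $G_{t+1}=P_{\Tg_{U_t}\St}(2V_{\pi_{t+1}}U_t)$ with $\pi_{t+1}\in\Pi(\mu,\nu)$, so $\|V_{\pi_{t+1}}\|_F\le\|C\|_\infty$ (as in the proof of Lemma~\ref{lemma:bound-subdiff}) and nonexpansiveness of $P_{\Tg_{U_t}\St}$ give $\|G_{t+1}\|_F\le 2\|C\|_\infty$; hence each entry of $\diag(G_{t+1}G_{t+1}^\top)/k$ and of $\diag(G_{t+1}^\top G_{t+1})/d$ is at most $4\|C\|_\infty^2$, and an induction over the averaging and $\max$ updates yields $\alpha\|C\|_\infty^2\le\widehat{p}_{t+1,i}\le 4\|C\|_\infty^2$ and $\alpha\|C\|_\infty^2\le\widehat{q}_{t+1,j}\le 4\|C\|_\infty^2$ for all $i,j,t$. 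Since $\grad f_\eta(U_t)$ and $G_{t+1}$ both lie in $\Tg_{U_t}\St$ and $P_{\Tg_{U_t}\St}$ is self-adjoint, $\langle\grad f_\eta(U_t),\xi_{t+1}\rangle=\langle\grad f_\eta(U_t),\Diag(\widehat{p}_{t+1})^{-1/4}G_{t+1}\Diag(\widehat{q}_{t+1})^{-1/4}\rangle$; writing the bilinear form entrywise, the eigenvalue bounds give the two scalar estimates
\begin{equation*}
\langle G_{t+1},\Diag(\widehat{p}_{t+1})^{-1/4}G_{t+1}\Diag(\widehat{q}_{t+1})^{-1/4}\rangle \ \geq \ \frac{\|G_{t+1}\|_F^2}{2\|C\|_\infty}, \qquad \|\xi_{t+1}\|_F \ \leq \ \frac{\|G_{t+1}\|_F}{\sqrt{\alpha}\,\|C\|_\infty}.
\end{equation*}

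\emph{Step 2 (one-step progress).} The chain Eq.~\eqref{inequality-descent-grad-first}--Eq.~\eqref{inequality-descent-grad-fourth} uses only Lemma~\ref{lemma:key-inequality}, Proposition~\ref{prop:retraction}, $\|\nabla f_\eta\|_F\le 2\|C\|_\infty$ (Lemma~\ref{lemma:lip-grad}) and the update $U_{t+1}=\retr_{U_t}(\gamma\xi_{t+1})$, all of which hold here, so
\begin{equation*}
f_\eta(U_{t+1}) - f_\eta(U_t) \ \geq \ \gamma\langle\nabla f_\eta(U_t),\xi_{t+1}\rangle - \gamma^2\big((L_1^2+2L_2)\|C\|_\infty+2\eta^{-1}L_1^2\|C\|_\infty^2\big)\|\xi_{t+1}\|_F^2.
\end{equation*}
Into this I would substitute the two estimates of Step~1 together with $\langle\nabla f_\eta(U_t),\xi_{t+1}\rangle=\langle\grad f_\eta(U_t),\xi_{t+1}\rangle\geq\langle G_{t+1},\xi_{t+1}\rangle-\|\grad f_\eta(U_t)-G_{t+1}\|_F\|\xi_{t+1}\|_F$, using the stopping rule $\|\grad f_\eta(U_t)-G_{t+1}\|_F\le 2\|C\|_\infty\|\pi_{t+1}-\tilde{\pi}_t^\star\|_1\le 2\|C\|_\infty\widehat{\epsilon}=\tfrac{\epsilon\sqrt{\alpha}}{10}$ (by the same computation as for Eq.~\eqref{inequality-descent-grad-sixth}). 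After a Young's-inequality split of the cross term $\tfrac{\epsilon}{10\|C\|_\infty}\|G_{t+1}\|_F$, the bound takes the form $\gamma\big(\tfrac{c_1}{\|C\|_\infty}-\tfrac{\gamma c_2}{\alpha\|C\|_\infty^2}\big)\|G_{t+1}\|_F^2-\tfrac{\gamma\epsilon^2}{c_3\|C\|_\infty}$, and the choice $\gamma=\alpha/(16L_1^2+32L_2+32\eta^{-1}L_1^2\|C\|_\infty)$ is precisely what forces the bracket to be at least $\tfrac{1}{8\|C\|_\infty}$ --- the exact analogue of shrinking the ``$\tfrac12-\gamma K$'' coefficient to $\tfrac14$ in Lemma~\ref{lemma:obj-progress-grad}, now carrying the extra $\alpha^{-1}$ from the preconditioner and one additional $\|C\|_\infty^{-1}$. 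Converting back via $\|G_{t+1}\|_F^2\geq\tfrac12\|\grad f_\eta(U_t)\|_F^2-(\tfrac{\epsilon\sqrt{\alpha}}{10})^2$ and collecting the $O(\epsilon^2)$ terms (each with a spare factor $\alpha\le 1$) gives a per-step inequality of the shape $f_\eta(U_{t+1})-f_\eta(U_t)\geq\tfrac{\gamma}{8\|C\|_\infty}\|\grad f_\eta(U_t)\|_F^2-\tfrac{\gamma\epsilon^2}{80\|C\|_\infty}$.

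\emph{Step 3 (telescoping) and the main difficulty.} Summing this per-step inequality over $t=0,\dots,T-1$, using $f_\eta(U_T)-f_\eta(U_0)\le\Delta_f$, then multiplying by $8\|C\|_\infty/\gamma$ and dividing by $T$ yields $\tfrac1T\sum_{t=0}^{T-1}\|\grad f_\eta(U_t)\|_F^2\le\tfrac{8\|C\|_\infty\Delta_f}{\gamma T}+\tfrac{\epsilon^2}{10}$. I expect Step~1 to be the real obstacle: everything afterwards is bookkeeping parallel to Lemma~\ref{lemma:obj-progress-grad}, but producing the two scalar inequalities --- in particular checking that the preconditioner eigenvalues stay in $[(4\|C\|_\infty^2)^{-1/4},(\alpha\|C\|_\infty^2)^{-1/4}]$ for \emph{every} iterate, uniformly in $t$, through the averaging-and-$\max$ recursion, and that this does not spoil alignment of $\xi_{t+1}$ with $G_{t+1}$ --- is where the structure of Algorithm~\ref{alg:adagrad-sinkhorn} actually enters, and it is exactly what creates the extra $\|C\|_\infty$ in the leading term and the $\sqrt{\alpha}$ inside $\widehat{\epsilon}$. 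A secondary, purely computational, nuisance is tracking all absolute constants through the Young splits so that the noise floor comes out as precisely $\epsilon^2/10$.
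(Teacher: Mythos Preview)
Your proposal is correct and follows essentially the same architecture as the paper's proof: the same preconditioner bounds $\alpha\|C\|_\infty^2\le\widehat{p}_{t+1,i},\widehat{q}_{t+1,j}\le 4\|C\|_\infty^2$, the same one-step inequality imported from Lemma~\ref{lemma:obj-progress-grad}, the same use of $\|G_{t+1}-\grad f_\eta(U_t)\|_F\le\epsilon\sqrt{\alpha}/10$ via the Sinkhorn stopping rule, and the same telescoping conclusion.

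The one cosmetic difference is in the decomposition. You center the inner product at $G_{t+1}$, proving $\langle G_{t+1},\xi_{t+1}\rangle\ge\|G_{t+1}\|_F^2/(2\|C\|_\infty)$ and $\|\xi_{t+1}\|_F\le\|G_{t+1}\|_F/(\sqrt{\alpha}\,\|C\|_\infty)$, handle the cross term $\|\grad f_\eta(U_t)-G_{t+1}\|_F\|\xi_{t+1}\|_F$ by Young, and only at the end convert back via $\|G_{t+1}\|_F^2\ge\tfrac12\|\grad f_\eta(U_t)\|_F^2-(\epsilon\sqrt{\alpha}/10)^2$. The paper instead centers directly at $\grad f_\eta(U_t)$: it splits $G_{t+1}=\grad f_\eta(U_t)+(G_{t+1}-\grad f_\eta(U_t))$ \emph{inside} the preconditioned inner product and inside the bound on $\|\xi_{t+1}\|_F^2$, arriving at $\langle\nabla f_\eta(U_t),\xi_{t+1}\rangle\ge\|\grad f_\eta(U_t)\|_F^2/(4\|C\|_\infty)-\|G_{t+1}-\grad f_\eta(U_t)\|_F^2/(\alpha\|C\|_\infty)$ and $\|\xi_{t+1}\|_F^2\le 2\|\grad f_\eta(U_t)\|_F^2/(\alpha\|C\|_\infty^2)+2\|G_{t+1}-\grad f_\eta(U_t)\|_F^2/(\alpha\|C\|_\infty^2)$ in one pass. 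The paper's route is slightly cleaner for constant-tracking (it avoids your final $\|G_{t+1}\|_F^2\to\|\grad f_\eta(U_t)\|_F^2$ conversion and the associated extra $\epsilon^2$ term), which is why the $\gamma$ choice makes the bracket exactly $1-\tfrac12=\tfrac12$ and the per-step bound lands on $\gamma/(8\|C\|_\infty)$ and $\gamma\epsilon^2/(80\|C\|_\infty)$ without slack; your version yields the same order but with marginally looser intermediate constants.
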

\begin{proof}
Using the same argument as in the proof of Lemma~\ref{lemma:obj-progress-grad}, we have
\begin{equation}\label{inequality-descent-adagrad-first}
f_\eta(U_{t+1}) - f_\eta(U_t) \ \geq \ \gamma\langle\nabla f_\eta(U_t), \xi_{t+1}\rangle - \gamma^2((L_1^2 + 2 L_2)\|C\|_\infty + 2 \eta^{-1}L_1^2\|C\|_\infty^2)\|\xi_{t+1}\|_F^2. 
\end{equation}
Recall that $\grad f_\eta(U_t) = P_{\Tg_{U_t}\St}(\nabla f_\eta(U_t))$ and the definition of $\xi_{t+1}$, we have
\begin{eqnarray*}
\langle\nabla f_\eta(U_t), \xi_{t+1}\rangle & = & \langle \grad f_\eta(U_t), \xi_{t+1}\rangle \\
& = & \langle \grad f_\eta(U_t), \Diag(\widehat{p}_{t+1})^{-1/4}(\grad f_\eta(U_t))\Diag(\widehat{q}_{t+1})^{-1/4}\rangle \\
& & \hspace*{2 em} + \langle\grad f_\eta(U_t), \Diag(\widehat{p}_{t+1})^{-1/4}(G_{t+1} - \grad f_\eta(U_t))\Diag(\widehat{q}_{t+1})^{-1/4}\rangle.
\end{eqnarray*}
Using the Cauchy-Schwarz inequality and the nonexpansiveness of $P_{\Tg_{U_t}\St}$, we have
\begin{eqnarray*}
\|\xi_{t+1}\|_F^2 & \leq & 2\|P_{\Tg_{U_t}\St}(\Diag(\widehat{p}_{t+1})^{-1/4}(\grad f_\eta(U_t))\Diag(\widehat{q}_{t+1})^{-1/4})\|_F^2 \\ 
& & + 2\|\xi_{t+1} - P_{\Tg_{U_t}\St}(\Diag(\widehat{p}_{t+1})^{-1/4}(\grad f_\eta(U_t))\Diag(\widehat{q}_{t+1})^{-1/4})\|_F^2 \\ 
& \leq & 2\|\Diag(\widehat{p}_{t+1})^{-1/4}(\grad f_\eta(U_t))\Diag(\widehat{q}_{t+1})^{-1/4}\|_F^2 \\ 
& & + 2\|\Diag(\widehat{p}_{t+1})^{-1/4}(G_{t+1} - \grad f_\eta(U_t))\Diag(\widehat{q}_{t+1})^{-1/4}\|_F^2. 
\end{eqnarray*}
Furthermore, by the definition of $G_{t+1}$, we have $\|G_{t+1}\|_F \leq 2\|C\|_\infty$ and hence 
\begin{equation*}
\zero_d \leq \frac{\diag(G_{t+1}G_{t+1}^\top)}{k} \leq 4\|C\|_\infty^2\one_d, \qquad \zero_k \leq \frac{\diag(G_{t+1}^\top G_{t+1})}{d} \preceq 4\|C\|_\infty^2\one_k. 
\end{equation*} 
By the definition of $p_t$ and $q_t$, we have $\zero_d \preceq p_t \preceq 4\|C\|_\infty^2\one_d$ and $\zero_k \preceq q_t \preceq 4\|C\|_\infty^2\one_k$. This together with the definition of $\widehat{p}_t$ and $\widehat{q}_t$ implies that 
\begin{equation*}
\alpha\|C\|_\infty^2 \one_d \leq \widehat{p}_t \leq 4\|C\|_\infty^2\one_d, \qquad \alpha\|C\|_\infty^2 \one_k \leq \widehat{q}_t \leq 4\|C\|_\infty^2\one_k. 
\end{equation*}
This inequality together with Young's inequality implies that 
\begin{eqnarray*}
\langle\nabla f_\eta(U_t), \xi_{t+1}\rangle & \geq & \frac{\|\grad f_\eta(U_t)\|_F^2}{2\|C\|_\infty} - \frac{1}{\sqrt{\alpha}\|C\|_\infty}\left(\frac{\sqrt{\alpha}\|\grad f_\eta(U_t)\|_F^2}{4} + \frac{\|G_{t+1} - \grad f_\eta(U_t)\|_F^2}{\sqrt{\alpha}} \right) \\
& = & \frac{\|\grad f_\eta(U_t)\|_F^2}{4\|C\|_\infty} - \frac{\|G_{t+1} - \grad f_\eta(U_t)\|_F^2}{\alpha\|C\|_\infty}, 
\end{eqnarray*}
and 
\begin{equation*}
\|\xi_{t+1}\|_F^2 \ \leq \ \frac{2\|\grad f_\eta(U_t)\|_F^2}{\alpha\|C\|_\infty^2} + \frac{2\|G_{t+1} - \grad f_\eta(U_t)\|_F^2}{\alpha\|C\|_\infty^2}.
\end{equation*}
Putting these pieces together with Eq.~\eqref{inequality-descent-adagrad-first} yields that 
\begin{eqnarray}\label{inequality-descent-adagrad-second}
f_\eta(U_{t+1}) - f_\eta(U_t) & \geq & \frac{\gamma}{4\|C\|_\infty}\left(1 - \frac{8\gamma}{\alpha}\left(L_1^2 + 2L_2 + 2\eta^{-1}L_1^2\|C\|_\infty\right)\right)\|\grad f_\eta(U_t)\|_F^2 \nonumber \\
& & \hspace*{-6em} - \frac{\gamma}{\alpha\|C\|_\infty}\left(1 + \gamma(2 L_1^2 + 4L_2 + 4 \eta^{-1}L_1^2\|C\|_\infty)\right)\|G_{t+1} - \grad f_\eta(U_t)\|_F^2. 
\end{eqnarray}
Recall that $G_{t+1} = P_{\Tg_{U_t}\St}(2V_{\pi_{t+1}}U_t)$ and $\grad f_\eta(U_t) = P_{\Tg_{U_t}\St}(2V_{\tilde{\pi}_t^\star}U_t)$. Then we can apply the same argument as in the proof of Lemma~\ref{lemma:obj-progress-grad} and obtain that 
\begin{equation}\label{inequality-descent-adagrad-third}
\|G_{t+1} - \grad f_\eta(U_t)\|_F \ \leq \ \frac{\epsilon\sqrt{\alpha}}{10}. 
\end{equation}
Plugging Eq.~\eqref{inequality-descent-adagrad-third} into Eq.~\eqref{inequality-descent-adagrad-second} with the definition of $\gamma$ yields that 
\begin{equation*}
f_\eta(U_{t+1}) - f_\eta(U_t) \ \geq \ \frac{\gamma\|\grad f_\eta(U_t)\|_F^2}{8\|C\|_\infty} - \frac{\gamma\epsilon^2}{80\|C\|_\infty}.
\end{equation*}
Summing and rearranging the resulting inequality yields that 
\begin{equation*}
\frac{1}{T}\left(\sum_{t=0}^{T-1} \|\grad f_\eta(U_t)\|_F^2\right) \ \leq \ \frac{8\|C\|_\infty(f_\eta(U_T) - f_\eta(U_0))}{\gamma T} + \frac{\epsilon^2}{10}. 
\end{equation*}
This together with the definition of $\Delta_f$ implies the desired result. 
\end{proof}

\subsection{Main results}
Before proceeding to the main results, we present a technical lemma on the Hoffman's bound~\citep{Hoffman-1952-Approximate, Li-1994-Sharp} and the characterization of the Hoffman constant~\citep{Guler-1995-Approximations, Klatte-1995-Error, Wang-2014-Iteration}, which will be also crucial to the subsequent analysis.  
\begin{lemma}\label{lemma:Hoffman}
Consider a polyhedron set $\SCal = \{x \in \br^d \mid Ex=t, x \geq 0\}$. For any point $x \in \br^d$, we have 
\begin{equation*}
\|x - \proj_\SCal(x)\|_1 \leq \theta(E)\left\|\begin{bmatrix} \max\{0, -x\} \\ Ex-t \end{bmatrix}\right\|_1, 
\end{equation*}
where $\theta(E)$ is the Hoffman constant and can be represented by () 
\begin{equation*}
\theta(E) = \sup_{u, v \in \br^d} \left\{\left\|\begin{bmatrix} u \\ v \end{bmatrix}\right\|_\infty \left| \begin{array}{l} \|E^\top v - u\|_\infty = 1, u \geq 0 \\ \textnormal{The corresponding rows of $E$ to $v$’s nonzero} \\ \textnormal{elements are linearly independent.} \end{array}\right. \right\} 
\end{equation*}
\end{lemma}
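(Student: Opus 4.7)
The plan is to realize $\|x - \proj_\SCal(x)\|_1$ as the optimal value of a linear program and invoke LP duality, interpreting $\proj_\SCal(x)$ as a closest point of $\SCal$ in $\ell_1$-norm (if the Euclidean projection is intended, replacing it by any $\ell_1$-closest point preserves the stated inequality, which is the form in which the lemma is used in the sequel).

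First, I would encode $|x-z|$ componentwise via a slack $s \geq 0$ to express the distance as
\begin{equation*}
\|x - \proj_\SCal(x)\|_1 \ = \ \min_{z,\, s \geq 0} \ \one^\top s \quad \st \quad -s \leq x - z \leq s, \ Ez = t, \ z \geq 0.
\end{equation*}
Taking the LP dual, eliminating the primal variables and collecting terms (up to sign conventions on $v$) yields
\begin{equation*}
\max_{u \in \br^d_+,\, v} \ \langle \max\{0, -x\},\, u\rangle + \langle Ex - t,\, v\rangle \quad \st \quad \|E^\top v - u\|_\infty \leq 1.
\end{equation*}
Strong LP duality equates the two optimal values.

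Next, the dual optimum is attained at a vertex $(u^\star, v^\star)$ of the (polyhedral) dual feasible set. The standard characterization of vertices forces the rows of $E$ indexed by the support of $v^\star$ to be linearly independent: otherwise a perturbation of $v^\star$ along the nullspace of that submatrix would preserve feasibility and contradict extremality. The value of $\|(u^\star, v^\star)\|_\infty$ obtained by normalizing $\|E^\top v - u\|_\infty = 1$ is then exactly the quantity $\theta(E)$ defined in the lemma. Applying H\"older's inequality to the dual objective at $(u^\star, v^\star)$ yields
\begin{equation*}
\|x - \proj_\SCal(x)\|_1 \ \leq \ \|(u^\star, v^\star)\|_\infty \cdot \left\| \begin{bmatrix} \max\{0, -x\} \\ Ex - t \end{bmatrix} \right\|_1 \ \leq \ \theta(E) \left\| \begin{bmatrix} \max\{0, -x\} \\ Ex - t \end{bmatrix} \right\|_1,
\end{equation*}
which is the desired bound.

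The main obstacle is bookkeeping: deriving the dual with consistent sign conventions and carefully justifying that extremality of the dual vertex forces the linear-independence condition in the definition of $\theta(E)$. Finiteness of $\theta(E)$ (and thus meaningfulness of the inequality) follows from the finiteness of the vertex set of the dual polyhedron, so the supremum in the definition is actually a maximum.
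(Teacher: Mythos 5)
The paper does not actually prove Lemma~\ref{lemma:Hoffman}: it is stated as a known fact with citations to \citet{Hoffman-1952-Approximate, Li-1994-Sharp} for the bound and to \citet{Guler-1995-Approximations, Klatte-1995-Error, Wang-2014-Iteration} for the representation of the constant $\theta(E)$, and no proof environment follows the statement. So there is no in-paper argument to compare against; what you have written is a from-scratch derivation, and it follows the standard LP-duality route used in the cited references. That is the right strategy.

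A few points in the write-up need tightening before they constitute a correct proof. First, if you dualize the primal you set up (using $z = x - w$, $w = w^+ - w^-$, or equivalently the $|x-z|\leq s$ slack), the dual objective comes out as $\langle Ex - t, v\rangle - \langle x, u\rangle$ with $u \geq 0$ and $\|E^\top v - u\|_\infty \leq 1$, not $\langle \max\{0,-x\}, u\rangle + \langle Ex - t, v\rangle$. Since $u \geq 0$ gives $-\langle x,u\rangle \leq \langle \max\{0,-x\}, u\rangle$ componentwise, you still obtain the desired inequality as an \emph{upper bound}, but the two expressions are not equal, so ``yields'' overstates what the dualization produces; you need the one-sided comparison. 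Second, the dual feasible set $\{(u,v): u\geq 0,\ \|E^\top v - u\|_\infty \leq 1\}$ can have a nontrivial lineality space (namely $\{0\}\times\ker E^\top$), so it need not have vertices at all; you must either mod out this subspace (the objective is constant along it whenever $\SCal\neq\emptyset$, since $t\in\mathrm{range}(E)$) or restrict $v$ to $\mathrm{range}(E)$ before invoking existence of an extreme optimal point and the linear-independence condition on the active rows. Third, the substitution of the Euclidean projection by an $\ell_1$-closest point only helps if you actually want to bound $\min_{z\in\SCal}\|x-z\|_1$; that is indeed what is used in the proof of Theorem~\ref{Theorem:RGAS-Total-Iteration}, so the reinterpretation is harmless there, but it should be made explicit that you are proving the $\ell_1$-distance version rather than the statement literally as written with $\proj_\SCal$.
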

We then present the iteration complexity of the RGAS algorithm (Algorithm~\ref{alg:grad-sinkhorn}) and the RAGAS algorithm (Algorithm~\ref{alg:adagrad-sinkhorn}) in the following two theorems. 
\begin{theorem}\label{Theorem:RGAS-Total-Iteration}
Letting $\{(U_t, \pi_t)\}_{t \geq 1}$ be the iterates generated by Algorithm~\ref{alg:grad-sinkhorn}, the number of iterations required to reach $\dist(0, \subdiff f(U_t)) \leq \epsilon$ satisfies that 
\begin{equation*}
t \ = \ \bigOtil\left(\frac{k\|C\|_\infty^2}{\epsilon^2}\left(1 + \frac{\|C\|_\infty}{\epsilon}\right)^2\right). 
\end{equation*}
\end{theorem}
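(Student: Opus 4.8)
The plan is to combine the per-iteration progress bound from Lemma~\ref{lemma:obj-progress-grad} with an accounting of the stepsize $\gamma$ chosen in Algorithm~\ref{alg:grad-sinkhorn}, and then to translate the bound on the averaged Riemannian gradient $\|\grad f_\eta(U_t)\|_F$ into a bound on $\dist(0, \subdiff f(U_t))$ for the \emph{unregularized} function $f$. First I would invoke Lemma~\ref{lemma:obj-progress-grad}, which gives $\frac{1}{T}\sum_{t=0}^{T-1}\|\grad f_\eta(U_t)\|_F^2 \leq \frac{4\Delta_f}{\gamma T} + \frac{\epsilon^2}{5}$. To get a single iterate with small gradient, note that the minimum over $t$ is at most the average, so there is some $t \leq T$ with $\|\grad f_\eta(U_t)\|_F^2 \leq \frac{4\Delta_f}{\gamma T} + \frac{\epsilon^2}{5}$; choosing $T = \Theta(\Delta_f/(\gamma\epsilon^2))$ makes the first term $O(\epsilon^2)$, so that $\|\grad f_\eta(U_t)\|_F = O(\epsilon)$.

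Next I would bound $\Delta_f$ and unpack $\gamma$. Since $f_\eta(U) = \min_\pi \{\langle UU^\top, V_\pi\rangle - \eta H(\pi)\}$, and $\langle UU^\top, V_\pi\rangle \in [0, \|C\|_\infty]$ while $H(\pi) \in [-\,?, \log(n^2)]$ roughly, $\Delta_f = O(\|C\|_\infty + \eta\log n)$; with the prescribed $\eta = \Theta(\epsilon/\log n)$ this is $O(\|C\|_\infty)$ (treating $\bar\theta$ as $\widetilde O(1)$). The stepsize is $\gamma = \frac{1}{(8L_1^2 + 16L_2)\|C\|_\infty + 16\eta^{-1}L_1^2\|C\|_\infty^2}$; since $L_1, L_2 = O(1)$ and $\eta^{-1} = \widetilde\Theta(1/\epsilon)$, the dominant term is $\eta^{-1}\|C\|_\infty^2 = \widetilde\Theta(\|C\|_\infty^2/\epsilon)$, so $\gamma^{-1} = \widetilde O\!\left(\|C\|_\infty + \|C\|_\infty^2/\epsilon\right) = \widetilde O\!\left(\|C\|_\infty(1 + \|C\|_\infty/\epsilon)\right)$. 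Substituting $T = \Theta(\Delta_f \gamma^{-1}\epsilon^{-2}) = \widetilde O\!\left(\|C\|_\infty \cdot \|C\|_\infty(1+\|C\|_\infty/\epsilon) \cdot \epsilon^{-2}\right) = \widetilde O\!\left(\|C\|_\infty^2 \epsilon^{-2}(1 + \|C\|_\infty/\epsilon)\right)$, which is already close to the claimed $\widetilde O\!\left(k\|C\|_\infty^2\epsilon^{-2}(1+\|C\|_\infty/\epsilon)^2\right)$ up to the $k$ factor and one extra power of $(1+\|C\|_\infty/\epsilon)$.

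The remaining (and most delicate) step is the passage from $\|\grad f_\eta(U_t)\|_F \leq O(\epsilon)$ to $\dist(0, \subdiff f(U_t)) \leq \epsilon$. Here I would use that $\grad f_\eta(U_t) = P_{\Tg_{U_t}\St}(2V_{\pi^\star_\eta(U_t)}U_t)$ where $\pi^\star_\eta(U_t)$ is the entropic-OT optimum, while the Riemannian subdifferential of $f$ at $U_t$ is $P_{\Tg_{U_t}\St}$ applied to $\conv\{2V_\pi U_t : \pi \text{ optimal for the unregularized OT at } U_t\}$. The key estimate is that the entropic optimizer $\pi^\star_\eta(U_t)$ is $O(\eta\log n)$-close in objective to the unregularized OT value, hence — by a Hoffman/error-bound argument on the transportation polytope (Lemma~\ref{lemma:Hoffman}), with the Hoffman constant $\bar\theta$ entering — it is $\widetilde O(\sqrt{\eta})$ or $\widetilde O(\eta)$ close in $\ell_1$ to the face of genuine OT optima; since $V_\pi$ is $\|C\|_\infty$-Lipschitz in $\|\pi\|_1$, $2V_{\pi^\star_\eta(U_t)}U_t$ lies within $\widetilde O(\|C\|_\infty\sqrt{\eta})$ of $\subdiff f(U_t)$. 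Plugging $\eta = \widetilde\Theta(\epsilon/\log n)$ gives the extra $\widetilde O(\sqrt{\|C\|_\infty\epsilon})$-type term, which is why the prescribed constants and the min$\{1,1/\bar\theta\}$ scaling in $\eta$ appear, and why the final complexity picks up the additional factor. I expect the main obstacle to be making this last approximation-of-the-subdifferential argument rigorous and tracking the Hoffman constant $\bar\theta$ carefully, since it requires relating the entropic-regularization gap to the distance to the optimal face of a polytope uniformly over $U \in \St(d,k)$; the rest is bookkeeping on the constants in $\gamma$, $\eta$, $\widehat\epsilon$ and $\Delta_f$.
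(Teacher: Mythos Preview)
Your overall strategy---invoke Lemma~\ref{lemma:obj-progress-grad}, unpack $\gamma$ and $\Delta_f$, then bridge from $\|\grad f_\eta(U_t)\|_F$ to $\dist(0,\subdiff f(U_t))$ via a Hoffman-type error bound---is exactly the route the paper takes. Two points of comparison are worth making explicit.

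\textbf{The Hoffman step is linear, not square-root.} You hedge between ``$\widetilde O(\sqrt{\eta})$ or $\widetilde O(\eta)$'' for $\|\pi^\star_\eta(U_t)-\pi^\star(U_t)\|_1$. The paper uses the linear bound: by Lemma~\ref{lemma:Hoffman}, the $\ell_1$-distance of $\tilde\pi_t^\star$ to the optimal face is at most $\bar\theta$ times the (normalized) objective gap, and that gap is $\leq 2\eta\log(n)/\|C\|_\infty$. With $\eta=\frac{\epsilon\min\{1,1/\bar\theta\}}{40\log(n)}$ this yields $\|\tilde\pi_t^\star-\pi_t^\star\|_1\leq \epsilon/(20\|C\|_\infty)$ and hence $\|(V_{\tilde\pi_t^\star}-V_{\pi_t^\star})U_t\|_F\leq\epsilon/20$, so that $\dist(0,\subdiff f(U_t))\leq \|\grad f_\eta(U_t)\|_F+\epsilon/10$. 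A square-root bound would \emph{not} suffice here (it would force $\epsilon\gtrsim\|C\|_\infty^2$), so this is a place where your uncertainty should be resolved, not just flagged. Your concern about uniformity of $\bar\theta$ over $U\in\St(d,k)$ is legitimate; the paper treats $\bar\theta$ as a fixed problem constant (it also appears in the initialization of $\eta$) and does not elaborate further.

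\textbf{Where the extra factors come from.} You attribute the $k$ and the additional $(1+\|C\|_\infty/\epsilon)$ factor in the theorem to the Hoffman step. In the paper they arise instead from the bound on $\Delta_f$: rather than your direct range estimate $\Delta_f=O(\|C\|_\infty+\eta\log n)=O(\|C\|_\infty)$, the paper applies the descent inequality of Lemma~\ref{lemma:key-inequality} together with $\max_{U\in\St(d,k)}\|U-U_0\|_F^2=O(k)$ to obtain $\Delta_f=O\!\left(k(\|C\|_\infty+\|C\|_\infty^2/\eta)\right)=\widetilde O\!\left(k\|C\|_\infty(1+\|C\|_\infty/\epsilon)\right)$. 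Plugging this (looser) $\Delta_f$ into $T=\Theta(\Delta_f/(\gamma\epsilon^2))$ is precisely what produces the stated $\widetilde O\!\left(k\|C\|_\infty^2\epsilon^{-2}(1+\|C\|_\infty/\epsilon)^2\right)$. Your direct bound on $\Delta_f$ is in fact sharper and, combined with the rest of the argument, would yield the tighter iteration count $\widetilde O\!\left(\|C\|_\infty^2\epsilon^{-2}(1+\|C\|_\infty/\epsilon)\right)$ you computed; this still proves the theorem as stated.
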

\begin{proof}
Let $\tilde{\pi}_t^\star$ be a minimzer of entropy-regularized OT problem and $\pi_t^\star$ be the projection of $\tilde{\pi}_t^\star$ onto the optimal solution set of unregularized OT problem. More specifically, the unregularized OT problem is a LP and the optimal solution set is a polyhedron set ($t^\star$ is an optimal objective value) 
\begin{equation*}
\SCal = \{\pi \in \br^{d \times d} \mid \pi \in \Pi(\mu, \nu), \ \langle U_tU_t^\top, V_\pi\rangle = t^\star\}. 
\end{equation*}
Then we have
\begin{equation*}
\tilde{\pi}_t^\star \in \argmin_{\pi \in \Pi(\mu, \nu)} \ \langle U_tU_t^\top, V_\pi\rangle - \eta H(\pi), \qquad \pi_t^\star = \proj(\tilde{\pi}_t^\star) \in \argmin_{\pi \in \Pi(\mu, \nu)} \ \langle U_tU_t^\top, V_\pi\rangle.   
\end{equation*}
By definition, we have $\nabla f_\eta(U_t) = 2V_{\tilde{\pi}_t^\star}U_t$ and $2V_{\pi_t^\star}U_t \in \partial f(U_t)$. This together with the definition of Riemannian gradient and Riemannian subdifferential yields that 
\begin{eqnarray*}
\grad f_\eta(U_t) & = & P_{\Tg_{U_t}\St}(2V_{\tilde{\pi}_t^\star}U_t), \\
\subdiff f(U_t) & \ni & P_{\Tg_{U_t}\St}(2V_{\pi_t^\star}U_t). 
\end{eqnarray*}
Therefore, we conclude that 
\begin{eqnarray*}
\dist(0, \subdiff f(U_t)) & \leq & \|P_{\Tg_{U_t}\St}(2V_{\pi_t^\star}U_t)\|_F \\ 
& & \hspace*{-6em} \ \leq \ \|P_{\Tg_{U_t}\St}(2V_{\tilde{\pi}_t^\star}U_t)\|_F + \|P_{\Tg_{U_t}\St}(2V_{\pi_t^\star}U_t) - P_{\Tg_{U_t}\St}(2V_{\tilde{\pi}_t^\star}U_t)\|_F \\
& & \hspace*{-6em} \ \leq \ \|\grad f_\eta(U_t)\|_F + 2\|(V_{\pi_t^\star} - V_{\tilde{\pi}_t^\star})U_t\|_F. 
\end{eqnarray*}
Note that scaling the objective function by $\|C\|_\infty$ will not change the optimal solution set. Since $U_t \in \St(d, k)$, each entry of the coefficient in the normalized objective function is less than 1. By Lemma~\ref{lemma:Hoffman}, we obtain that there exists a constant $\bar{\theta}$ independent of $\|C\|_\infty$ such that  
\begin{equation*}
\|\tilde{\pi}_t^\star - \pi_t^\star\|_1 \ \leq \ \bar{\theta}\left\|\left\langle U_tU_t^\top, \frac{V_{\tilde{\pi}_t^\star} - V_{\pi_t^\star}}{\|C\|_\infty}\right\rangle\right\|_1. 
\end{equation*}
By the definition of $\tilde{\pi}_t^\star$, we have $\langle U_tU_t^\top, V_{\tilde{\pi}_t^\star}\rangle - \eta H(\tilde{\pi}_t^\star) \leq \langle U_tU_t^\top, V_{\pi_t^\star}\rangle - \eta H(\pi_t^\star)$. Since $0 \leq H(\pi) \leq 2\log(n)$ and $\eta = \frac{\epsilon\min\{1, 1/\bar{\theta}\}}{40\log(n)}$, we have
\begin{equation*}
\tilde{\pi}_t^\star \ \in \ \Pi(\mu, \nu), \qquad 0 \ \leq \ \langle U_tU_t^\top, V_{\tilde{\pi}_t^\star} - V_{\pi_t^\star}\rangle \ \leq \ \epsilon/(20\bar{\theta}). 
\end{equation*}
Putting these pieces together yields that 
\begin{equation*}
\|\tilde{\pi}_t^\star - \pi_t^\star\|_1 \ \leq \ \frac{\epsilon}{20\|C\|_\infty\bar{\theta}}. 
\end{equation*}
By the definition of $U_t$ and $V_\pi$, we have
\begin{equation*}
\|(V_{\pi_t^\star} - V_{\tilde{\pi}_t^\star})U_t\|_F \ = \ \|V_{\pi_t^\star} - V_{\tilde{\pi}_t^\star}\|_F \ \leq \ \bar{\theta}\|C\|_\infty\|\tilde{\pi}_t^\star - \pi_t^\star\|_1 \ \leq \ \frac{\epsilon}{20}. 
\end{equation*}
Putting these pieces together yields 
\begin{equation*}
\dist(0, \subdiff f(U_t)) \ \leq \ \|\grad f_\eta(U_t)\|_F + \frac{\epsilon}{10}. 
\end{equation*}
Combining this inequality with Lemma~\ref{lemma:obj-progress-grad} and the Cauchy-Schwarz inequality, we have
\begin{eqnarray*}
\frac{1}{T}\left(\sum_{t=0}^{T-1} [\dist(0, \subdiff f(U_t))]^2\right) & \leq & \frac{2}{T}\left(\sum_{t=0}^{T-1} \|\grad f_\eta(U_t)\|_F^2\right) + \frac{\epsilon^2}{50} \ \leq \ \frac{8\Delta_f}{\gamma T} + \frac{2\epsilon^2}{5} + \frac{\epsilon^2}{50} \\
& \leq & \frac{8\Delta_f}{\gamma T} + \frac{\epsilon^2}{2}.  
\end{eqnarray*}
Given that $\dist(0, \subdiff f(U_t)) > \epsilon$ for all $t = 0, 1, \ldots, T-1$ and 
\begin{equation*}
\frac{1}{\gamma} \ = \ (8 L_1^2 + 16L_2)\|C\|_\infty + \frac{16 L_1^2\|C\|_\infty^2}{\eta} \ = \ (8 L_1^2 + 16L_2)\|C\|_\infty + \frac{640 L_1^2\max\{1, \bar{\theta}\}\|C\|_\infty^2\log(n)}{\epsilon}. 
\end{equation*} 
we conclude that the upper bound $T$ must satisfy
\begin{equation*}
\epsilon^2 \ \leq \ \frac{16\Delta_f}{T}\left((8 L_1^2 + 16L_2)\|C\|_\infty + \frac{640 L_1^2\max\{1, \bar{\theta}\}\|C\|_\infty^2\log(n)}{\epsilon}\right). 
\end{equation*}
Using Lemma~\ref{lemma:key-inequality}, we have
\begin{eqnarray*}
\Delta_f & \leq & \left( \|C\|_\infty + \frac{2 \|C\|_\infty^2}{\eta}\right)\left(\max_{U \in \St(d, k)} \|U - U_0\|_F^2\right) + 2\|C\|_\infty\left(\max_{U \in \St(d, k)} \|U - U_0\|_F\right) \\ 
& = & k\left(6\|C\|_\infty + \frac{4 \|C\|_\infty^2}{\eta}\right) \ = \ k\left(6\|C\|_\infty + \frac{160\max\{1, \bar{\theta}\}\|C\|_\infty^2\log(n)}{\epsilon}\right).
\end{eqnarray*}
Putting these pieces together implies the desired result. 
\end{proof}
\begin{theorem}\label{Theorem:RAGAS-Total-Iteration}
Letting $\{(U_t, \pi_t)\}_{t \geq 1}$ be the iterates generated by Algorithm~\ref{alg:adagrad-sinkhorn}, the number of iterations required to reach $\dist(0, \subdiff f(U_t)) \leq \epsilon$ satisfies 
\begin{equation*}
t \ = \ \bigOtil\left(\frac{k\|C\|_\infty^2}{\epsilon^2}\left(1 + \frac{\|C\|_\infty}{\epsilon}\right)^2\right). 
\end{equation*}
\end{theorem}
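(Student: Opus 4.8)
The plan is to replay the proof of Theorem~\ref{Theorem:RGAS-Total-Iteration} almost verbatim, replacing the descent estimate of Lemma~\ref{lemma:obj-progress-grad} by its adaptive counterpart Lemma~\ref{lemma:obj-progress-adagrad} and inserting the stepsize $\gamma$ prescribed in Algorithm~\ref{alg:adagrad-sinkhorn}. The first point is that the passage from $\|\grad f_\eta(U_t)\|_F$ to the genuine stationarity measure $\dist(0, \subdiff f(U_t))$ is unchanged, because the entropic parameter $\eta = \frac{\epsilon\min\{1, 1/\bar{\theta}\}}{40\log(n)}$ is the same in both algorithms: introducing $\tilde{\pi}_t^\star$, the entropic OT minimizer associated with $U_t$, and $\pi_t^\star$, its projection onto the LP-optimal set $\SCal = \{\pi \in \Pi(\mu,\nu) \mid \langle U_tU_t^\top, V_\pi\rangle = t^\star\}$, and then invoking the Hoffman bound of Lemma~\ref{lemma:Hoffman} together with $0 \leq H(\pi) \leq 2\log(n)$, one obtains exactly as before that
\begin{equation*}
\dist(0, \subdiff f(U_t)) \ \leq \ \|\grad f_\eta(U_t)\|_F + \frac{\epsilon}{10} \qquad \text{for all } t \geq 0.
\end{equation*}

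Next I would combine this with Lemma~\ref{lemma:obj-progress-adagrad} and the elementary inequality $(a+b)^2 \leq 2a^2 + 2b^2$ to get
\begin{equation*}
\frac{1}{T}\sum_{t=0}^{T-1}[\dist(0,\subdiff f(U_t))]^2 \ \leq \ \frac{2}{T}\sum_{t=0}^{T-1}\|\grad f_\eta(U_t)\|_F^2 + \frac{\epsilon^2}{50} \ \leq \ \frac{16\|C\|_\infty\Delta_f}{\gamma T} + \frac{\epsilon^2}{2},
\end{equation*}
where $\Delta_f = \max_{U \in \St(d,k)} f_\eta(U) - f_\eta(U_0)$. Were $\dist(0,\subdiff f(U_t)) > \epsilon$ to hold for every $t = 0, 1, \ldots, T-1$, this would force $\epsilon^2 \leq \frac{16\|C\|_\infty\Delta_f}{\gamma T} + \frac{\epsilon^2}{2}$, hence $T \leq \frac{32\|C\|_\infty\Delta_f}{\gamma\epsilon^2}$; it therefore only remains to bound this quantity.

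To close it out I would substitute the parameters of Algorithm~\ref{alg:adagrad-sinkhorn}: with $\eta^{-1} = \frac{40\max\{1,\bar{\theta}\}\log(n)}{\epsilon}$ the stepsize obeys $\gamma^{-1} = \alpha^{-1}\bigl(16L_1^2 + 32L_2 + 32\eta^{-1}L_1^2\|C\|_\infty\bigr) = \bigOtil\bigl(1 + \|C\|_\infty/\epsilon\bigr)$, while Lemma~\ref{lemma:key-inequality} and Lemma~\ref{lemma:lip-grad}, just as in the proof of Theorem~\ref{Theorem:RGAS-Total-Iteration}, give $\Delta_f \leq k\bigl(6\|C\|_\infty + 4\|C\|_\infty^2/\eta\bigr) = \bigOtil\bigl(k\|C\|_\infty(1 + \|C\|_\infty/\epsilon)\bigr)$. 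Multiplying the factors $\|C\|_\infty/\epsilon^2$, $\gamma^{-1}$, and $\Delta_f$ collapses $T \leq 32\|C\|_\infty\Delta_f/(\gamma\epsilon^2)$ into $\bigOtil\bigl(k\|C\|_\infty^2\epsilon^{-2}(1 + \|C\|_\infty/\epsilon)^2\bigr)$, which is the claimed bound. I expect the only delicate point --- and it is bookkeeping rather than a real obstacle --- to be tracking the powers of $\|C\|_\infty$: Lemma~\ref{lemma:obj-progress-adagrad} carries an extra factor $\|C\|_\infty$ compared to its RGAS analogue, but $\gamma^{-1}$ for Algorithm~\ref{alg:adagrad-sinkhorn} lacks the leading $\|C\|_\infty$ that appears in $\gamma^{-1}$ for Algorithm~\ref{alg:grad-sinkhorn}, so the two effects cancel and the final exponents of $\|C\|_\infty$ and $1/\epsilon$ coincide with those in Theorem~\ref{Theorem:RGAS-Total-Iteration}; the constant $\alpha \in (0,1)$ and the retraction constants $L_1, L_2$ are absorbed into $\bigOtil(\cdot)$ throughout.
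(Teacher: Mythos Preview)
Your proposal is correct and follows the paper's proof essentially verbatim: the paper likewise reuses the Hoffman-bound argument from Theorem~\ref{Theorem:RGAS-Total-Iteration} to get $\dist(0,\subdiff f(U_t)) \leq \|\grad f_\eta(U_t)\|_F + \epsilon/10$, plugs in Lemma~\ref{lemma:obj-progress-adagrad}, and then bounds $\Delta_f$ via Lemma~\ref{lemma:key-inequality} and $\gamma^{-1}$ via the algorithm's initialization. Your remark about the extra $\|C\|_\infty$ in Lemma~\ref{lemma:obj-progress-adagrad} cancelling the missing $\|C\|_\infty$ in the RAGAS stepsize is precisely the bookkeeping the paper performs implicitly.
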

\begin{proof}
Using the same argument as in the proof of Theorem~\ref{Theorem:RGAS-Total-Iteration}, we have
\begin{equation*}
\dist(0, \subdiff f(U_t)) \ \leq \ \|\grad f_\eta(U_t)\|_F + \frac{\epsilon}{10}. 
\end{equation*}
Combining this inequality with Lemma~\ref{lemma:obj-progress-adagrad} and the Cauchy-Schwarz inequality, we have
\begin{eqnarray*}
\frac{1}{T}\left(\sum_{t=0}^{T-1} [\dist(0, \subdiff f(U_t))]^2\right) & \leq & \frac{2}{T}\left(\sum_{t=0}^{T-1} \|\grad f_\eta(U_t)\|_F^2\right) + \frac{\epsilon^2}{50} \\
& & \hspace*{-10em} \leq \ \frac{16\|C\|_\infty\Delta_f}{\gamma T} + \frac{\epsilon^2}{5} + \frac{\epsilon^2}{50} \ \leq \ \frac{16\|C\|_\infty\Delta_f}{\gamma T} + \frac{\epsilon^2}{2}.  
\end{eqnarray*}
Given that $\dist(0, \subdiff f(U_t)) > \epsilon$ for all $t = 0, 1, \ldots, T-1$ and 
\begin{equation*}
\frac{1}{\gamma} \ = \ 16L_1^2 + 32L_2 + \frac{1280L_1^2\max\{1, \bar{\theta}\}\|C\|_\infty\log(n)}{\epsilon}, 
\end{equation*} 
we conclude that the upper bound $T$ must satisfies 
\begin{equation*}
\epsilon^2 \ \leq \ \frac{64\|C\|_\infty\Delta_f}{T}\left(16L_1^2 + 32L_2 + \frac{1280L_1^2\max\{1, \bar{\theta}\}\|C\|_\infty\log(n)}{\epsilon}\right). 
\end{equation*}
Using Lemma~\ref{lemma:key-inequality}, we have
\begin{eqnarray*}
\Delta_f & \leq & \left( \|C\|_\infty + \frac{2 \|C\|_\infty^2}{\eta}\right)\left(\max_{U \in \St(d, k)} \|U - U_0\|_F^2\right) \ = \ k\left(2 \|C\|_\infty + \frac{4 \|C\|_\infty^2}{\eta}\right) \\
& = & k\left(2 \|C\|_\infty + \frac{160\max\{1, \bar{\theta}\}\|C\|_\infty^2\log(n)}{\epsilon}\right). 
\end{eqnarray*}
Putting these pieces together implies the desired result. 
\end{proof}
From Theorem~\ref{Theorem:RGAS-Total-Iteration} and~\ref{Theorem:RAGAS-Total-Iteration}, Algorithm~\ref{alg:grad-sinkhorn} and~\ref{alg:adagrad-sinkhorn} achieve the same iteration complexity. Furthermore, the number of arithmetic operations at each loop of Algorithm~\ref{alg:grad-sinkhorn} and~\ref{alg:adagrad-sinkhorn} are also the same. Thus, the complexity bound of Algorithm~\ref{alg:adagrad-sinkhorn} is the same as that of Algorithm~\ref{alg:grad-sinkhorn}. 

\begin{theorem}\label{Theorem:RGAS-RAGAS-Total-Complexity}
Either the RGAS algorithm or the RAGAS algorithm returns an $\epsilon$-approximate pair of optimal subspace projection and optimal transportation plan of the computation of the PRW distance in Eq.~\eqref{prob:main} (cf. Definition~\ref{def:optimal-pair}) in
\begin{equation*}
\bigOtil\left(\left(\frac{n^2d\|C\|_\infty^2}{\epsilon^2} + \frac{n^2\|C\|_\infty^6}{\epsilon^6} + \frac{n^2\|C\|_\infty^{10}}{\epsilon^{10}}\right)\left(1 + \frac{\|C\|_\infty}{\epsilon}\right)^2\right)
\end{equation*}
arithmetic operations. 
\end{theorem}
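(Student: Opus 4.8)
The plan is to combine the outer-loop iteration bounds of Theorem~\ref{Theorem:RGAS-Total-Iteration} and Theorem~\ref{Theorem:RAGAS-Total-Iteration} with a per-iteration arithmetic-cost count, after first checking that, at an outer iteration where $U_t$ is an $\epsilon$-approximate optimal subspace projection, the accompanying plan $\pi_{t+1}$ is automatically an $\epsilon$-approximate optimal transportation plan for $U_t$ in the sense of Definition~\ref{def:approx_transportation_plan}, so that $(U_t,\pi_{t+1})$ is an $\epsilon$-approximate pair in the sense of Definition~\ref{def:optimal-pair}.

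First I would record that, by Theorem~\ref{Theorem:RGAS-Total-Iteration} (resp.\ Theorem~\ref{Theorem:RAGAS-Total-Iteration}), after $T=\bigOtil(k\|C\|_\infty^2\epsilon^{-2}(1+\|C\|_\infty/\epsilon)^2)$ outer iterations some iterate satisfies $\dist(0,\subdiff f(U_t))\leq\epsilon$, which with $k=\bigOtil(1)$ reads $T=\bigOtil(\|C\|_\infty^2\epsilon^{-2}(1+\|C\|_\infty/\epsilon)^2)$. Next, for that $U_t$, the inner call returns $\pi_{t+1}$ with $\|\pi_{t+1}-\tilde{\pi}_t^\star\|_1\leq\widehat{\epsilon}$, where $\tilde{\pi}_t^\star$ is the entropic-regularized optimal plan for $U_t$. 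Writing $\tilde{C}^{(t)}_{ij}=\|U_t^\top x_i-U_t^\top y_j\|^2$, so that $\|\tilde{C}^{(t)}\|_\infty\leq\|C\|_\infty$ by nonexpansiveness of orthogonal projection and $\langle U_tU_t^\top,V_\pi\rangle=\langle\tilde{C}^{(t)},\pi\rangle$, I would estimate $\langle\tilde{C}^{(t)},\pi_{t+1}\rangle\leq\langle\tilde{C}^{(t)},\tilde{\pi}_t^\star\rangle+\|\tilde{C}^{(t)}\|_\infty\widehat{\epsilon}$ and combine it with the bound $\langle\tilde{C}^{(t)},\tilde{\pi}_t^\star\rangle-\min_{\pi\in\Pi(\mu,\nu)}\langle\tilde{C}^{(t)},\pi\rangle\leq\epsilon/(20\bar{\theta})$ already derived inside the proof of Theorem~\ref{Theorem:RGAS-Total-Iteration} (it follows from $0\leq H(\pi)\leq 2\log n$ and the choice of $\eta$). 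Since $\widehat{\epsilon}=\epsilon/(10\|C\|_\infty)$ for RGAS and $\widehat{\epsilon}=\epsilon\sqrt{\alpha}/(20\|C\|_\infty)$ for RAGAS, this gives $\langle\tilde{C}^{(t)},\pi_{t+1}\rangle\leq\min_{\pi\in\Pi(\mu,\nu)}\langle\tilde{C}^{(t)},\pi\rangle+\epsilon$, so $(U_t,\pi_{t+1})$ is the required pair.

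Then I would cost one outer iteration. The Riemannian part --- forming $\tilde{C}^{(t)}$, the gradient surrogate $2V_{\pi_{t+1}}U_t$, its tangent-space projection, the adaptive reweighting in RAGAS, and one retraction --- reduces to products of $d\times n$, $n\times n$, and $d\times k$ matrices with $k=\bigOtil(1)$, hence costs $\bigOtil(n^2d)$ operations. The dominant cost is the inner call \textsc{regOT}: by adapting the finite-time analysis of Sinkhorn's algorithm of~\citet{Dvurechensky-2018-Computational} to a cost matrix of magnitude $\|\tilde{C}^{(t)}\|_\infty\leq\|C\|_\infty$, to a fixed regularization $\eta=\bigOtil(\epsilon)$ decoupled from the target accuracy, and to the stopping rule $\|\pi_{t+1}-\tilde{\pi}_t^\star\|_1\leq\widehat{\epsilon}$ --- the last passage using $\eta$-strong convexity of $\pi\mapsto g(U_t,\pi)$ in $\ell_1$ to convert a primal-suboptimality guarantee into an $\ell_1$ distance to $\tilde{\pi}_t^\star$ --- one obtains $\bigOtil(n^2(\|C\|_\infty^4\epsilon^{-4}+\|C\|_\infty^8\epsilon^{-8}))$ operations per inner call after substituting the prescribed $\eta$ and $\widehat{\epsilon}$. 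Thus one outer iteration costs $\bigOtil(n^2d+n^2\|C\|_\infty^4\epsilon^{-4}+n^2\|C\|_\infty^8\epsilon^{-8})$. Multiplying by $T=\bigOtil(\|C\|_\infty^2\epsilon^{-2}(1+\|C\|_\infty/\epsilon)^2)$ and simplifying ($\|C\|_\infty^2\epsilon^{-2}\cdot n^2d=n^2d\|C\|_\infty^2\epsilon^{-2}$, $\|C\|_\infty^2\epsilon^{-2}\cdot n^2\|C\|_\infty^4\epsilon^{-4}=n^2\|C\|_\infty^6\epsilon^{-6}$, $\|C\|_\infty^2\epsilon^{-2}\cdot n^2\|C\|_\infty^8\epsilon^{-8}=n^2\|C\|_\infty^{10}\epsilon^{-10}$) gives the stated bound; since the excerpt already notes that RGAS and RAGAS share both the outer-iteration count and the per-loop arithmetic cost, the conclusion holds for both. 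I expect the main obstacle to be the second ingredient of the per-iteration cost --- re-deriving the Sinkhorn complexity for the projected OT problem with a fixed, decoupled $\eta$ and an $\ell_1$-to-$\tilde{\pi}_t^\star$ stopping criterion, which is where all the powers of $\|C\|_\infty$ and $1/\epsilon$ are generated; everything else is routine bookkeeping on top of Theorems~\ref{Theorem:RGAS-Total-Iteration} and~\ref{Theorem:RAGAS-Total-Iteration}.
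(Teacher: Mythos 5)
Your proposal follows the same decomposition the paper uses: take the outer-loop iteration count from Theorems~\ref{Theorem:RGAS-Total-Iteration} and~\ref{Theorem:RAGAS-Total-Iteration}, verify that the accompanying $\pi_{t+1}$ is an $\epsilon$-approximate optimal transportation plan for $U_t$ (so $(U_t,\pi_{t+1})$ is a valid pair in the sense of Definition~\ref{def:optimal-pair}), cost one outer iteration as $\bigOtil(n^2d)$ Riemannian work plus one Sinkhorn call, and multiply. The bookkeeping you carry out is correct, and the verification of the transportation-plan side is the same two-piece estimate the paper uses (entropy gap via the choice of $\eta$, plus the stopping criterion $\|\pi_{t+1}-\tilde{\pi}_t^\star\|_1\le\widehat{\epsilon}$).

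The genuine gap is the Sinkhorn per-call bound $\bigOtil\bigl(n^2(\|C\|_\infty^4\epsilon^{-4}+\|C\|_\infty^8\epsilon^{-8})\bigr)$, which you assert but do not derive, and which you yourself flag as the expected obstacle. In the paper this is isolated as a separate claim whose proof is the bulk of the work, and your one-sentence sketch of it is not merely incomplete but mechanistically wrong: you say the $\ell_1$ distance to $\tilde{\pi}_t^\star$ is controlled via $\eta$-strong convexity of $g(U_t,\cdot)$ alone, but the Sinkhorn iterate $B(u_j,v_j)$ is not feasible (its marginals only approximately equal $r,c$), so the primal strong-convexity inequality cannot be applied to it directly. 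The paper's derivation needs in addition the rounding step (\citet[Algorithm~2 and Lemma~7]{Altschuler-2017-Near}) to produce a feasible $\tilde\pi$ with $\|\tilde\pi-B(u_j,v_j)\|_1$ bounded by the marginal violation, Pinsker's inequality to control that violation by the dual gap, and the dual-gap rate $f(u_j,v_j)-f(u^\star,v^\star)\le 2R^2/j$ from~\citet[Theorem~1]{Dvurechensky-2018-Computational}. The resulting estimate $\|\tilde\pi-\tilde\pi^\star\|_1\le c_1(f_j-f^\star)^{1/2}+c_2\sqrt{R}\,(f_j-f^\star)^{1/4}$ has two terms with different powers, and after substituting $R=\bigOtil(\|C\|_\infty/\eta)=\bigOtil(\|C\|_\infty/\epsilon)$ and $\widehat{\epsilon}=\bigOtil(\epsilon/\|C\|_\infty)$ these give iteration counts $\bigOtil(\|C\|_\infty^4/\epsilon^4)$ and $\bigOtil(\|C\|_\infty^8/\epsilon^8)$, respectively. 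The strong-convexity piece you cite accounts only for the $1/2$-power term, i.e.\ only for the $\|C\|_\infty^4/\epsilon^4$ contribution; the dominant $\|C\|_\infty^8/\epsilon^8$ contribution comes from the $1/4$-power rounding term that your sketch omits. So the overall route matches the paper, but the complexity of one \textsc{regOT} call must be supplied, and it cannot be obtained from strong convexity alone.
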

\begin{proof}
First, Theorem~\ref{Theorem:RGAS-Total-Iteration} and~\ref{Theorem:RAGAS-Total-Iteration} imply that both algorithms achieve the same the iteration complexity as follows, 
\begin{equation}\label{RGAS-RAGAS-iteration}
t \ = \ \bigOtil\left(\frac{k\|C\|_\infty^2}{\epsilon^2}\left(1 + \frac{\|C\|_\infty}{\epsilon}\right)^2\right). 
\end{equation}
This implies that $U_t$ is an $\epsilon$-approximate optimal subspace projection of problem~\eqref{prob:Stiefel-nonsmooth}. By the definition of $\widehat{\epsilon}$ and using the stopping criterion of the subroutine $\textsc{regOT}(\{(x_i, r_i)\}_{i \in [n]}, \{(y_j, c_j)\}_{j \in [n]}, U_t, \eta, \widehat{\epsilon})$, we have $\pi_{t+1} \in \Pi(\mu, \nu)$ and 
\begin{equation*}
0 \ \leq \ \langle U_tU_t^\top, V_{\pi_{t+1}} - V_{\tilde{\pi}_t^\star}\rangle \ \leq \ \|C\|_\infty\|\pi_{t+1} - \tilde{\pi}_t^\star\|_1 \ \leq \ \|C\|_\infty\widehat{\epsilon} \ \leq \ \epsilon/2. 
\end{equation*}
where $\tilde{\pi}_t^\star$ is an unique minimzer of entropy-regularized OT problem. Furthermore, by the definition of $\tilde{\pi}_t^\star$, we have $\langle U_tU_t^\top, V_{\tilde{\pi}_t^\star}\rangle - \eta H(\tilde{\pi}_t^\star) \leq \langle U_tU_t^\top, V_{\pi_t^\star}\rangle - \eta H(\pi_t^\star)$. Since $0 \leq H(\pi) \leq 2\log(n)$ and $\eta = \frac{\epsilon\min\{1, 1/\bar{\theta}\}}{40\log(n)}$, we have
\begin{equation*}
\tilde{\pi}_t^\star \ \in \ \Pi(\mu, \nu), \qquad 0 \ \leq \ \langle U_tU_t^\top, V_{\tilde{\pi}_t^\star} - V_{\pi_t^\star}\rangle \ \leq \ \epsilon/2. 
\end{equation*}
Putting these pieces together yields that $\pi_{t+1}$ is an $\epsilon$-approximate optimal transportation plan for the subspace projection $U_t$. Therefore, we conclude that $(U_t, \pi_{t+1}) \in \St(d, k) \times \Pi(\mu, \nu)$ is an \emph{$\epsilon$-approximate pair of optimal subspace projection and optimal transportation plan} of problem~\eqref{prob:main}. 

The remaining step is to analyze the complexity bound. Indeed, we first claim that the number of arithmetic operations required by the Sinkhorn iteration at each loop is upper bounded by 
\begin{equation}\label{RGAS-complexity-Sinkhorn}
\bigOtil\left(\frac{n^2\|C\|_\infty^4}{\epsilon^4} + \frac{n^2\|C\|_\infty^8}{\epsilon^8}\right).
\end{equation}
Furthermore, while \textbf{Step 5} and \textbf{Step 6} in Algorithm~\ref{alg:grad-sinkhorn} can be implemented in $\bigO(dk^2 + k^3)$ arithmetic operations, we still need to construct $V_{\pi_{t+1}} U_t$. A naive approach suggests to first construct $V_{\pi_{t+1}}$ using $\bigO(n^2 dk)$ arithmetic operations and then perform the matrix multiplication using $\bigO(d^2k)$ arithmetic operations. This is computationally prohibitive since $d$ can be very large in practice. In contrast, we observe that 
\begin{equation*}
V_{\pi_{t+1}} U_t \ = \ \sum_{i=1}^n \sum_{j=1}^n (\pi_{t+1})_{i, j} (x_i - y_j)(x_i - y_j)^\top U_t. 
\end{equation*}
Since $x_i - y_j \in \br^d$, it will take $\bigO(dk)$ arithmetic operations for computing $(x_i - y_j)(x_i - y_j)^\top U_t$ for all $(i, j) \in [n] \times n$. This implies that the total number of arithmetic operations is $\bigO(n^2dk)$. Therefore, the number of arithmetic operations at each loop is
\begin{equation}\label{RGAS-arithmetic-operation}
\bigOtil\left(n^2 dk + dk^2 + k^3 + \frac{n^2\|C\|_\infty^4}{\epsilon^4} + \frac{n^2\|C\|_\infty^8}{\epsilon^8}\right). 
\end{equation}
Putting Eq.~\eqref{RGAS-RAGAS-iteration} and Eq.~\eqref{RGAS-arithmetic-operation} together with $k = \bigOtil(1)$ yields the desired result.

\paragraph{Proof of claim~\eqref{RGAS-complexity-Sinkhorn}.} The proof is based on the combination of several existing results proved by~\citet{Altschuler-2017-Near} and~\citet{Dvurechensky-2018-Computational}. For the sake of completeness, we provide the details. More specifically, we consider solving the entropic regularized OT problem as follows, 
\begin{equation*}
\min_{\pi \in \br_+^{n \times n}} \ \langle C, \pi\rangle - \eta H(\pi), \quad \st \ r(\pi) = r, \ c(\pi) = c.
\end{equation*}
We leverage the Sinkhorn iteration which aims at minimizing the following function 
\begin{equation*}
f(u, v) = \one_n^\top B(u, v)\one_n - \langle u, r\rangle - \langle v, c\rangle, \quad \textnormal{where } B(u, v) := \diag(u)e^{-\frac{C}{\eta}}\diag(v). 
\end{equation*}
From the update scheme of Sinkhorn iteration, it is clear that $\one_n^\top B(u_j, v_j)\one_n = 1$ for each iteration $j$. By a straightforward calculation, we have 
\begin{eqnarray*}
& & \langle C, B(u_j, v_j)\rangle - \eta H(B(u_j, v_j)) - \left(\langle C, B(u^\star, v^\star)\rangle - \eta H(B(u^\star, v^\star))\right) \\
& \leq & \eta(f(u_j, v_j) - f(u^\star, v^\star)) + \eta R(\|r(B(u_j, v_j))-r\|_1 + \|c(B(u_j, v_j))-c\|_1)
\end{eqnarray*}
where $(u^\star, v^\star)$ is a maximizer of $f(u, v)$ over $\br^n \times \br^n$ and $R>0$ is defined in~\citet[Lemma~1]{Dvurechensky-2018-Computational}. Since the entropic regularization function is strongly convex with respect to $\ell_1$-norm over the probability simplex and $B(u_j, v_j)$ can be vectorized as a probability vector, we have
\begin{equation*}
\|B(u_j, v_j) - B(u^\star, v^\star)\|_1^2 \ \leq \ 2(f(u_j, v_j) - f(u^\star, v^\star)) + 2R(\|r(B(u_j, v_j))-r\|_1 + \|c(B(u_j, v_j))-c\|_1). 
\end{equation*}
On one hand, by the definition of $(u^\star, v^\star)$ and $B(\cdot, \cdot)$, it is clear that $B(u^\star, v^\star)$ is an unique optimal solution of the entropic regularized OT problem and we further denote it as $\tilde{\pi}^\star$. On the other hand, the final output $\pi \in \Pi(\mu, \nu)$ is achieved by rounding $B(u_j, v_j)$ to $\Pi(\mu, \nu)$ for some $j$ using~\citet[Algorithm~2]{Altschuler-2017-Near} and~\citet[Lemma~7]{Altschuler-2017-Near} guarantees that 
\begin{equation*}
\|\tilde{\pi} - B(u_j, v_j)\|_1 \leq 2(\|r(B(u_j, v_j))-r\|_1 + \|c(B(u_j, v_j))-c\|_1). 
\end{equation*}
Again, from the update scheme of Sinkhorn iteration and By Pinsker's inequality, we have
\begin{equation*}
\sqrt{2\left(f(u_j, v_j) - f(u^\star, v^\star)\right)} \ \geq \ \|r(B(u_j, v_j))-r\|_1 + \|c(B(u_j, v_j))-c\|_1.  
\end{equation*}
Putting these pieces together yields that 
\begin{equation*}
\|\tilde{\pi} - \tilde{\pi}^\star\|_1 \ \leq \ c_1\left(f(u_j, v_j) - f(u^\star, v^\star)\right)^{1/2} +c_2\sqrt{R}\left(f(u_j, v_j) - f(u^\star, v^\star)\right)^{1/4}
\end{equation*}
where $c_1, c_2 > 0$ are constants. Then, by using Eq.(12) in~\citet[Theorem~1]{Dvurechensky-2018-Computational}, we have $f(u_j, v_j) - f(u^\star, v^\star) \leq \frac{2R^2}{j}$. This together with the definition of $R$ yields that the number of iterations required by the Sinkhorn iteration is 
\begin{equation*}
\bigOtil\left(\frac{\|C\|_\infty^4}{\epsilon^4} + \frac{\|C\|_\infty^8}{\epsilon^8}\right).
\end{equation*}
This completes the proof. 
\end{proof}
\begin{remark}
Theorem~\ref{Theorem:RGAS-RAGAS-Total-Complexity} is surprising in that it provides a finite-time guarantee for finding an $\epsilon$-stationary point of a nonsmooth function $f$ over a nonconvex constraint set. This is impossible for general nonconvex nonsmooth optimization even in the Euclidean setting~\citep{Zhang-2020-Complexity, Shamir-2020-Can}. Our results demonstrate that the max-min optimization model in Eq.~\eqref{prob:main} has a special structure that makes fast computation possible. 
\end{remark} 
\begin{remark}
Note that our algorithms only return an approximate stationary point for the nonconvex max-min optimization model in Eq.~\eqref{prob:main}, which needs to be evaluated in practice. It is also interesting to compare such stationary point to the global optimal solution of computing the SRW distance. This is very challenging in general due to multiple stationary points of non-convex max-min optimization model in Eq.~\eqref{prob:main} but possible if the data has certain structure. We leave it to the future work. 
\end{remark}

\begin{figure}[!t]
\centering\hspace*{-1em}
\includegraphics[clip, trim=0 0 0 0, width=.96\textwidth]{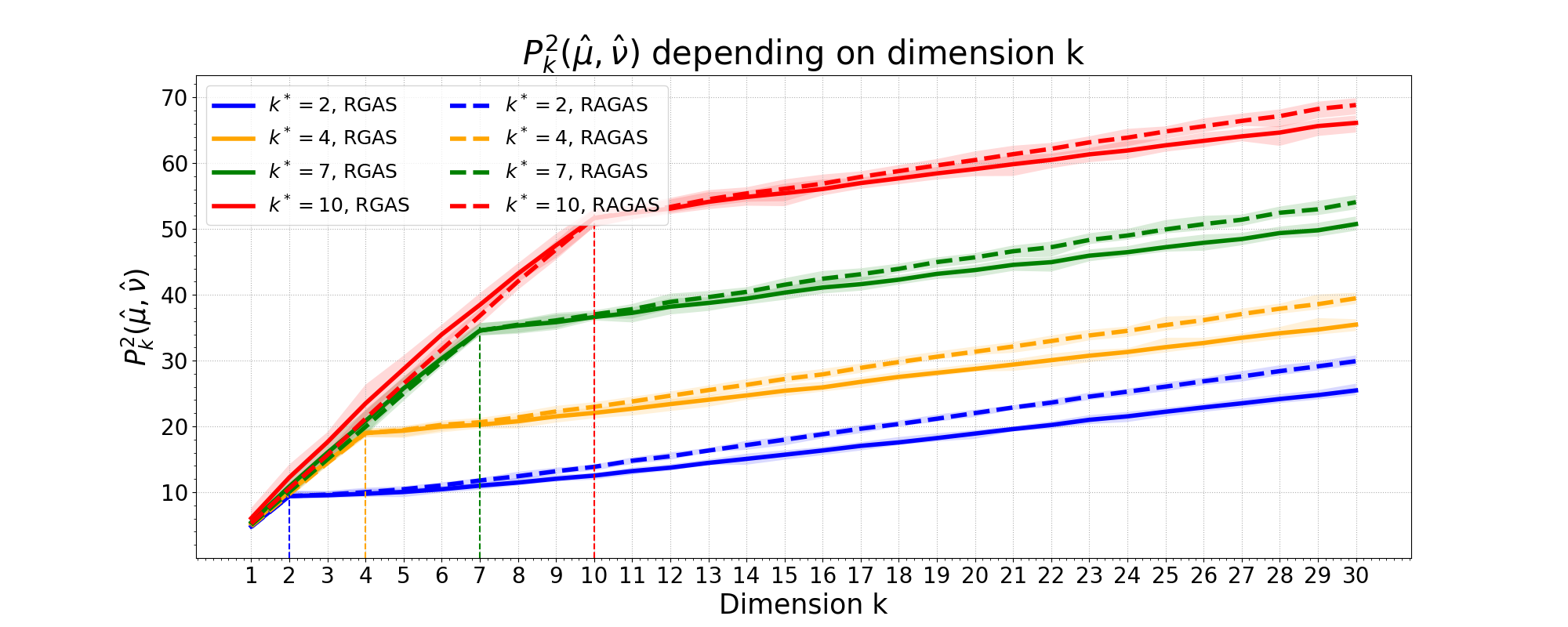}
\caption{Computation of $\PCal_k^2(\hat{\mu},\hat{\nu})$ depending on the dimension $k \in [d]$ and $k^* \in \{2, 4, 7, 10\}$, where $\hat{\mu}$ and $\hat{\nu}$ stand for the empirical measures of $\mu$ and $\nu$ with 100 points. The solid and dash curves are the computation of $\PCal_k^2(\hat{\mu},\hat{\nu})$ with the RGAS and RAGAS algorithms, respectively. Each curve is the mean over 100 samples with shaded area covering the min and max values.}\label{fig:exp1_dim_k}
\end{figure}

\section{Experiments}\label{sec:experiment}
We conduct numerical experiments to evaluate the computation of the PRW distance by the RGAS and RAGAS algorithms. The baseline approaches include the computation of SRW distance with the Frank-Wolfe algorithm\footnote{Available in https://github.com/francoispierrepaty/SubspaceRobustWasserstein.}~\citep{Paty-2019-Subspace} and the computation of Wasserstein distance with the POT software package\footnote{Available in https://github.com/PythonOT/POT}~\citep{Flamary-2017-Pot}. For the RGAS and RAGAS algorithms, we set $\gamma = 0.01$ unless stated otherwise, $\beta = 0.8$ and $\alpha=10^{-6}$. For the experiments on the MNIST digits, we run the feature extractor pretrained in PyTorch 1.5. All the experiments are implemented in Python 3.7 with Numpy 1.18 on a ThinkPad X1 with an Intel Core i7-10710U (6 cores and 12 threads) and 16GB memory, equipped with Ubuntu 20.04.
\begin{figure}[!t]
\centering
\includegraphics[width=0.45\textwidth]{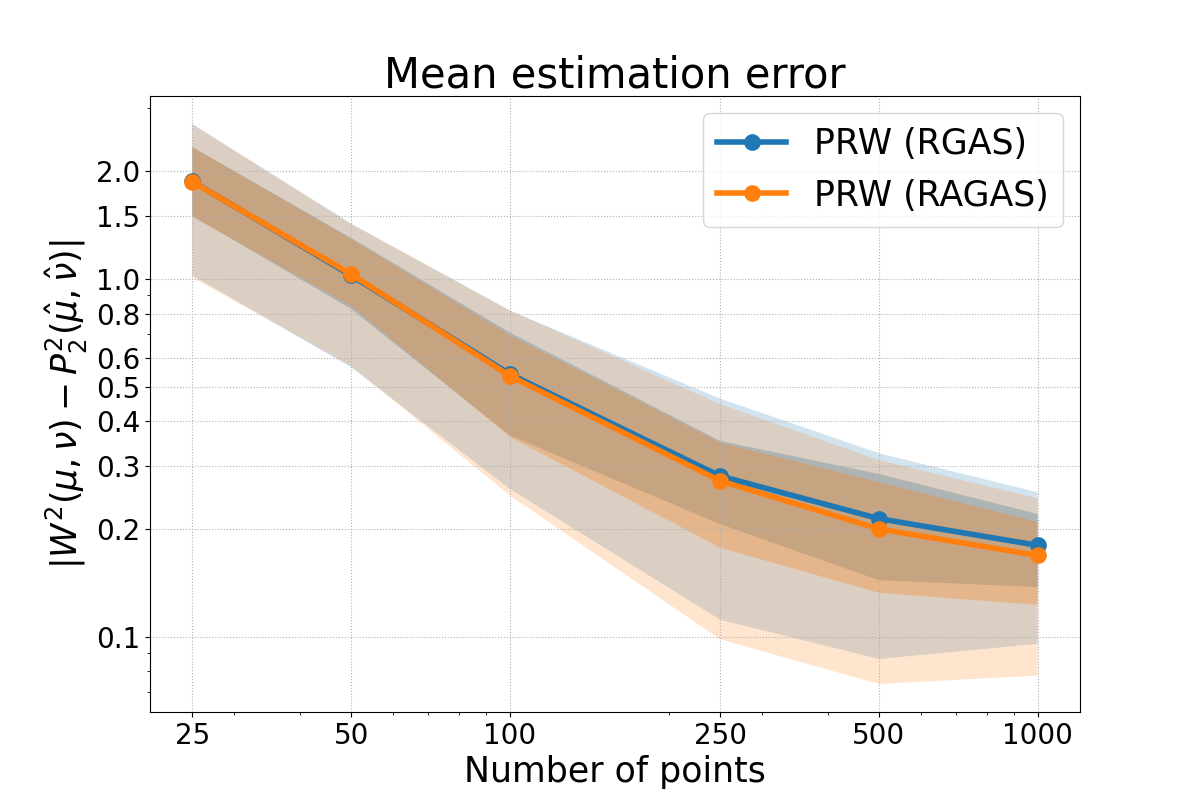}
\includegraphics[width=0.45\textwidth]{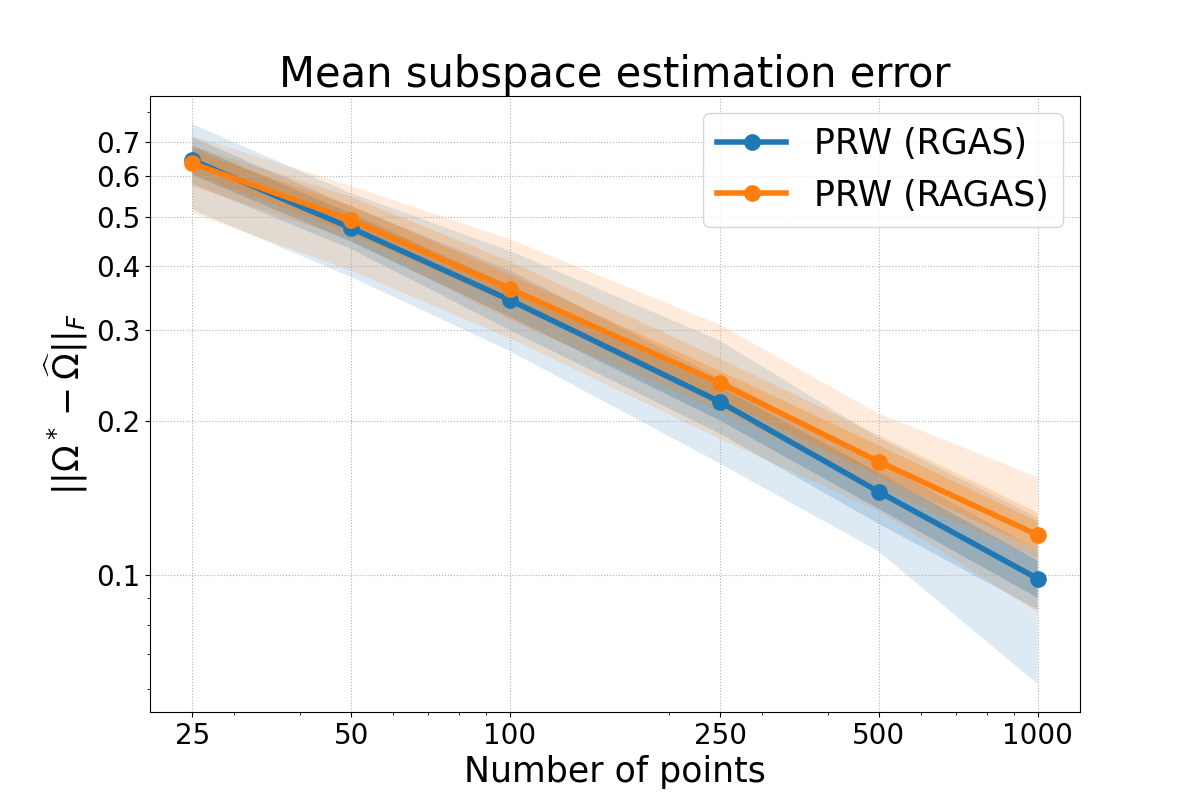}
\caption{Mean estimation error (left) and mean subspace estimation error (right), with varying number of points $n$. The shaded areas represent the 10\%-90\% and 25\%-75\% quantiles over 100 samples.}\label{fig:exp_cube_1}
\end{figure}
\begin{figure}[!t]
\centering
\includegraphics[clip, trim=10 10 10 10, width=0.32\textwidth]{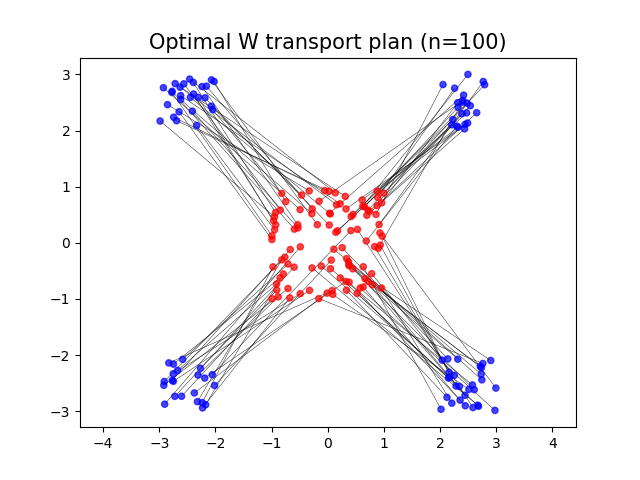}
\includegraphics[clip, trim=10 10 10 10, width=0.32\textwidth]{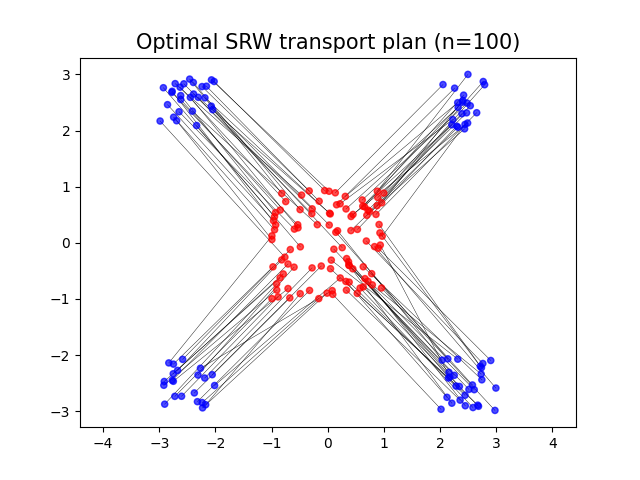}
\includegraphics[clip, trim=10 10 10 10, width=0.32\textwidth]{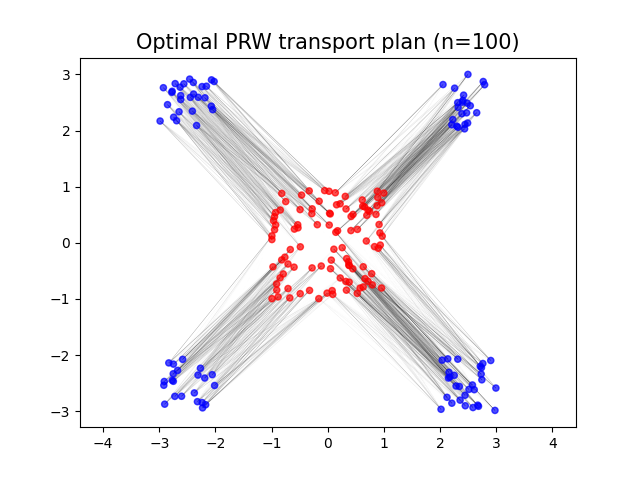}
\includegraphics[clip, trim=10 10 10 10, width=0.32\textwidth]{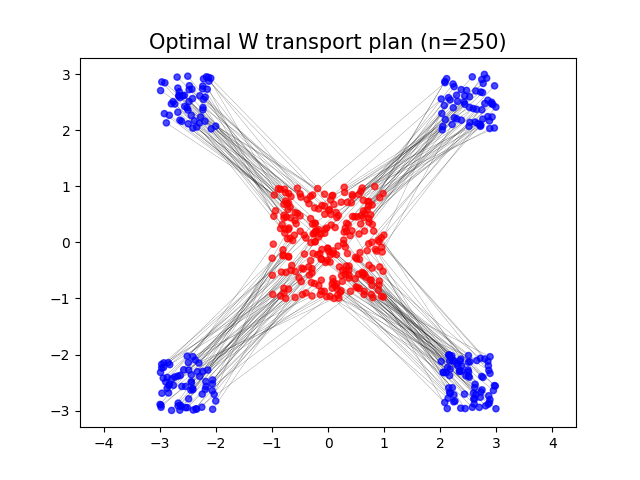}
\includegraphics[clip, trim=10 10 10 10, width=0.32\textwidth]{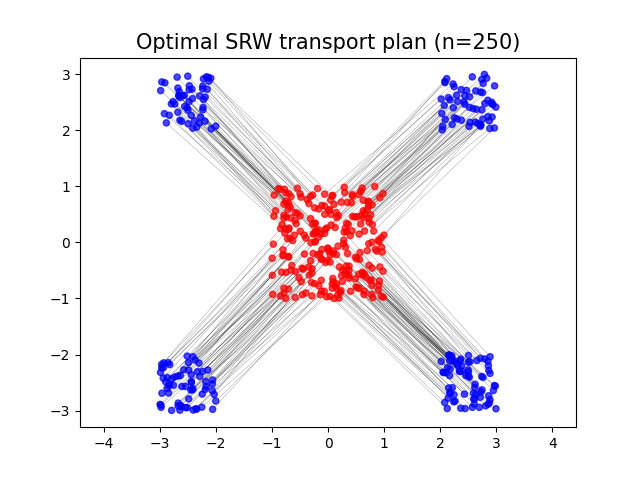}
\includegraphics[clip, trim=10 10 10 10, width=0.32\textwidth]{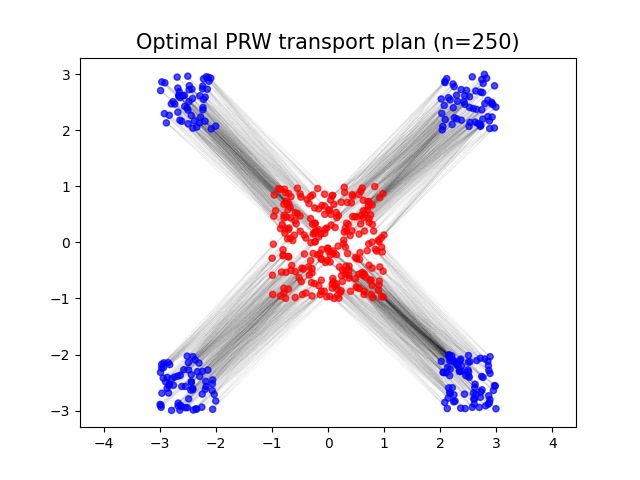}
\caption{Fragmented hypercube with $(n, d) = (100, 30)$ (above) and $(n, d) = (250, 30)$ (bottom). Optimal mappings in the Wasserstein space (left), in the SRW space (middle) and the PRW space (right). Geodesics in the PRW space are robust to statistical noise.}
\label{fig:exp_cube_2}
\end{figure}
\paragraph{Fragmented hypercube.} We conduct our first experiment on the fragmented hypercube which is also used to evaluate the SRW distance~\citep{Paty-2019-Subspace} and FactoredOT~\citep{Forrow-2019-Statistical}. In particular, we consider $\mu = \UCal([-1,1]^d)$ which is an uniform distribution over an hypercube and $\nu = T_{\#} \mu$ which is the push-forward of $\mu$ under the map $T(x) = x + 2\sign(x)\odot(\sum_{k=1}^{k^*} e_k)$. Note that $\sign(\cdot)$ is taken element-wise, $k^* \in [d]$ and $(e_1, \ldots, e_d)$ is the canonical basis of $\br^d$. By the definition, $T$ divides $[-1,1]^d$ into four different hyper-rectangles, as well as serves as a subgradient of convex function. This together with Brenier's theorem (cf.\ \citet[Theorem~2.12]{Villani-2003-Topics}) implies that $T$ is an optimal transport map between $\mu$ and $\nu = T_{\#}\mu$ with $\WCal_2^2(\mu, \nu) = 4k^*$. Notice that the displacement vector $T(x) - x$ is optimal for any $x \in \br^d$ and always belongs to the $k^*$-dimensional subspace spanned by $\{e_j\}_{j \in [k^*]}$. Putting these pieces together yields that $\PCal_k^2(\mu, \nu) = 4k^*$ for any $k \geq k^*$. 

Figure~\ref{fig:exp1_dim_k} presents the behavior of $\PCal_k^2(\widehat{\mu}, \widehat{\nu})$ as a function of $k^* \in \{2, 4, 7, 10\}$, where $\widehat{\mu}$ and $\widehat{\nu}$ are empirical distributions corresponding to $\mu$ and $\nu$, respectively. The sequence is concave and increases slowly after $k = k^*$, which makes sense since the last $d - k^*$ dimensions only represent noise. The rigorious argument for the SRW distance is presented in~\citet[Proposition~3]{Paty-2019-Subspace} but hard to be extended here since the PRW distance can not be characterized as a sum of eigenvalues.

Figure~\ref{fig:exp_cube_1} presents mean estimation error and mean subspace estimation error with varying number of points $n \in \{25, 50, 100, 250, 500, 1000\}$. In particular, $\widehat{U}$ is an approximate optimal subspace projection achieved by computing $\PCal_k^2(\widehat{\mu}, \widehat{\nu})$ with our algorithms and $\Omega^*$ is the optimal projection matrix onto the $k^*$-dimensional subspace spanned by $\{e_j\}_{j \in [k^*]}$. We set $k^*=2$ here and $\widehat{\mu}$ and $\hat{\nu}$ are constructed from $\mu$ and $\nu$ respectively with $n$ points each. The quality of solutions obtained by the RGAS and RAGAS algorithms are roughly the same.

Figure~\ref{fig:exp_cube_2} presents the optimal transport plan in the Wasserstein space (left), the optimal transport plan in the SRW space (middle), and the optimal transport plan in the PRW space (right) between $\widehat{\mu}$ and $\widehat{\nu}$. We consider two cases: $n = 100$ and $n = 250$, in our experiment and observe that our results are consistent with~\citet[Figure~5]{Paty-2019-Subspace}, showing that both PRW and SRW distances share important properties with the Wasserstein distance.  

\paragraph{Robustness of $\PCal_k$ to noise.} We conduct our second experiment on the Gaussian distribution\footnote{~\citet{Paty-2019-Subspace} conducted this experiment with their projected supergradient ascent algorithm (cf.\ ~\citet[Algorithm 1]{Paty-2019-Subspace}) with the \textsc{emd} solver from the POT software package. For a fair comparison, we use Riemannian supergradient ascent algorithm (cf.\ Algorithm~\ref{alg:supergrad-simplex}) with the \textsc{emd} solver here; see Appendix for the details.}. In particular, we consider $\mu = \NCal(0, \Sigma_1)$ and $\nu = \NCal(0, \Sigma_2)$ where $\Sigma_1, \Sigma_2 \in \br^{d \times d}$ are positive semidefinite  matrices of rank $k^*$. This implies that either of the support of $\mu$ and $\nu$ is the $k^*$-dimensional subspace of $\br^d$. Even though the supports of $\mu$ and $\nu$ can be different, their union is included in a $2k^*$-dimensional subspace. Putting these pieces together yields that $\PCal_k^2(\mu, \nu) = \WCal_2^2(\mu, \nu)$ for any $k \geq 2k^*$. In our experiment, we set $d = 20$ and sample 100 independent couples of covariance matrices $(\Sigma_1, \Sigma_2)$, where each has independently a Wishart distribution with $k^* = 5$ degrees of freedom. Then we construct the empirical measures $\widehat{\mu}$ and $\widehat{\nu}$ by drawing $n = 100$ points from $\NCal(0, \Sigma_1)$ and $\NCal(0, \Sigma_2)$. 

Figure~\ref{fig:exp_noise_1} presents the mean value of $\SCal_k^2(\widehat{\mu},\widehat{\nu})/\WCal_2^2(\widehat{\mu}, \widehat{\nu})$ (left) and $\PCal_k^2(\widehat{\mu},\widehat{\nu})/\WCal_2^2(\widehat{\mu}, \widehat{\nu})$ (right) over 100 samples with varying $k$. We plot the curves for both noise-free and noisy data, where white noise ($\NCal(0, I_d)$) was added to each data point. With moderate noise, the data is approximately on two $5$-dimensional subspaces and both the SRW and PRW distances do not vary too much. Our results are consistent with the SRW distance presented in~\citet[Figure~6]{Paty-2019-Subspace}, showing that the PRW distance is also robust to random perturbation of the data. 
\begin{figure}[!t]
\centering
\includegraphics[clip, trim=0 0 20 0, width=0.45\textwidth]{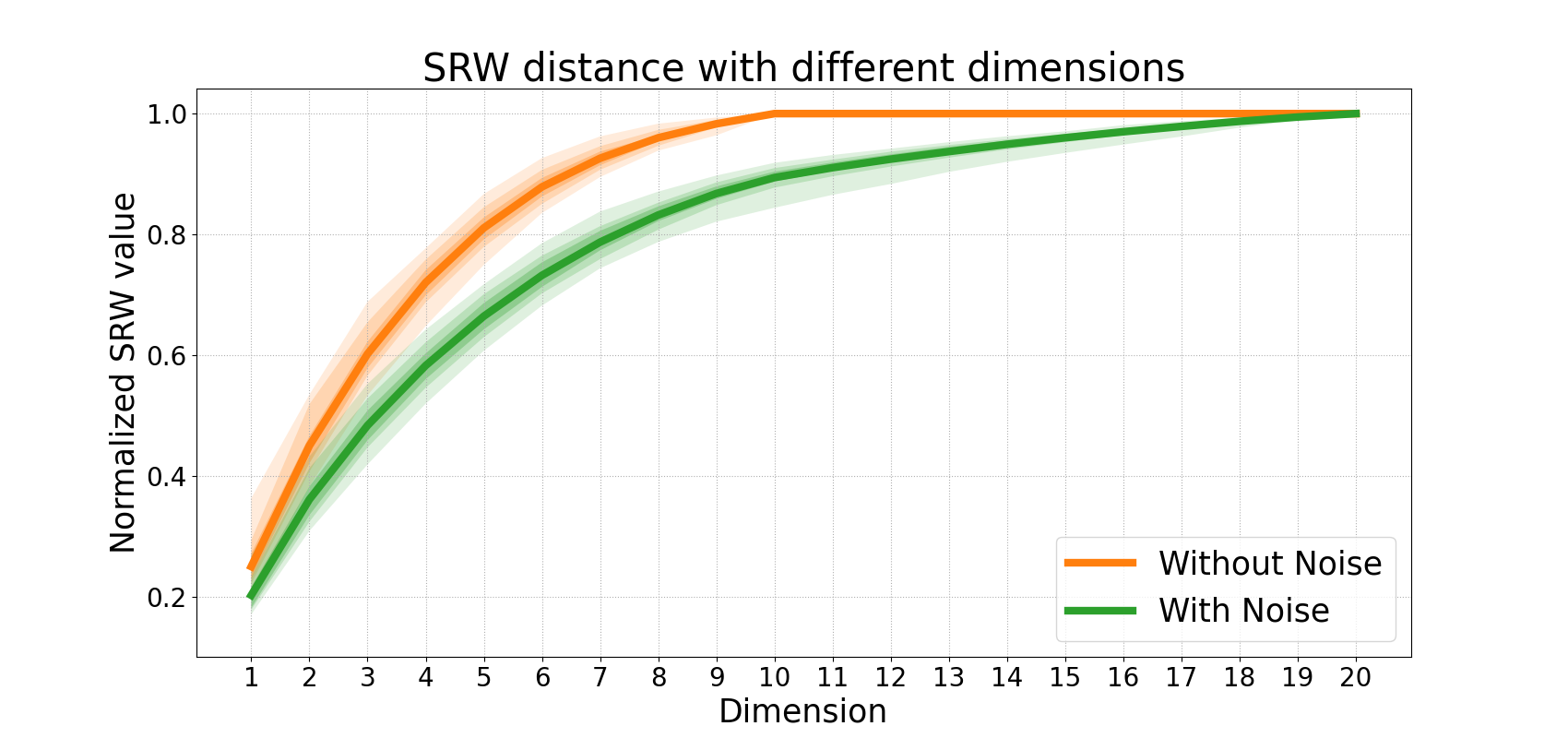}
\includegraphics[clip, trim=20 0 0 0, width=0.45\textwidth]{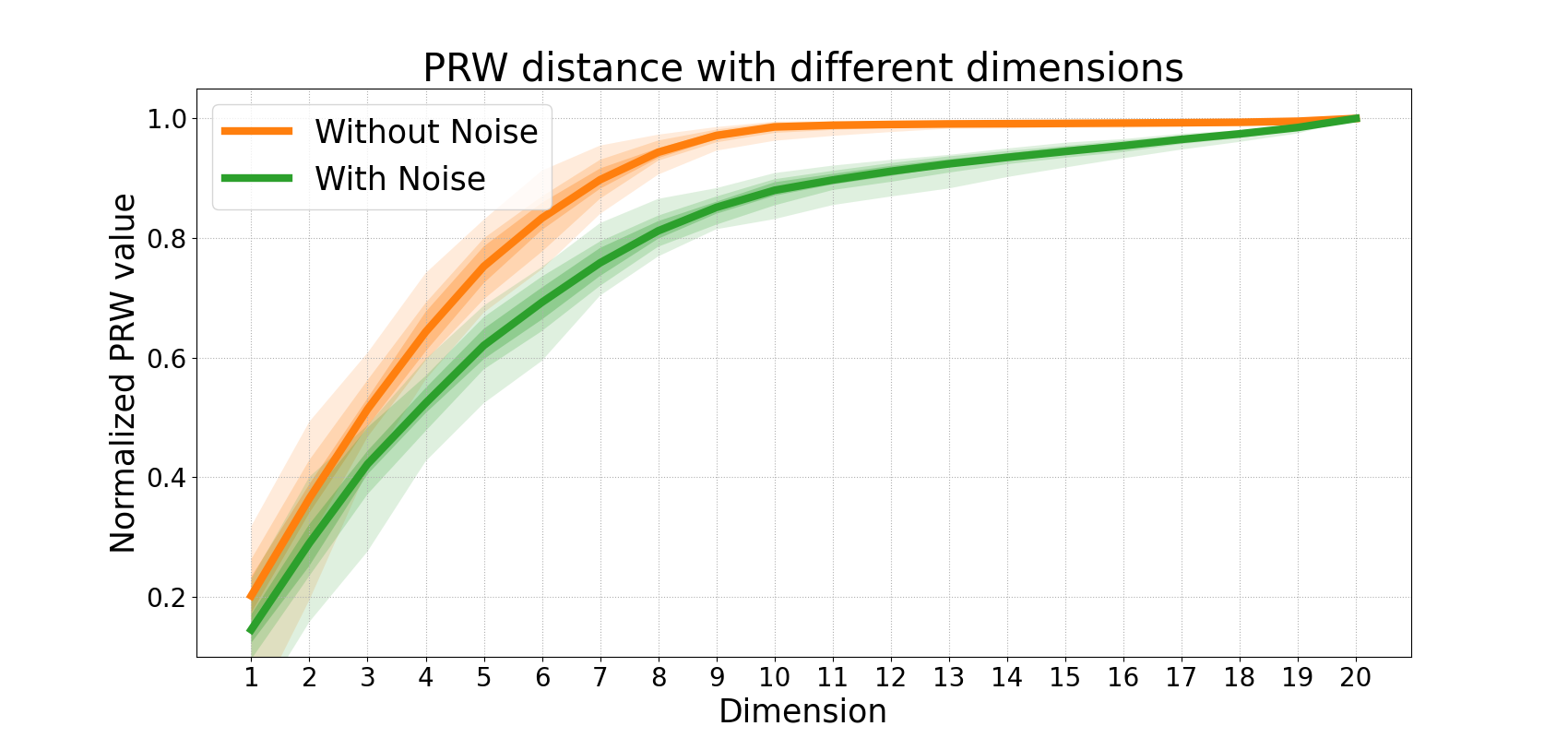}
\caption{Mean normalized SRW distance (left) and mean normalized PRW distance (right) as a function of dimension. The shaded area shows the 10\%-90\% and 25\%-75\% quantiles over the 100 samples.}\label{fig:exp_noise_1}
\end{figure}
\begin{figure}[!t]
\centering
\includegraphics[width=0.45\textwidth]{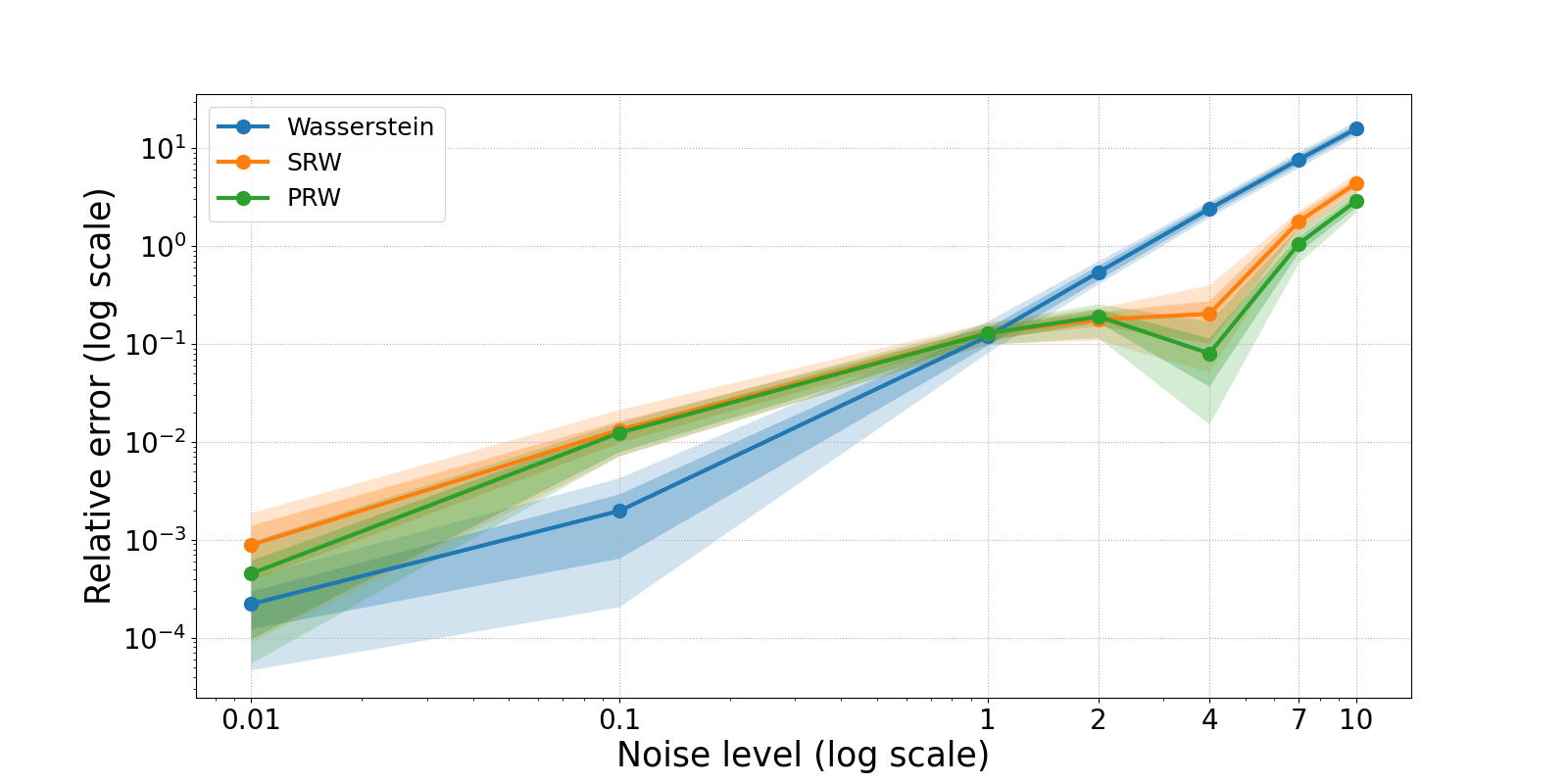}
\includegraphics[width=0.45\textwidth]{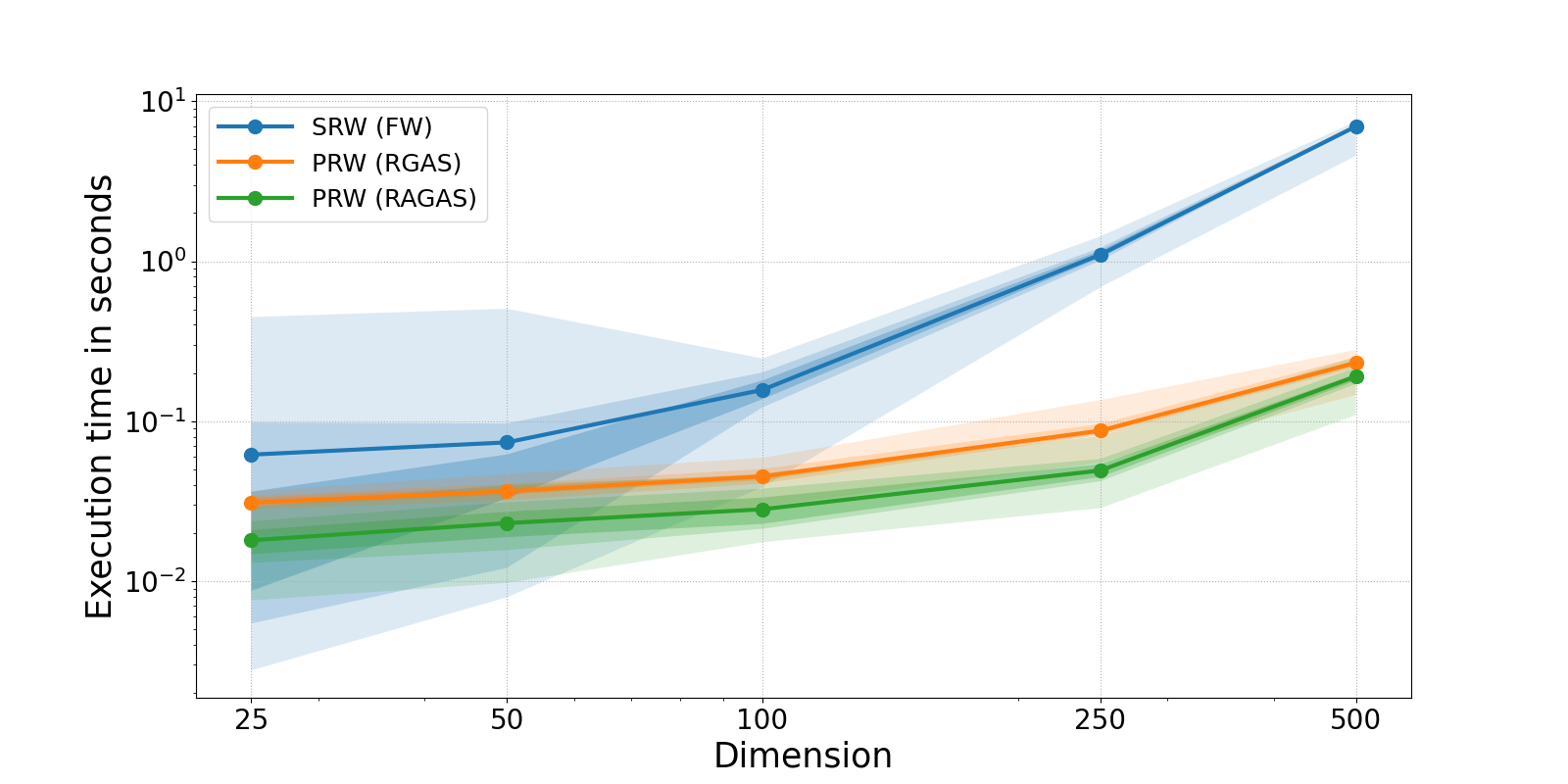}
\caption{(Left) Comparison of mean relative errors over 100 samples, depending on the noise level. The shaded areas show the min-max values and the 10\%-90\% quantiles; (Right) Comparisons of mean computation times on CPU. The shaded areas show the minimum and maximum values over 50 runs.}\label{fig:exp_noise_2}
\end{figure}

Figure~\ref{fig:exp_noise_2} (left) presents the comparison of mean relative errors over 100 samples as the noise level varies. In particular, we construct the empirical measures $\widehat{\mu}_\sigma$ and $\hat{\nu}_\sigma$ by gradually adding Gaussian noise $\sigma\NCal(0, I_d)$ to the points. The relative errors of the Wasserstein, SRW and PRW distances are defined the same as in~\citet[Section~6.3]{Paty-2019-Subspace}. For small noise level, the imprecision in the computation of the SRW distance adds to the error caused by the added noise, while the computation of the PRW distance with our algorithms is less sensitive to such noise. When the noise has the moderate to high variance, the PRW distance is the most robust to noise, followed by the SRW distance, both of which outperform the Wasserstein distance.

\begin{wrapfigure}{r}{0.5\textwidth}
\vspace*{-.5em}\includegraphics[width=0.45\textwidth]{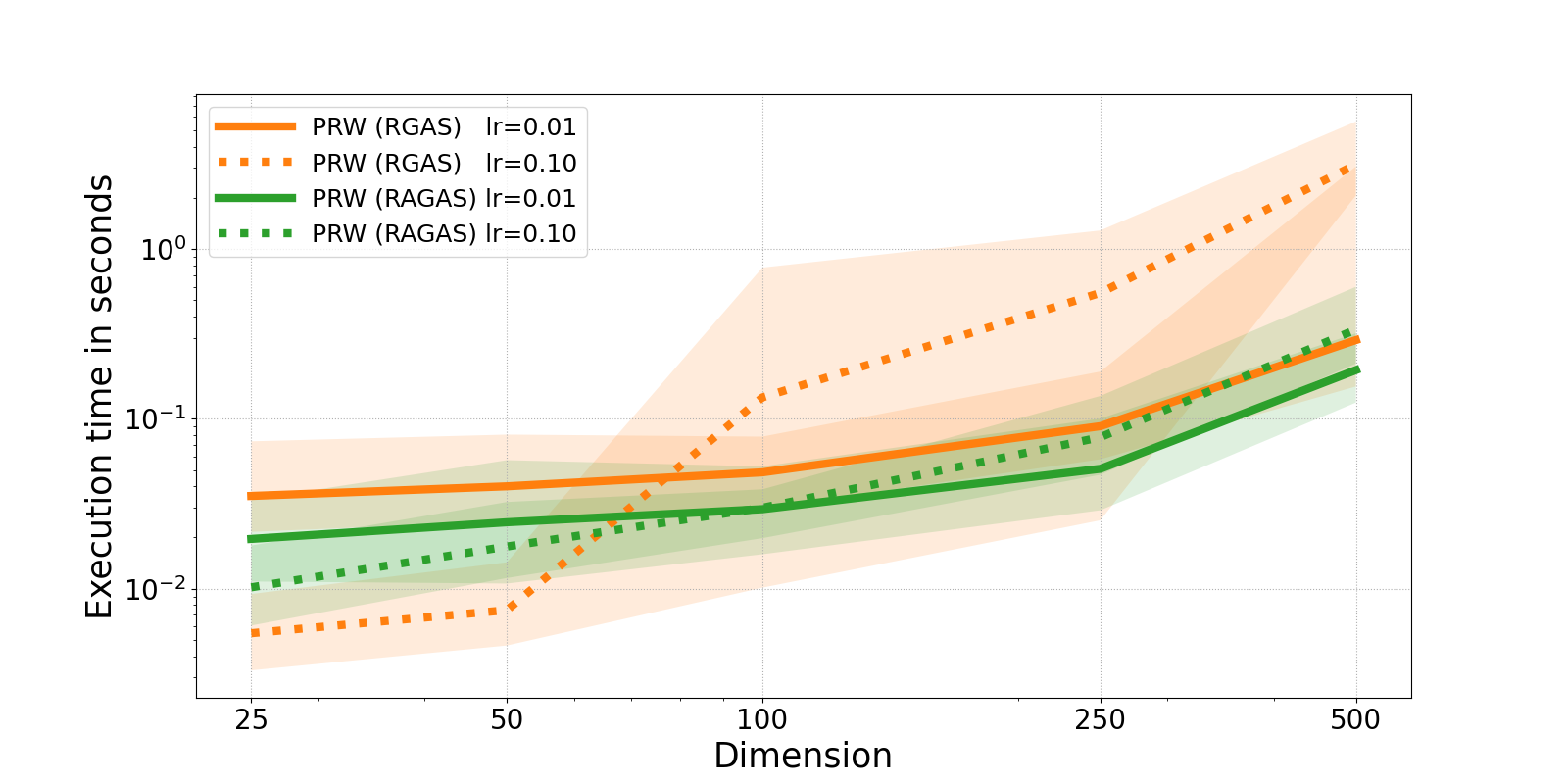}
\caption{Comparisons of mean computation time of the RGAS and RAGAS algorithms on CPU (log-log scale) for different learning rates. The shaded areas show the max-min values over 50 runs.}\label{fig:exp4_computation_time_lr_above}
\end{wrapfigure}

\paragraph{Computation time of algorithms.} We conduct our third experiment on the fragmented hypercube with dimension $d \in \{25, 50, 100, 250, 500\}$, subspace dimension $k=2$, number of points $n=100$ and threshold $\epsilon=0.001$. For the SRW and the PRW distances, the regularization parameter is set as $\eta = 0.2$ for $n < 250$ and $\eta = 0.5$ otherwise\footnote{Available in https://github.com/francoispierrepaty/SubspaceRobustWasserstein}, as well as the scaling for the matrix $C$ (cf. Definition~\ref{def:C}) is applied for stabilizing the algorithms. We stop the RGAS and RAGAS algorithms when $\|U_{t+1} - U_t\|_F/\|U_t\|_F \leq \epsilon$. 

Figure~\ref{fig:exp_noise_2} (right) presents the mean computation time of the SRW distance with the Frank-Wolfe algorithm~\citep{Paty-2019-Subspace} and the PRW distance with our RGAS and RAGAS algorithms. Our approach is significantly faster since the complexity bound of their approach is quadratic in dimension $d$ while our methods are linear in dimension $d$. 

\begin{figure}[!t]
\centering
\includegraphics[clip, trim=10 20 10 20, width=0.45\textwidth]{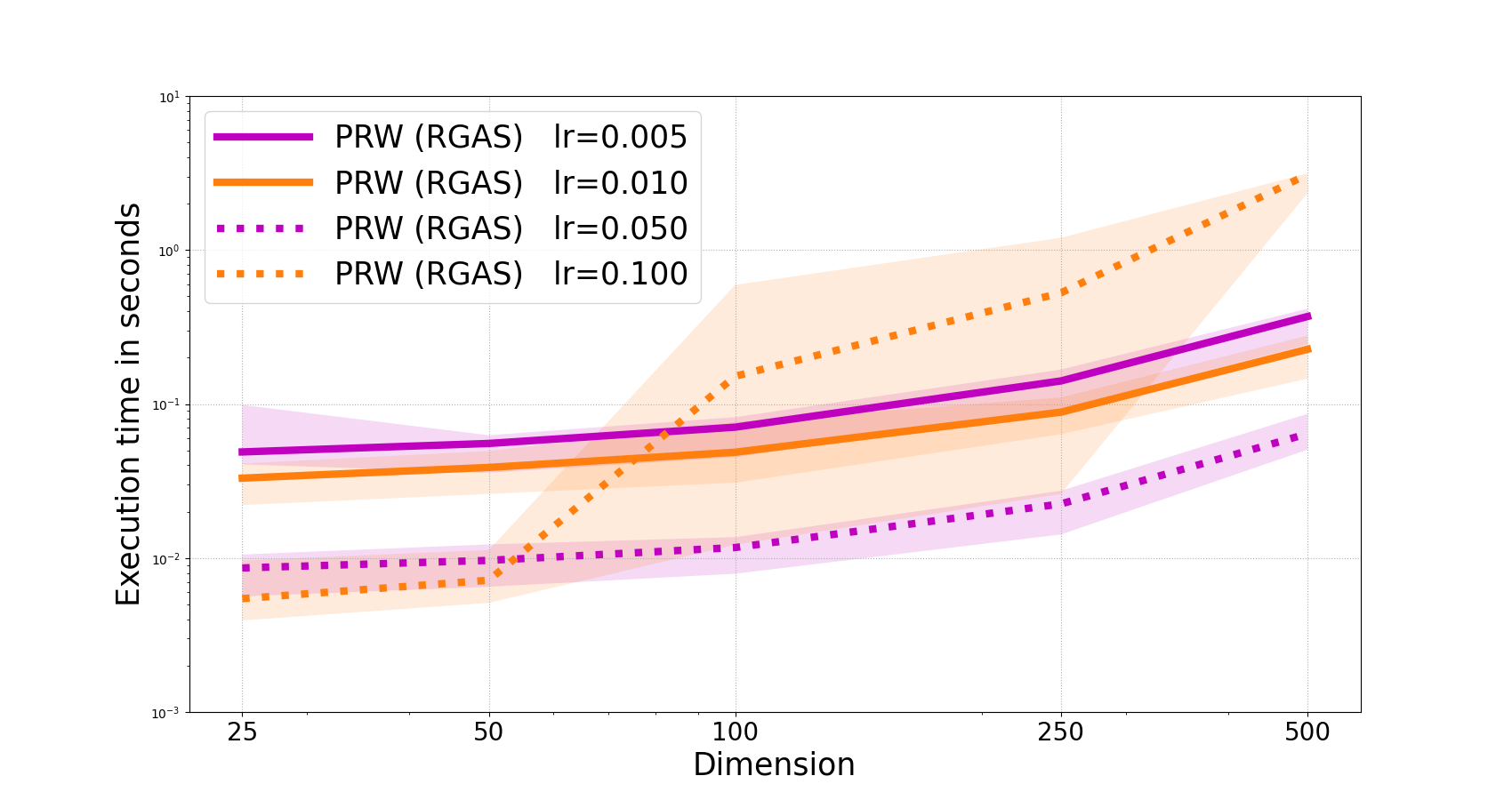}
\includegraphics[clip, trim=20 10 10 20, width=0.45\textwidth]{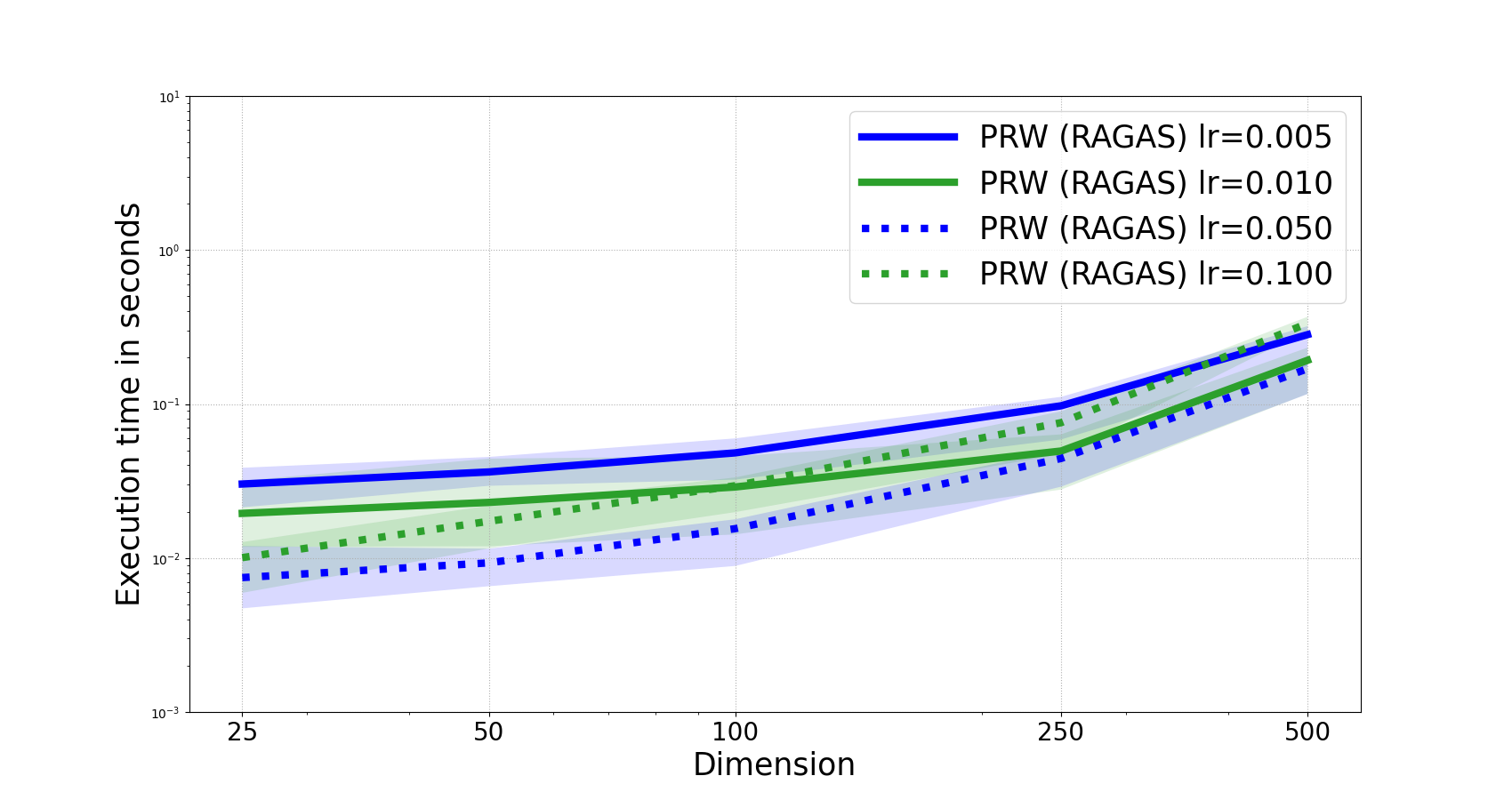}
\caption{Comparisons of mean computation time of the RGAS and RAGAS algorithms on CPU (log-log scale) for different learning rates. The shaded areas show the max-min values over 50 runs.}\label{fig:exp4_computation_time_lr_bottom}
\end{figure}
\paragraph{Robustness of algorithms to learning rate.} We conduct our fourth experiment on the fragmented hypercube to evaluate the robustness of our RGAS and RAGAGS algorithms by choosing the learning rate $\gamma \in \{0.01, 0.1\}$. The parameter setting is the same as that in the third experiment.

Figure~\ref{fig:exp4_computation_time_lr_above} indicates that the RAGAS algorithm is more robust than the RGAS algorithm as the learning rates varies, with smaller variance in computation time (seconds). This is the case especially when the dimension is large, showing the advantage of the adaptive strategies in practice. 

To demonstrate the advantage of the adaptive strategies in practice, we initialize the learning rate using four options $\gamma \in \{0.005, 0.01, 0.05, 0.1\}$ and present the results for the RGAS and RAGAS algorithms separately in Figure~\ref{fig:exp4_computation_time_lr_bottom}. This is consistent with the results in Figure~\ref{fig:exp4_computation_time_lr_above} and supports that the RAGAS algorithm is more robust than the RGAS algorithm to the learning rate. 
\begin{table}[!t]\small
\centering\hspace*{-3.5em}
\begin{tabular}{|c|ccccccc|} \hline
& D & G & I & KB1 & KB2 & TM & T \\ \hline
D & 0/0 & 0.184/0.126 & 0.185/0.135 & 0.195/0.153 & 0.202/0.162 & 0.186/0.134 & \textbf{0.170}/\textbf{0.105} \\ \hline
G & 0.184/0.126 & 0/0 & \textbf{0.172}/0.101 & 0.196/0.146 & 0.203/0.158 & 0.175/\textbf{0.095} & 0.184/0.128 \\ \hline
I & 0.185/0.135 & 0.172/0.101 & 0/0 & 0.195/0.155 & 0.203/0.166 & \textbf{0.169}/\textbf{0.099} & 0.180/0.134 \\ \hline
KB1 & 0.195/0.153 & 0.196/0.146 & 0.195/0.155 & 0/0 & \textbf{0.164}/\textbf{0.089} & 0.190/0.146 & 0.179/0.132 \\ \hline
KB2 & 0.202/0.162 & 0.203/0.158 & 0.203/0.166 & \textbf{0.164}/\textbf{0.089} & 0/0 & 0.193/0.155 & 0.180/0.138 \\ \hline
TM & 0.186/0.134 & 0.175/\textbf{0.095} & \textbf{0.169}/0.099 & 0.190/0.146 & 0.193/0.155 & 0/0 & 0.182/0.136 \\ \hline
T & \textbf{0.170}/\textbf{0.105} & 0.184/0.128 & 0.180/0.134 & 0.179/0.132 & 0.180/0.138 & 0.182/0.136 & 0/0 \\ \hline
\end{tabular}
\caption{Each entry is $\SCal_k^2/\PCal_k^2$ distance between different movie scripts. D = Dunkirk, G = Gravity, I = Interstellar, KB1 = Kill Bill Vol.1, KB2 = Kill Bill Vol.2, TM = The Martian, T = Titanic.}\label{tab:exp_cinema}
\end{table}
\begin{table}[!t]\small
\centering
\begin{tabular}{|c|cccccc|}\hline
& H5 & H & JC  & TMV & O & RJ \\ \hline
H5 & 0/0 & \textbf{0.222}/\textbf{0.155} & 0.230/0.163 & 0.228/0.166 & 0.227/0.170 & 0.311/0.272 \\ \hline
H & 0.222/0.155 & 0/0 & 0.224/0.163 & 0.221/0.159 & \textbf{0.220}/\textbf{0.153} & 0.323/0.264 \\ \hline
JC & 0.230/0.163 & 0.224/0.163 & 0/0 & 0.221/\textbf{0.156} & \textbf{0.219}/0.157 & 0.246/0.191 \\ \hline
TMV & 0.228/0.166 & 0.221/0.159 & \textbf{0.221}/0.156 & 0/0 & 0.222/\textbf{0.154} & 0.292/0.230 \\ \hline
O & 0.227/0.170 & 0.220/\textbf{0.153} & \textbf{0.219}/0.157 & 0.222/0.154 & 0/0 & 0.264/0.215 \\ \hline
RJ & 0.311/0.272 & 0.323/0.264 & \textbf{0.246}/\textbf{0.191} & 0.292/0.230 & 0.264/0.215 & 0/0 \\ \hline
\end{tabular}
\caption{Each entry is $\SCal_k^2/\PCal_k^2$ distance between different \sk plays. H5 = Henry V, H = Hamlet, JC = Julius Caesar, TMV = The Merchant of Venice, O = Othello, RJ = Romeo and Juliet.}\label{tab:exp_shakespeare}
\end{table}

\paragraph{Experiments on real data.} We compute the PRW and SRW distances between all pairs of items in a corpus of seven \textit{movie scripts}. Each script is tokenized to a list of words, which is transformed to a measure over $\br^{300}$ using \textsc{word2vec}~\citep{Mikolov-2018-Advances} where each weight is word frequency. The SRW and PRW distances between all pairs of movies are in Table~\ref{tab:exp_cinema}, which is consistent with the SRW distance in~\citet[Figure~9]{Paty-2019-Subspace} and demonstrate that the PRW distance is smaller than SRW distance. We also compute the SRW and PRW for a preprocessed corpus of eight Shakespeare operas. The PRW distance is consistently smaller than the corresponding SRW distance; see Table~\ref{tab:exp_shakespeare}. Figure~\ref{fig:exp_word_cloud} displays the projection of two measures associated with \textit{Dunkirk} versus \textit{Interstellar} (left) and \textit{Julius Caesar} versus \textit{The Merchant of Venice} (right) onto their optimal 2-dimensional projection.

To further show the versatility of SRW and PRW distances, we extract the features of different MNIST digits using a convolutional neural network (CNN) and compute the scaled SRW and PRW distances between all pairs of MNIST digits. In particular, we use an off-the-shelf PyTorch implementation\footnote{https://github.com/pytorch/examples/blob/master/mnist/main.py} and pretrain on MNIST with 98.6\% classification accuracy on the test set. We extract the 128-dimensional features of each digit from the penultimate layer of the CNN. Since the MNIST test set contains 1000 images per digit, each digit is associated with a measure over $\br^{128000}$. Then we compute the optimal 2-dimensional projection distance of measures associated with each pair of two digital classes and divide each distance by 1000; see Table~\ref{tab:MNIST} for the details. The minimum SRW and PRW distances in each row is highlighted to indicate its most similar digital class of that row, which coincides with our intuitions. For example, D1 is sometimes confused with D7 (0.58/0.47), while D4 is often confused with D9 (0.49/0.38) in scribbles.

\paragraph{Summary.} The PRW distance has less discriminative power than the SRW distance which is equivalent to the Wasserstein distance~\citep[Proposition~2]{Paty-2019-Subspace}. Such equivalence implies that the SRW distance suffers from the curse of dimensionality in theory. In contrast, the PRW distance has much better sample complexity than the SRW distance if the distributions satisfy the mild condition~\citep{Niles-2019-Estimation, Lin-2021-Projection}. Our empirical evaluation shows that the PRW distance is computationally favorable and more robust than the SRW and Wasserstein distance, when the noise has the moderate to high variance.  
\begin{table}[!t]\small
\centering\hspace*{-6em}
\begin{tabular}{|c|cccccccccc|}
\hline
& D0 & D1 & D2 & D3 & D4 & D5 & D6 & D7 & D8 & D9 \\ \hline
D0 & 0/0 & 0.97/0.79 & \textbf{0.80}/\textbf{0.59} & 1.20/0.92 & 1.23/0.90 & 1.03/0.71 & 0.81/0.59 & 0.86/0.66 & 1.06/0.79 & 1.09/0.81 \\ \hline
D1 & 0.97/0.79 & 0/0 & 0.66/0.51 & 0.86/0.72 & 0.68/0.54 & 0.84/0.70 & 0.80/0.66 & \textbf{0.58}/\textbf{0.47} & 0.88/0.71 & 0.85/0.72 \\ \hline
D2 & 0.80/0.59 & \textbf{0.66}/\textbf{0.51} & 0/0 & 0.73/0.54 & 1.08/0.79 & 1.08/0.83 & 0.90/0.70 & 0.70/0.53 & 0.68/0.52 & 1.07/0.81 \\ \hline
D3 & 1.20/0.92 & 0.86/0.72 & 0.73/0.54 & 0/0 & 1.20/0.87 & \textbf{0.58}/\textbf{0.43} & 1.23/0.91 & 0.72/0.55 & 0.88/0.64 & 0.83/0.65 \\ \hline
D4 & 1.23/0.90 & 0.68/0.54 & 1.08/0.79 & 1.20/0.87 & 0/0 & 1.00/0.75 & 0.85/0.62 & 0.79/0.61 & 1.09/0.78 & \textbf{0.49}/\textbf{0.38} \\ \hline
D5 & 1.03/0.71 & 0.84/0.70 & 1.08/0.83 & \textbf{0.58}/\textbf{0.43} & 1.00/0.75 & 0/0 & 0.72/0.51 & 0.91/0.68 & 0.72/0.53 & 0.78/0.59 \\ \hline
D6 & 0.81/0.59 & 0.80/0.66 & 0.90/0.70 & 1.23/0.91 & 0.85/0.62 & \textbf{0.72}/\textbf{0.51} & 0/0 & 1.11/0.83 & 0.92/0.66 & 1.22/0.83 \\ \hline
D7 & 0.86/0.66 & \textbf{0.58}/0.47 & 0.70/0.53 & 0.72/0.55 & 0.79/0.61 & 0.91/0.68 & 1.11/0.83 & 0/0 & 1.07/0.78 & 0.62/\textbf{0.46} \\ \hline
D8 & 1.06/0.79 & 0.88/0.71 & \textbf{0.68}/\textbf{0.52} & 0.88/0.64 & 1.09/0.78 & 0.72/0.53 & 0.92/0.66 & 1.07/0.78 & 0/0 & 0.87/0.63 \\ \hline
D9 & 1.09/0.81 & 0.85/0.72 & 1.07/0.81 & 0.83/0.65 & \textbf{0.49}/\textbf{0.38} & 0.78/0.59 & 1.22/0.83 & 0.62/0.46 & 0.87/0.63 & 0/0 \\ \hline
\end{tabular}
\caption{Each entry is scaled $\SCal_k^2/\PCal_k^2$ distance between different hand-written digits.}\label{tab:MNIST}
\end{table}
\begin{figure}[!t]
\centering
\includegraphics[clip, trim=0 0 0 0, width=1\textwidth]{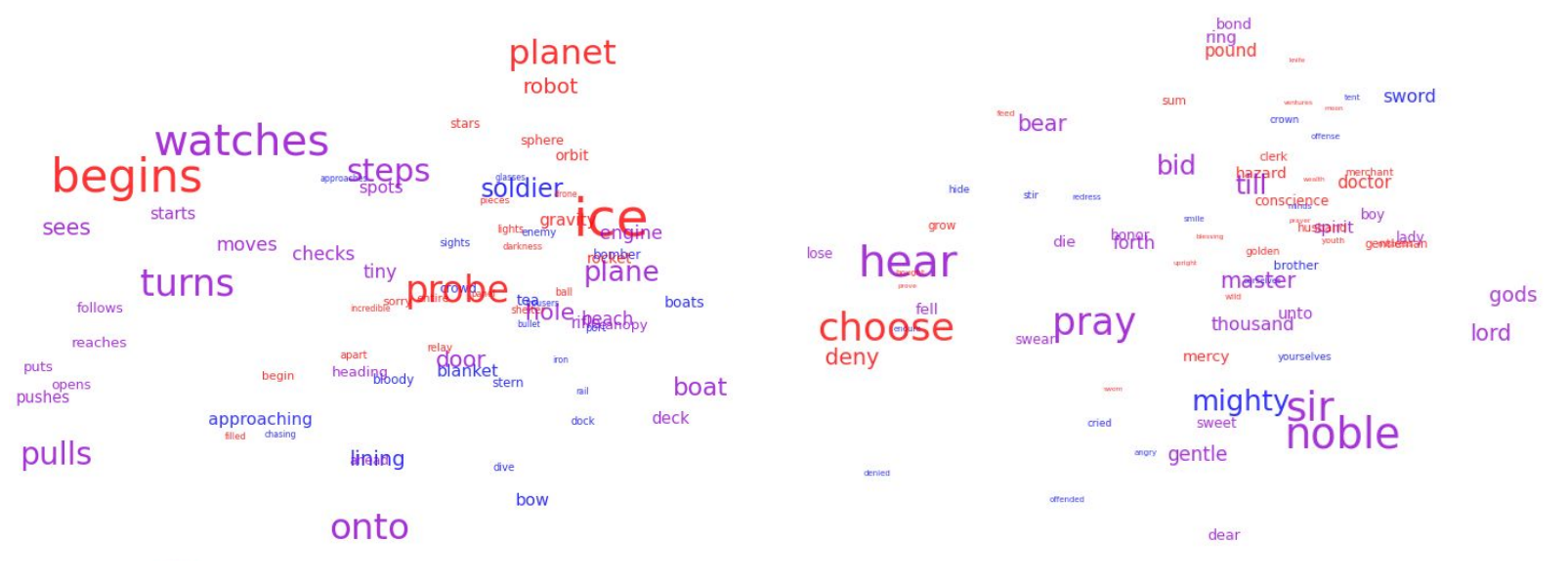}
\caption{Optimal 2-dimensional projections between ``Dunkirk" and ``Interstellar" (left) and optimal 2-dimensional projections between ``Julius Caesar" and ``The Merchant of Venice" (right). Common words of two items are displayed in violet and the 30 most frequent words of each item are displayed.}\label{fig:exp_word_cloud}\vspace*{-.5em}
\end{figure}

\section{Conclusion}\label{sec:conclusions}
We study in this paper the computation of the projection robust Wasserstein (PRW) distance in the discrete setting. A set of algorithms are developed for computing the entropic regularized PRW distance and both guaranteed to converge to an approximate pair of optimal subspace projection and optimal transportation plan. Experiments on synthetic and real datasets demonstrate that our approach to computing the PRW distance is an improvement over existing approaches based on the convex relaxation of the PRW distance and the Frank-Wolfe algorithm. Future work includes the theory for continuous distributions and applications of PRW distance to deep generative models.    

\section{Acknowledgments}
We would like to thank four anonymous referees for constructive suggestions that improve the quality of this paper. This work is supported in part by the Mathematical Data Science program of the Office of Naval Research under grant number N00014-18-1-2764.

\bibliographystyle{plainnat}
\bibliography{ref}

\newpage\appendix
\section{Further Background Materials on Riemannian Optimization}
The problem of optimizing a smooth function over the Riemannian manifold has been the subject of a large literature.~\citet{Absil-2009-Optimization} provide a comprehensive treatment, showing how first-order and second-order algorithms are extended to the Riemannian setting and proving asymptotic convergence to first-order stationary points.~\citet{Boumal-2019-Global} have established global sublinear convergence results for Riemannian gradient descent and Riemannian trust region algorithms, and further showed that the latter approach converges to a second order stationary point in polynomial time; see also~\citet{Kasai-2018-Inexact, Hu-2018-Adaptive, Hu-2019-Structured}. In contradistinction to the Euclidean setting, the Riemannian trust region algorithm requires a Hessian oracle. There have been also several recent papers on problem-specific algorithms~\citep{Wen-2013-Feasible, Gao-2018-New, Liu-2019-Quadratic} and primal-dual algorithms~\citep{Zhang-2019-Primal} for Riemannian optimization.

Compared to the smooth setting, Riemannian nonsmooth optimization is harder and relatively less explored~\citep{Absil-2019-Collection}. There are two main lines of work. In the first category, one considers optimizing geodesically convex function over a Riemannian manifold with subgradient-type algorithms; see, e.g.,~\citet{Ferreira-1998-Subgradient, Zhang-2016-First, Bento-2017-Iteration}. In particular,~\citet{Ferreira-1998-Subgradient} first established an asymptotic convergence result while~\citet{Zhang-2016-First, Bento-2017-Iteration} derived a global convergence rate of $\bigO(\epsilon^{-2})$ for the Riemannian subgradient algorithm. Unfortunately, these results are not useful for understanding the computation of the PRW distance in Eq.~\eqref{prob:main} since the Stiefel manifold is \textit{compact} and every continuous and geodesically convex function on a compact Riemannian manifold must be a constant; see~\citet[Proposition~2.2]{Bishop-1969-Manifolds}. In the second category, one assumes the tractable computation of the proximal mapping of the objective function over the Riemannian manifold.~\citet{Ferreira-2002-Proximal} proved that the Riemannian proximal point algorithm converges globally at a sublinear rate. 

When specialized to the Stiefel manifold,~\citet{Chen-2020-Proximal} consider the composite objective and proposed to compute the proximal mapping of nonsmooth component function over the tangent space. The resulting Riemannian proximal gradient algorithm is practical in real applications while achieving theoretical guarantees.~\citet{Li-2019-Nonsmooth} extended the results in~\citet{Davis-2019-Stochastic} to the Riemannian setting and proposed a family of Riemannian subgradienttype methods for optimizing a weakly convex function over the Stiefel manifold. They also proved that their algorithms have an iteration complexity of $\bigO(\epsilon^{-4})$ for driving a near-optimal stationarity measure below $\epsilon$. Following up the direction proposed by~\citet{Li-2019-Nonsmooth}, we derive a near-optimal condition (Definition~\ref{def:near-stationarity} and~\ref{def:near-optimal-pair}) for the max-min optimization model in Eq.~\eqref{prob:main-CCP} and propose an algorithm with the finite-time convergence under this stationarity measure. 

Finally, there are several results on stochastic optimization over the Riemannian manifold.~\citet{Bonnabel-2013-Stochastic} proved the first asymptotic convergence result for Riemannian stochastic gradient descent, which is further extended by~\citet{Zhang-2016-Riemannian, Tripuraneni-2018-Averaging, Becigneul-2019-Riemannian}. If the Riemannian Hessian is not positive definite, a few recent works have developed frameworks to escape saddle points~\citep{Sun-2019-Escaping, Criscitiello-2019-Efficiently}. 

\section{Near-Optimality Condition}
In this section, we derive a near-optimal condition (Definition~\ref{def:near-stationarity} and~\ref{def:near-optimal-pair}) for the max-min optimization model in Eq.~\eqref{prob:main} and the maximization of $f$ over $\St(d, k)$ in Eq.~\eqref{prob:Stiefel-nonsmooth}. Following~\citet{Davis-2019-Stochastic, Li-2019-Nonsmooth}, we define the proximal mapping of $f$ over $\St(d, k)$ in Eq.~\eqref{prob:Stiefel-nonsmooth}, which takes into account both the Stiefel manifold constraint and max-min structure\footnote{The proximal mapping $p(U)$ must exist since the Stiefel manifold is compact, yet may not be uniquely defined. However, this does not matter since $p(U)$ only appears in the analysis for the purpose of defining the surrogate stationarity measure; see~\citet{Li-2019-Nonsmooth}.}: 
\begin{equation*}
p(U) \ \in \ \argmax\limits_{\bar{U} \in \St(d, k)} \ \left\{f(\bar{U}) - 6\|C\|_\infty\|\bar{U} - U\|_F^2\right\} \quad \text{for all } U \in \St(d, k). 
\end{equation*}
After a simple calculation, we have
\begin{equation*}
\Theta(U) \ \mydefn \ 12\|C\|_\infty\|p(U) - U\|_F \ \geq \ \dist(0, \subdiff f(\prox_{\rho f}(U))), 
\end{equation*}
Therefore, we conclude from Definition~\ref{def:stationarity} that $p(U) \in \St(d, k)$ is $\epsilon$-approximate optimal subspace projection of $f$ over $\St(d, k)$ in Eq.~\eqref{prob:Stiefel-nonsmooth} if $\Theta(U) \leq \epsilon$. We remark that $\Theta(\bullet)$ is a well-defined surrogate stationarity measure of $f$ over $\St(d, k)$ in Eq.~\eqref{prob:Stiefel-nonsmooth}. Indeed, if $\Theta(U) = 0$, then $U \in \St(d, k)$ is an optimal subspace projection. This inspires the following $\epsilon$-near-optimality condition for any $\widehat{U} \in \St(d, k)$. 
\begin{definition}\label{def:near-stationarity}
A subspace projection $\widehat{U} \in \St(d, k)$ is called an \emph{$\epsilon$-approximate near-optimal subspace projection} of $f$ over $\St(d, k)$ in Eq.~\eqref{prob:Stiefel-nonsmooth} if it satisfies $\Theta(\widehat{U}) \leq \epsilon$. 
\end{definition}
Equipped with Definition~\ref{def:approx_transportation_plan} and~\ref{def:near-stationarity}, we define an $\epsilon$-approximate pair of near-optimal subspace projection and optimal transportation plan for the computation of the PRW distance in Eq.~\eqref{prob:main}. 
\begin{definition}\label{def:near-optimal-pair}
The pair of subspace projection and transportation plan $(\widehat{U}, \widehat{\pi}) \in \St(d, k) \times \Pi(\mu, \nu)$ is an \emph{$\epsilon$-approximate pair of near-optimal subspace projection and optimal transportation plan} for the computation of the PRW distance in Eq.~\eqref{prob:main} if the following statements hold true: 
\begin{itemize}
\item $\widehat{U}$ is an $\epsilon$-approximate near-optimal subspace projection of $f$ over $\St(d, k)$ in Eq.~\eqref{prob:Stiefel-nonsmooth}. 
\item $\widehat{\pi}$ is an $\epsilon$-approximate optimal transportation plan for the subspace projection $\widehat{U}$. 
\end{itemize}
\end{definition}
Finally, we prove that the stationary measure in Definition~\ref{def:near-optimal-pair} is a local surrogate for the stationary measure in Definition~\ref{def:optimal-pair} in the following proposition. 
\begin{proposition}
If $(U, \pi) \in \St(d, k) \times \Pi(\mu, \nu)$ is an $\epsilon$-approximate pair of optimal subspace projection and optimal transportation plan of problem~\eqref{prob:main}, it is an $3\epsilon$-approximate pair of optimal subspace projection and optimal transportation plan. 
\end{proposition}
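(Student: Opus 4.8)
\medskip
\noindent\emph{Proof plan.} The substance of the statement is the pointwise comparison of the two stationarity measures,
\[
\Theta(U) \ \le \ 3\,\dist(0, \subdiff f(U)) \qquad \text{for all } U \in \St(d,k),
\]
together with the trivial monotonicity fact that an $\epsilon$-approximate optimal transportation plan for a given $U$ (Definition~\ref{def:approx_transportation_plan}) is a fortiori a $3\epsilon$-approximate one; feeding these into Definitions~\ref{def:optimal-pair} and~\ref{def:near-optimal-pair} gives the claim (so that an $\epsilon$-approximate pair in the sense of Definition~\ref{def:optimal-pair} upgrades to a $3\epsilon$-approximate pair in the sense of Definition~\ref{def:near-optimal-pair}). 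Fix $U\in\St(d,k)$, let $p(U)$ be any maximizer in the definition of the proximal mapping, and set $t\mydefn\|p(U)-U\|_F$, so $\Theta(U)=12\|C\|_\infty t$. The plan is to sandwich $f(p(U))-f(U)$ between a quadratic lower bound in $t$ and an affine-in-$t$ upper bound controlled by $\dist(0,\subdiff f(U))$, and then solve for $t$.

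For the lower bound, since $U$ is feasible in the maximization defining $p(U)$, optimality of $p(U)$ gives $f(p(U))-6\|C\|_\infty t^2 \ge f(U)$, i.e.\ $f(p(U))-f(U)\ge 6\|C\|_\infty t^2$. For the upper bound I would invoke Lemma~\ref{lemma:obj-weak-concave} and the Remark after Lemma~\ref{lemma:bound-subdiff}: write $f=g+\|C\|_\infty\|\cdot\|_F^2$ with $g$ concave and $\partial f(U)=\partial g(U)+2\|C\|_\infty U$. The concave supergradient inequality $g(p(U))\le g(U)+\langle\zeta,p(U)-U\rangle$ for $\zeta\in\partial g(U)$, after substituting $g=f-\|C\|_\infty\|\cdot\|_F^2$ and using $\|p(U)\|_F^2-\|U\|_F^2=t^2+2\langle U,p(U)-U\rangle$, rearranges to $f(p(U))-f(U)\le\|C\|_\infty t^2+\langle\xi,p(U)-U\rangle$ for every $\xi\in\partial f(U)$, where $\xi=\zeta+2\|C\|_\infty U$.

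Next, decompose $\xi = P_{\Tg_U\St}(\xi)+N$ with $N=U\,\sym(U^\top\xi)$ its normal part, and pick the $\xi\in\partial f(U)$ attaining $\|P_{\Tg_U\St}(\xi)\|_F=\dist(0,\subdiff f(U))$ (the minimum is attained since $\partial f(U)$ is compact and $\subdiff f(U)=P_{\Tg_U\St}(\partial f(U))$). The tangential term contributes at most $\dist(0,\subdiff f(U))\cdot t$ by Cauchy--Schwarz. The normal term is where the Stiefel geometry enters and, I expect, the only real obstacle: writing $S\mydefn\sym(U^\top\xi)$, one has $\langle N,p(U)-U\rangle=\langle S,\ \sym(U^\top p(U))-I_k\rangle$, and because both $U$ and $p(U)$ lie on $\St(d,k)$ the matrix $M\mydefn\sym(U^\top p(U))-I_k$ is negative semidefinite with $\tr(M)=\tr(U^\top p(U))-k=-\tfrac12 t^2$; hence $|\langle S,M\rangle|\le\|S\|_2\,\tr(-M)\le\|\xi\|_F\cdot\tfrac12 t^2\le\|C\|_\infty t^2$, using $\|S\|_2\le\|S\|_F\le\|\xi\|_F$ and Lemma~\ref{lemma:bound-subdiff}.

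Putting the three bounds together yields $6\|C\|_\infty t^2\le 2\|C\|_\infty t^2+\dist(0,\subdiff f(U))\,t$, hence $4\|C\|_\infty t\le\dist(0,\subdiff f(U))$ when $t>0$ (and $\Theta(U)=0$ otherwise), so $\Theta(U)=12\|C\|_\infty t\le 3\,\dist(0,\subdiff f(U))$. Specializing to $U=\widehat U$ with $\dist(0,\subdiff f(\widehat U))\le\epsilon$, and using that the accuracy bound in Definition~\ref{def:approx_transportation_plan} is monotone in $\epsilon$, shows $(\widehat U,\widehat\pi)$ satisfies the $3\epsilon$-version of the pair condition, which is the assertion. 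The main difficulty is the negative-semidefiniteness of $M$ in the normal-component estimate; everything else is bookkeeping with the weak-concavity decomposition of $f$.
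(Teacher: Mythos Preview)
Your argument is correct and arrives at exactly the same inequality $4\|C\|_\infty\,\|p(U)-U\|_F \le \dist(0,\subdiff f(U))$ as the paper, via the same sandwich of $f(p(U))-f(U)$ between $6\|C\|_\infty t^2$ (from optimality of $p(U)$) and $2\|C\|_\infty t^2 + \dist(0,\subdiff f(U))\,t$.

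The one genuine difference is how the upper bound is obtained. The paper simply invokes the Riemannian subgradient inequality of \citet[Theorem~1]{Li-2019-Nonsmooth}, which directly gives
\[
f(p(U)) - f(U) \ \le \ \langle \xi, p(U)-U\rangle + 2\|C\|_\infty\|p(U)-U\|_F^2, \qquad \xi \in \subdiff f(U),
\]
and then applies Cauchy--Schwarz with $\|\xi\|_F\le\epsilon$. You instead re-derive this inequality from scratch: starting from the Euclidean concave supergradient inequality for $g=f-\|C\|_\infty\|\cdot\|_F^2$, then splitting $\xi\in\partial f(U)$ into its tangential and normal components and bounding the normal contribution by exploiting the Stiefel identity $I_k - \sym(U^\top p(U)) = \tfrac12 (U-p(U))^\top(U-p(U))\succeq 0$. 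Your route is self-contained and makes transparent exactly where the manifold geometry enters (the $\|C\|_\infty t^2$ bound on the normal term), at the price of more bookkeeping; the paper's route is a one-line citation. Either way the constants match and the conclusion $\Theta(U)\le 3\epsilon$ follows identically.
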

\begin{proof}
By the definition, $(U, \pi) \in \St(d, k) \times \Pi(\mu, \nu)$ satisfies that $\pi$ is an $\epsilon$-approximate optimal transportation plan for the subspace projection $U$. Thus, it suffices to show that $\Theta(U) \leq 3\epsilon$. By the definition of $p(U)$, we have
\begin{equation*}
f(p(U)) - 6\|C\|_\infty\|p(U) - U\|_F^2 \ \geq \ f(U). 
\end{equation*}
Since $f$ is $2\|C\|_\infty$-weakly concave and each element of the subdifferential $\partial f(U)$ is bounded by $2\|C\|_\infty$ for all $U \in \St(d, k)$, the Riemannian subgradient inequality~\citep[Theorem~1]{Li-2019-Nonsmooth} implies that 
\begin{equation*}
f(\prox_{\rho f}(U)) - f(U) \ \leq \ \langle \xi, \prox_{\rho f}(U) - U\rangle + 2\|C\|_\infty\|\prox_{\rho f}(U) - U\|^2 \quad \text{for any } \xi \in \subdiff f(U). 
\end{equation*}
Since $\dist(0, \subdiff f(U)) \leq \epsilon$, we have
\begin{equation*}
f(\prox_{\rho f}(U)) - f(U) \ \leq \ \epsilon\|\prox_{\rho f}(U) - U\|_F + 2\|C\|_\infty\|\prox_{\rho f}(U) - U\|^2. 
\end{equation*}
Putting these pieces together with the definition of $\Theta(U)$ yields the desired result. 
\end{proof}

\section{Riemannian Supergradient meets Network Simplex Iteration}
In this section, we propose a new algorithm, named \emph{Riemannian SuperGradient Ascent with Network simplex iteration} (RSGAN), for computing the PRW distance in Eq.~\eqref{prob:main}. The iterates are guaranteed to converge to an $\epsilon$-approximate pair of \textit{near-optimal} subspace projection and optimal transportation plan (cf. Definition~\ref{def:near-optimal-pair}). The complexity bound is $\bigOtil(n^2(d + n)\epsilon^{-4})$ if $k = \bigOtil(1)$.
\begin{algorithm}[!t]
\caption{Riemannian SuperGradient Ascent with Network Simplex Iteration (RSGAN)}\label{alg:supergrad-simplex}
\begin{algorithmic}[1]
\STATE \textbf{Input:} measures $\{(x_i, r_i)\}_{i \in [n]}$ and $\{(y_j, c_j)\}_{j \in [n]}$, dimension $k = \bigOtil(1)$ and tolerance $\epsilon$.  
\STATE \textbf{Initialize:} $U_0 \in \St(d, k)$, $\widehat{\epsilon} \leftarrow \frac{\epsilon}{10\|C\|_\infty}$ and $\gamma_0 \leftarrow \frac{1}{k\|C\|_\infty}$. 
\FOR{$t = 0, 1, 2, \ldots, T-1$}
\STATE Compute $\pi_{t+1} \leftarrow \textsc{OT}(\{(x_i, r_i)\}_{i \in [n]}, \{(y_j, c_j)\}_{j \in [n]}, U_t, \widehat{\epsilon})$.
\STATE Compute $\xi_{t+1} \leftarrow P_{\Tg_{U_t}\St}(2V_{\pi_{t+1}}U_t)$. 
\STATE Compute $\gamma_{t+1} \leftarrow \gamma_0/\sqrt{t+1}$.
\STATE Compute $U_{t+1} \leftarrow \retr_{U_t}(\gamma_{t+1}\xi_{t+1})$.  
\ENDFOR
\end{algorithmic}
\end{algorithm}
\subsection{Algorithmic scheme}
We start with a brief overview of the Riemannian supergradient ascent algorithm for nonsmooth Stiefel optimization. Letting $F: \br^{d \times k} \rightarrow \br$ be a nonsmooth but weakly concave function, we consider 
\begin{equation*}
\max\limits_{U \in \St(d, k)} \ F(U). 
\end{equation*}
A generic Riemannian supergradient ascent algorithm for solving this problem is given by
\begin{equation*}
U_{t+1} \ \leftarrow \ \retr_{U_t}(\gamma_{t+1}\xi_{t+1}) \quad \textnormal{ for any } \xi_{t+1} \in \subdiff F(U_t),   
\end{equation*} 
where $\subdiff F(U_t)$ is Riemannian subdifferential of $F$ at $U_t$ and $\retr$ is any retraction on $\St(d, k)$. For the nonconvex nonsmooth optimization, the stepsize setting $\gamma_{t+1} = \gamma_0/\sqrt{t+1}$ is widely accepted in both theory and practice~\citep{Davis-2019-Stochastic, Li-2019-Nonsmooth}. 

By the definition of Riemannian subdifferential, $\xi_t$ can be obtained by taking $\xi \in \partial F(U)$ and by setting $\xi_t = P_{\Tg_U\St}(\xi)$. Thus, it is necessary for us to specify the subdifferential of $f$ in Eq.~\eqref{prob:Stiefel-nonsmooth}. Using the symmetry of $V_\pi$, we have
\begin{equation*}
\partial f(U) \ = \ \textnormal{Conv}\left\{2V_{\pi^\star} U \mid \pi^\star \in \argmin\limits_{\pi \in \Pi(\mu, \nu)} \ \langle UU^\top, V_\pi\rangle\right\}, \quad \textnormal{ for any } U \in \br^{d \times k}.  
\end{equation*}
The remaining step is to solve an OT problem with a given $U$ at each inner loop of the maximization and use the output $\pi(U)$ to obtain an inexact supergradient of $f$. Since the OT problem with a given $U$ is exactly an LP, this is possible and can be done by applying the variant of network simplex method in the \textsc{POT} package~\citep{Flamary-2017-Pot}. While the simplex method can exactly solve this LP, we adopt the inexact solving rule as a practical matter. More specifically, the output $\pi_{t+1}$ satisfies that $\pi_{t+1} \in \Pi(\mu, \nu)$ and $\|\pi_{t+1} - \pi_t^\star\|_1 \leq \widehat{\epsilon}$ where $\pi_t^\star$ is an optimal solution of unregularized OT problem with $U_t \in \St(d, k)$. With the inexact solving rule, the interior-point method and some first-order methods can be adopted to solve the unregularized OT problem. To this end, we summarize the pseudocode of the RSGAN algorithm in Algorithm~\ref{alg:supergrad-simplex}.

\subsection{Complexity analysis for Algorithm~\ref{alg:supergrad-simplex}}
We define a function which is important to the subsequent analysis of Algorithm~\ref{alg:supergrad-simplex}: 
\begin{equation*}
\Phi(U) \ \mydefn \ \max\limits_{U' \in \St(d, k)} \ \left\{f(U') - 6\|C\|_\infty\|U' - U\|_F^2\right\} \quad \text{for all } U \in \St(d, k). 
\end{equation*}
Our first lemma provides a key inequality for quantifying the progress of the iterates $\{(U^t, \pi^t)\}_{t \geq 1}$ generated by Algorithm~\ref{alg:supergrad-simplex} using $\Phi(\bullet)$ as the potential function.
\begin{lemma}\label{lemma:key-descent-supergrad}
Letting $\{(U_t, \pi_t)\}_{t \geq 1}$ be the iterates generated by Algorithm~\ref{alg:supergrad-simplex}, we have
\begin{eqnarray*}
\Phi(U_{t+1}) & \geq & \Phi(U_t) - 12\gamma_{t+1}\|C\|_\infty\left(f(U_t) - f(p(U_t)) + 4\|C\|_\infty\|p(U_t) - U_t\|_F^2 + \frac{\epsilon^2}{200\|C\|_\infty}\right) \\
& - & 200\gamma_{t+1}^2\|C\|_\infty^3(\gamma_{t+1}^2 L_2^2\|C\|_\infty^2 + \gamma_{t+1}\|C\|_\infty + \sqrt{k}). \nonumber
\end{eqnarray*}
\end{lemma}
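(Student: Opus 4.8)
The plan is to use $\Phi$ as a potential function (it is the Moreau‑type envelope of the weakly concave $f$ over the Stiefel manifold) and to quantify how much it can decrease in one step, in the spirit of the analyses of~\citet{Davis-2019-Stochastic, Li-2019-Nonsmooth}. First I would invoke the defining maximization in $\Phi(U_{t+1})$ at the suboptimal test point $p(U_t)$: since $\Phi(U_{t+1}) \ge f(p(U_t)) - 6\|C\|_\infty\|p(U_t) - U_{t+1}\|_F^2$ while $\Phi(U_t) = f(p(U_t)) - 6\|C\|_\infty\|p(U_t) - U_t\|_F^2$ by definition of $p(U_t)$, subtracting gives
\[
\Phi(U_{t+1}) - \Phi(U_t) \ \ge \ 6\|C\|_\infty\left(\|p(U_t) - U_t\|_F^2 - \|p(U_t) - U_{t+1}\|_F^2\right).
\]
Then I would expand the right-hand side using the update $U_{t+1} = \retr_{U_t}(\gamma_{t+1}\xi_{t+1})$: writing $U_{t+1} - U_t = \gamma_{t+1}\xi_{t+1} + e_{t+1}$, Proposition~\ref{prop:retraction} gives $\|e_{t+1}\|_F \le L_2\gamma_{t+1}^2\|\xi_{t+1}\|_F^2$, and expanding the square yields
\[
\|p(U_t) - U_t\|_F^2 - \|p(U_t) - U_{t+1}\|_F^2 \ = \ 2\gamma_{t+1}\langle p(U_t) - U_t,\, \xi_{t+1}\rangle + 2\langle p(U_t) - U_t,\, e_{t+1}\rangle - \|U_{t+1} - U_t\|_F^2.
\]

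The heart of the argument is lower-bounding the leading inner product $\langle p(U_t) - U_t, \xi_{t+1}\rangle$. I would first pass from the inexact search direction to an exact Riemannian supergradient: setting $\tilde{\xi}_{t+1} := P_{\Tg_{U_t}\St}(2V_{\pi_t^\star}U_t) \in \subdiff f(U_t)$ (using the explicit form of $\partial f$ and $\subdiff f = P_{\Tg_U\St}(\partial f)$), the inexactness rule $\|\pi_{t+1} - \pi_t^\star\|_1 \le \widehat{\epsilon} = \epsilon/(10\|C\|_\infty)$ combined with $\|V_{\pi_{t+1}} - V_{\pi_t^\star}\|_F \le \|C\|_\infty\|\pi_{t+1} - \pi_t^\star\|_1$ and the nonexpansiveness of $P_{\Tg_{U_t}\St}$ gives $\|\xi_{t+1} - \tilde{\xi}_{t+1}\|_F \le \epsilon/5$; Young's inequality with parameter $4\|C\|_\infty$ then absorbs this mismatch into $-2\|C\|_\infty\|p(U_t) - U_t\|_F^2 - \epsilon^2/(200\|C\|_\infty)$. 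For $\langle \tilde{\xi}_{t+1}, p(U_t) - U_t\rangle$ I would apply the Riemannian subgradient inequality for the $2\|C\|_\infty$-weakly concave $f$ (Lemma~\ref{lemma:obj-weak-concave}, Lemma~\ref{lemma:bound-subdiff} and~\citet[Theorem~1]{Li-2019-Nonsmooth}) with the two points $p(U_t)$ and $U_t$, yielding $\langle\tilde{\xi}_{t+1}, p(U_t)-U_t\rangle \ge f(p(U_t)) - f(U_t) - 2\|C\|_\infty\|p(U_t)-U_t\|_F^2$. Combining the two bounds gives $\langle p(U_t) - U_t, \xi_{t+1}\rangle \ge f(p(U_t)) - f(U_t) - 4\|C\|_\infty\|p(U_t)-U_t\|_F^2 - \epsilon^2/(200\|C\|_\infty)$, which, multiplied through by $12\gamma_{t+1}\|C\|_\infty$ (i.e. $6\|C\|_\infty \cdot 2\gamma_{t+1}$), reproduces the first displayed bracket of the lemma.

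It then remains to bound the residual terms $12\|C\|_\infty\langle p(U_t)-U_t, e_{t+1}\rangle - 6\|C\|_\infty\|U_{t+1}-U_t\|_F^2$ from below by the stated additive error. Here I would use the uniform estimate $\|\xi_{t+1}\|_F = \|P_{\Tg_{U_t}\St}(2V_{\pi_{t+1}}U_t)\|_F \le 2\|C\|_\infty$ (exactly as in Lemma~\ref{lemma:bound-subdiff}, since $V_{\pi_{t+1}}\succeq 0$, $\|V_{\pi_{t+1}}\|_F \le \|C\|_\infty$ and $\|V_{\pi_{t+1}}U_t\|_F \le \|V_{\pi_{t+1}}\|_F$), hence $\|e_{t+1}\|_F \le 4L_2\gamma_{t+1}^2\|C\|_\infty^2$ and $\|U_{t+1}-U_t\|_F \le 2\gamma_{t+1}\|C\|_\infty + 4L_2\gamma_{t+1}^2\|C\|_\infty^2$; together with the diameter bound $\|p(U_t) - U_t\|_F \le \|p(U_t)\|_F + \|U_t\|_F = 2\sqrt{k}$ and $\sqrt{k}\ge 1$, expanding $\|U_{t+1}-U_t\|_F^2$ and discarding the harmless numeric and retraction constants collapses everything into $-200\gamma_{t+1}^2\|C\|_\infty^3(\gamma_{t+1}^2 L_2^2\|C\|_\infty^2 + \gamma_{t+1}\|C\|_\infty + \sqrt{k})$.

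I expect the main obstacle to be the bookkeeping around the inexact OT solve combined with the manifold subgradient inequality: choosing the Young's-inequality split so that the $\|p(U_t) - U_t\|_F^2$ coefficient comes out to exactly $4\|C\|_\infty$ and the $\epsilon^2$ term carries the advertised constant, and being careful that $\tilde{\xi}_{t+1}$ is a genuine element of the Riemannian subdifferential $\subdiff f(U_t)$ (rather than merely close to one) so that the weak-concavity inequality applies verbatim at the pair $(p(U_t), U_t)$. The retraction-error estimates and the final constant chase into the $200(\cdots)$ term are routine once $\|\xi_{t+1}\|_F \le 2\|C\|_\infty$ is in hand.
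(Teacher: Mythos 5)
Your proof is correct and follows essentially the same route as the paper's own argument. Both use the test-point inequality $\Phi(U_{t+1}) \ge f(p(U_t)) - 6\|C\|_\infty\|p(U_t)-U_{t+1}\|_F^2$ together with $\Phi(U_t) = f(p(U_t)) - 6\|C\|_\infty\|p(U_t)-U_t\|_F^2$; both replace the inexact direction $\xi_{t+1}$ by the exact Riemannian supergradient $P_{\Tg_{U_t}\St}(2V_{\pi_t^\star}U_t)\in\subdiff f(U_t)$, apply the Riemannian subgradient inequality of \citet[Theorem~1]{Li-2019-Nonsmooth} for the $2\|C\|_\infty$-weakly concave $f$, control the mismatch via $\|\pi_{t+1}-\pi_t^\star\|_1\le\widehat{\epsilon}$ and Young's inequality with the same parameter, and absorb the retraction-error residual using $\|\xi_{t+1}\|_F\le 2\|C\|_\infty$, Proposition~\ref{prop:retraction}, and $\|p(U_t)-U_t\|_F\le 2\sqrt{k}$. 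The only difference is bookkeeping: you subtract $\Phi(U_t)$ up front and track an explicit error vector $e_{t+1}=U_{t+1}-U_t-\gamma_{t+1}\xi_{t+1}$, whereas the paper expands $\|p(U_t)-\retr_{U_t}(\gamma_{t+1}\xi_{t+1})\|_F^2$ directly by Cauchy--Schwarz and then identifies $\Phi(U_t)$ afterward; these are algebraic rearrangements of the same inequality, and both absorb $L_2$ into the final constant in the same (slightly informal) way.
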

\begin{proof}
Since $p(U_t) \in \St(d, k)$, we have
\begin{equation}\label{inequality:obj-progress-first}
\Phi(U_{t+1}) \ \geq \ f(p(U_t)) - 6\|C\|_\infty\|p(U_t) - U_{t+1}\|_F^2. 
\end{equation}
Using the update formula of $U_{t+1}$, we have 
\begin{equation*}
\|p(U_t) - U_{t+1}\|_F^2 \ = \ \|p(U_t) - \retr_{U_t}(\gamma_{t+1}\xi_{t+1})\|_F^2.  
\end{equation*}
Using the Cauchy-Schwarz inequality and Proposition~\ref{prop:retraction}, we have
\begin{eqnarray*}
\lefteqn{\|p(U_t) - \retr_{U_t}(\gamma_{t+1}\xi_{t+1})\|_F^2}  \\ 
& = & \|(U_t + \gamma_{t+1}\xi_{t+1} - p(U_t)) + (\retr_{U_t}(\gamma_{t+1}\xi_{t+1}) - U_t - \gamma_{t+1}\xi_{t+1})\|_F^2 \\
& \leq & \|U_t + \gamma_{t+1}\xi_{t+1} - p(U_t)\|_F^2 + \|\retr_{U_t}(\gamma_{t+1}\xi_{t+1}) - (U_t + \gamma_{t+1}\xi_{t+1})\|_F^2 \\ 
& & + 2\|U_t + \gamma_{t+1}\xi_{t+1} - p(U_t)\|_F\|\retr_{U_t}(\gamma_{t+1}\xi_{t+1}) - (U_t + \gamma_{t+1}\xi_{t+1})\|_F \\
& \leq & \|U_t + \gamma_{t+1}\xi_{t+1} - p(U_t)\|_F^2 + \gamma_{t+1}^4 L_2^2\|\xi_{t+1}\|_F^4 + 2\gamma_{t+1}^2\|U_t + \gamma_{t+1}\xi_{t+1} - p(U_t)\|_F\|\xi_{t+1}\|_F^2 \\
& \leq & \|U_t - p(U_t)\|_F^2 + 2\gamma_{t+1}\langle\xi_{t+1}, U_t - p(U_t)\rangle + \gamma_{t+1}^2\|\xi_{t+1}\|_F^2 + \gamma_{t+1}^4 L_2^2\|\xi_{t+1}\|_F^4 \\ 
& & + 2\gamma_{t+1}^2\|U_t + \gamma_{t+1}\xi_{t+1} - p(U_t)\|_F\|\xi_{t+1}\|_F^2. 
\end{eqnarray*}
Since $U_t \in \St(d, k)$ and $p(U_t) \in \St(d, k)$, we have $\|U_t\|_F \leq \sqrt{k}$ and $\|p(U_t)\|_F \leq \sqrt{k}$. By the update formula for $\xi_{t+1}$, we have
\begin{equation*}
\|\xi_{t+1}\|_F \ = \ \|P_{\Tg_{U_{t-1}}\St}(2V_{\pi_{t+1}}U_t)\|_F \ \leq \ 2\|V_{\pi_{t+1}}U_t\|_F. 
\end{equation*}
Since $U_t \in \St(d, k)$ and $\pi_{t+1} \in \Pi(\mu, \nu)$, we have $\|\xi_{t+1}\|_F \leq 2\|C\|_\infty$. Putting all these pieces together yields that 
\begin{eqnarray}\label{inequality:obj-progress-second}
\|p(U_t) - U_{t+1}\|_F^2 & \leq & \|U_t - p(U_t)\|_F^2 + 2\gamma_{t+1}\langle\xi_{t+1}, U_t - p(U_t)\rangle + 4\gamma_{t+1}^2\|C\|_\infty^2 \\ 
& & \hspace*{-4em} + 16\gamma_{t+1}^4 L_2^2\|C\|_\infty^4 + 16\gamma_{t+1}^3\|C\|_\infty^3 + 16\gamma_{t+1}^2\sqrt{k}\|C\|_\infty^2. \nonumber
\end{eqnarray}
Plugging Eq.~\eqref{inequality:obj-progress-second} into Eq.~\eqref{inequality:obj-progress-first} and simplifying the inequality using $k \geq 1$, we have 
\begin{eqnarray*}
\Phi(U_{t+1}) & \geq & f(p(U_t)) - 6\|C\|_\infty\|U_t - p(U_t)\|_F^2 - 12\gamma_{t+1}\|C\|_\infty\langle\xi_{t+1}, U_t - p(U_t)\rangle \\
& & - 200\gamma_{t+1}^2\|C\|_\infty^3\left(\gamma_{t+1}^2 L_2^2\|C\|_\infty^2 + \gamma_{t+1}\|C\|_\infty + \sqrt{k}\right). 
\end{eqnarray*}
By the definition of $\Phi(\bullet)$ and $p(\bullet)$, we have
\begin{eqnarray}\label{inequality:obj-progress-third}
\Phi(U_{t+1}) & \geq & \Phi(U_t) - 12\gamma_{t+1}\|C\|_\infty\langle\xi_{t+1}, U_t - p(U_t)\rangle \\
& & \hspace*{-4em} - 200\gamma_{t+1}^2\|C\|_\infty^3\left(\gamma_{t+1}^2 L_2^2\|C\|_\infty^2 + \gamma_{t+1}\|C\|_\infty + \sqrt{k}\right). \nonumber
\end{eqnarray}
Now we proceed to bound the term $\langle\xi_{t+1}, U_t - p(U_t)\rangle$. Letting $\xi_t^\star = P_{\Tg_{U_t}\St}(2V_{\pi_t^\star}U_t)$ where $\pi_t^\star$ is a minimizer of unregularized OT problem, i.e., $\pi_t^\star \in \argmin_{\pi \in \Pi(\mu, \nu)} \ \langle U_tU_t^\top, V_\pi\rangle$, we have
\begin{equation}\label{inequality:obj-progress-fourth}
\langle\xi_{t+1}, U_t - p(U_t)\rangle \ \leq \ \langle\xi_t^\star, U_t - p(U_t)\rangle + \|\xi_{t+1} - \xi_t^\star\|_F\|U_t - p(U_t)\|_F. 
\end{equation}
Since $f(U) = \min_{\pi \in \Pi(\mu, \nu)} \ \langle U_tU_t^\top, V_\pi\rangle$ is $2\|C\|_\infty$-weakly concave over $\br^{d \times k}$ (cf. Lemma~\ref{lemma:obj-weak-concave}), $\xi_t^\star \in \subdiff f(U_t)$ and each element in the subdifferential $\partial f(U)$ is bounded by $2\|C\|_\infty$ for all $U \in \St(d, k)$ (cf. Lemma~\ref{lemma:bound-subdiff}), the Riemannian subgradient inequality~\citep[Theorem~1]{Li-2019-Nonsmooth} holds true and implies that 
\begin{equation*}
f(p(U_t)) \ \leq \ f(U_t) + \langle\xi_t^\star, p(U_t) - U_t\rangle + 2\|C\|_\infty\|p(U_t) - U_t\|_F^2. 
\end{equation*}
This implies that 
\begin{equation}\label{inequality:obj-progress-fifth}
\langle\xi_t^\star, U_t - p(U_t)\rangle \ \leq \ f(U_t) - f(p(U_t)) + 2\|C\|_\infty\|p(U_t) - U_t\|_F^2. 
\end{equation}
By the definition of $\xi_{t+1}$ and $\xi_t^\star$, we have
\begin{equation*}
\|\xi_{t+1} - \xi_t^\star\|_F \ = \ \|P_{\Tg_{U_t}\St}(2V_{\pi_{t+1}}U_t) - P_{\Tg_{U_t}\St}(2V_{\pi_t^\star}U_t)\|_F \ \leq \ 2\|(V_{\pi_{t+1}} - V_{\pi_t^\star})U_t\|_F. 
\end{equation*}
By the definition of the subroutine $\textsc{OT}(\{(x_i, r_i)\}_{i \in [n]}, \{(y_j, c_j)\}_{j \in [n]}, U, \widehat{\epsilon})$ in Algorithm~\ref{alg:supergrad-simplex}, we have $\pi_{t+1} \in \Pi(\mu, \nu)$ and $\|\pi_{t+1} - \pi_t^\star\|_1 \leq \widehat{\epsilon}$. Thus, we have
\begin{equation*}
\|\xi_{t+1} - \xi_t^\star\|_F \ \leq \ 2\|C\|_\infty\widehat{\epsilon} \ \leq \ \frac{\epsilon}{5}.  
\end{equation*}
Using Young's inequality, we have
\begin{eqnarray}\label{inequality:obj-progress-sixth}
\|\xi_{t+1} - \xi_t^\star\|_F\|U_t - p(U_t)\|_F & \leq & \frac{\|\xi_{t+1} - \xi_t^\star\|_F^2}{8\|C\|_\infty} + 2\|C\|_\infty\|U_t - p(U_t)\|_F^2 \\ 
& \leq & \frac{\epsilon^2}{200\|C\|_\infty} + 2\|C\|_\infty\|U_t - p(U_t)\|_F^2. \nonumber
\end{eqnarray}
Combining Eq.~\eqref{inequality:obj-progress-third}, Eq.~\eqref{inequality:obj-progress-fourth}, Eq.~\eqref{inequality:obj-progress-fifth} and Eq.~\eqref{inequality:obj-progress-sixth} yields the desired result. 
\end{proof}
Putting Lemma~\ref{lemma:key-descent-supergrad} together with the definition of $p(\bullet)$, we have the following consequence: 
\begin{proposition}\label{prop:obj-progress}
Letting $\{(U_t, \pi_t)\}_{t \geq 1}$ be the iterates generated by Algorithm~\ref{alg:supergrad-simplex}, we have
\begin{eqnarray*}
& & \hspace{- 4 em} \frac{24\|C\|_\infty^2\sum_{t=0}^{T-1}\gamma_{t+1}\|p(U_t) - U_t\|_F^2}{\sum_{t=0}^{T-1} \gamma_{t+1}} \\
& \leq & \frac{\gamma_0^{-1}\Delta_\Phi + 200\gamma_0\|C\|_\infty^3(\gamma_0^2 L_2^2\|C\|_\infty^2 + \gamma_0\|C\|_\infty + \sqrt{k}(\log(T)+1))}{2\sqrt{T}} + \frac{\epsilon^2}{12}, 
\end{eqnarray*}
where $\Delta_\Phi = \max_{U \in \St(d, k)} \Phi(U) - \Phi(U_0)$ is the initial objective gap. 
\end{proposition}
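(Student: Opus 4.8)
The plan is to feed the one‑step estimate of Lemma~\ref{lemma:key-descent-supergrad} into a telescoping argument, after first using the defining property of the proximal point $p(\cdot)$ to turn the sign‑indefinite bracket appearing there into a negative multiple of $\|p(U_t)-U_t\|_F^2$. Concretely, taking $\bar U = U_t$ as a feasible competitor in the definition of $p(U_t)$ gives $f(p(U_t)) - 6\|C\|_\infty\|p(U_t)-U_t\|_F^2 \ge f(U_t)$, i.e. $f(U_t) - f(p(U_t)) \le -6\|C\|_\infty\|p(U_t)-U_t\|_F^2$; hence
\[
f(U_t) - f(p(U_t)) + 4\|C\|_\infty\|p(U_t)-U_t\|_F^2 \ \le \ -2\|C\|_\infty\|p(U_t)-U_t\|_F^2 .
\]
Plugging this into Lemma~\ref{lemma:key-descent-supergrad} produces the clean recursion
\[
\Phi(U_{t+1}) - \Phi(U_t) \ \ge \ 24\gamma_{t+1}\|C\|_\infty^2\|p(U_t)-U_t\|_F^2 - \tfrac{3}{50}\gamma_{t+1}\epsilon^2 - 200\gamma_{t+1}^2\|C\|_\infty^3\bigl(\gamma_{t+1}^2 L_2^2\|C\|_\infty^2 + \gamma_{t+1}\|C\|_\infty + \sqrt{k}\bigr).
\]

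Next I would sum this over $t=0,\dots,T-1$. The left side telescopes, and since $\Phi$ is continuous on the compact set $\St(d,k)$ its maximum is attained, so $\Phi(U_T) - \Phi(U_0) \le \Delta_\Phi < \infty$; rearranging isolates $24\|C\|_\infty^2\sum_{t=0}^{T-1}\gamma_{t+1}\|p(U_t)-U_t\|_F^2$ on the left and leaves $\Delta_\Phi$, a term proportional to $\epsilon^2\sum_{t}\gamma_{t+1}$, and a higher‑order term $200\|C\|_\infty^3\sum_t\gamma_{t+1}^2(\gamma_{t+1}^2L_2^2\|C\|_\infty^2 + \gamma_{t+1}\|C\|_\infty + \sqrt k)$ on the right. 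Dividing through by $\sum_{t=0}^{T-1}\gamma_{t+1}$, the $\epsilon^2$ contribution collapses to $\tfrac{3}{50}\epsilon^2 \le \tfrac{1}{12}\epsilon^2$, and it remains to show the other two contributions are $\bigOtil(1/\sqrt T)$. With $\gamma_{t+1} = \gamma_0/\sqrt{t+1}$ this is exactly the elementary $p$‑series bookkeeping: $\sum_{t=0}^{T-1}\gamma_{t+1} = \gamma_0\sum_{s=1}^T s^{-1/2}$ is of order $\gamma_0\sqrt T$ (in particular $\ge \gamma_0\sqrt T$); $\sum_{s=1}^T s^{-1} \le \log T + 1$; and $\sum_{s=1}^T s^{-3/2}$, $\sum_{s=1}^T s^{-2}$ are bounded by absolute constants. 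Using also $\gamma_{t+1}\le\gamma_0$ to lower exponents where convenient, one gets $\sum_t\gamma_{t+1}^2 \le \gamma_0^2(\log T+1)$, $\sum_t\gamma_{t+1}^3 = O(\gamma_0^3)$, $\sum_t\gamma_{t+1}^4 = O(\gamma_0^4)$; substituting these and bookkeeping the $200$, $L_2$, $\|C\|_\infty$ and $k$ factors yields the stated bound, with $\log T+1$ surviving only on the $\sqrt k$ term.

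The computation is essentially routine once the first reduction is in place, so the ``hard part'' is really just spotting that reduction: the quantity $f(U_t) - f(p(U_t)) + 4\|C\|_\infty\|p(U_t)-U_t\|_F^2$ in Lemma~\ref{lemma:key-descent-supergrad} has no definite sign by itself, and it is only the optimality of $p(U_t)$ for its own proximal subproblem that makes it $\le -2\|C\|_\infty\|p(U_t)-U_t\|_F^2$ — which is precisely what converts a bound on increments of $\Phi$ into a bound on the averaged stationarity proxy $\sum_t\gamma_{t+1}\|p(U_t)-U_t\|_F^2$. The remaining care is purely in accounting for the decreasing‑stepsize power sums so that every error term decays like $\bigOtil(1/\sqrt T)$.
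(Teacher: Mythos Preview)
Your proposal is correct and follows essentially the same route as the paper: use the optimality of $p(U_t)$ in its own proximal subproblem to turn the bracket in Lemma~\ref{lemma:key-descent-supergrad} into $\le -2\|C\|_\infty\|p(U_t)-U_t\|_F^2$, telescope, and finish with the standard $p$-series estimates $\sum_{s=1}^T s^{-1/2}\ge \sqrt{T}$, $\sum_{s=1}^T s^{-1}\le \log T+1$, $\sum_{s=1}^T s^{-3/2}, \sum_{s=1}^T s^{-2}=O(1)$. Your tracking of the $\epsilon^2$ constant ($\tfrac{3}{50}\le \tfrac{1}{12}$) is in fact slightly sharper than the paper's, which jumps directly to $\tfrac{1}{12}$.
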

\begin{proof}
By the definition of $p(\bullet)$, we have
\begin{eqnarray*}
& & f(U_t) - f(p(U_t)) + 4\|C\|_\infty\|p(U_t) - U_t\|_F^2 \\
& = & f(U_t) - \left(f(p(U_t)) - 6\|C\|_\infty\|p(U_t) - U_t\|_F^2\right) - 2\|C\|_\infty\|p(U_t) - U_t\|_F^2 \\ 
& \leq & - 2\|C\|_\infty\|p(U_t) - U_t\|_F^2. 
\end{eqnarray*}
Using Lemma~\ref{lemma:key-descent-supergrad}, we have
\begin{eqnarray*}
\Phi(U_{t+1}) & \geq & \Phi(U_t) + 24\gamma_{t+1}\|C\|_\infty^2\|p(U_t) - U_t\|_F^2 - \frac{\gamma_{t+1}\epsilon^2}{12} \\
& & \hspace*{-4em} - 200\gamma_{t+1}^2\|C\|_\infty^3(\gamma_{t+1}^2 L_2^2\|C\|_\infty^2 + \gamma_{t+1}\|C\|_\infty + \sqrt{k}).
\end{eqnarray*}
Rearranging this inequality, we have
\begin{eqnarray*}
24\gamma_{t+1}\|C\|_\infty^2\|p(U_t) - U_t\|_F^2 & \leq & \Phi(U_{t+1}) - \Phi(U_t) + \frac{\gamma_{t+1}\epsilon^2}{12} \\ 
& & \hspace*{-10em} + 200\gamma_{t+1}^2\|C\|_\infty^3(\gamma_{t+1}^2 L_2^2\|C\|_\infty^2 + \gamma_{t+1}\|C\|_\infty + \sqrt{k}). 
\end{eqnarray*}
Summing up over $t = 0, 1, 2, \ldots, T-1$ yields that
\begin{equation*}
\frac{24\|C\|_\infty^2\sum_{t=0}^{T-1}\gamma_{t+1}\|p(U_t) - U_t\|_F^2}{\sum_{t=0}^{T-1} \gamma_{t+1}}  \ \leq \ \frac{\Delta\Phi + 200\|C\|_\infty^3(\sum_{t=1}^T \gamma_t^2(\gamma_t^2 L_2^2\|C\|_\infty^2 + \gamma_t\|C\|_\infty + \sqrt{k}))}{2\sum_{t=1}^T \gamma_t} + \frac{\epsilon^2}{12}. 
\end{equation*}
By the definition of $\{\gamma_t\}_{t \geq 1}$, we have
\begin{equation*}
\sum_{t=1}^T \gamma_t \geq \gamma_0\sqrt{T}, \quad \sum_{t=1}^T \gamma_t^2 \leq \gamma_0^2(\log(T) + 1), \quad \sum_{t=1}^T \gamma_t^3 \leq 3\gamma_0^3, \quad \sum_{t=1}^T \gamma_t^4 \leq 2\gamma_0^4. 
\end{equation*}
Putting these pieces together yields the desired result. 
\end{proof}
We proceed to provide an upper bound for the number of iterations needed to return an $\epsilon$-approximate near-optimal subspace projection $U_t \in \St(d, k)$ satisfying $\Theta(U_t) \leq \epsilon$ in Algorithm~\ref{alg:supergrad-simplex}.
\begin{theorem}\label{Theorem:RSGAN-Total-Iteration}
Letting $\{(U_t, \pi_t)\}_{t \geq 1}$ be the iterates generated by Algorithm~\ref{alg:supergrad-simplex}, the number of iterations required to reach $\Theta(U_t) \leq \epsilon$ satisfies
\begin{equation*}
t \ = \ \bigOtil\left(\frac{k^2\|C\|_\infty^4}{\epsilon^4}\right). 
\end{equation*}
\end{theorem}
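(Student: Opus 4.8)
The plan is to read off the iteration bound directly from Proposition~\ref{prop:obj-progress} once the surrogate measure $\Theta$ is rewritten in terms of $\|p(U_t) - U_t\|_F$. Since $\Theta(U) = 12\|C\|_\infty\|p(U) - U\|_F$ by definition, we have $24\|C\|_\infty^2\|p(U_t) - U_t\|_F^2 = \tfrac16\Theta(U_t)^2$, so the left-hand side of Proposition~\ref{prop:obj-progress} is exactly the $\gamma_{t+1}$-weighted average of $\tfrac16\Theta(U_t)^2$ over $t = 0,1,\dots,T-1$. A weighted average is at least the minimum of its terms, so
\begin{equation*}
\min_{0 \leq t \leq T-1} \Theta(U_t)^2 \ \leq \ \frac{3\big(\gamma_0^{-1}\Delta_\Phi + 200\gamma_0\|C\|_\infty^3(\gamma_0^2 L_2^2\|C\|_\infty^2 + \gamma_0\|C\|_\infty + \sqrt{k}(\log(T)+1))\big)}{\sqrt{T}} + \frac{\epsilon^2}{2}.
\end{equation*}
Hence it suffices to take $T$ large enough that the first term on the right is at most $\epsilon^2/2$; then some iterate $U_t$ with $0 \le t \le T-1$ satisfies $\Theta(U_t) \leq \epsilon$, which is the near-optimality condition of Definition~\ref{def:near-stationarity}.

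Next I would make the right-hand side explicit by bounding $\Delta_\Phi$ and substituting the stepsize $\gamma_0 = 1/(k\|C\|_\infty)$ of Algorithm~\ref{alg:supergrad-simplex}. Dropping the quadratic penalty gives $\Phi(U) \leq \max_{U'} f(U')$ for every $U$, while taking $U' = U_0$ in the definition of $\Phi$ gives $\Phi(U_0) \geq f(U_0)$, so $\Delta_\Phi \leq \max_U f(U) - f(U_0)$. Using the projection-nonexpansiveness bound $\|U^\top z\| \leq \|z\|$ for $U \in \St(d,k)$ together with $\pi \in \Pi(\mu,\nu)$ being a probability matrix, one gets $0 \leq f(U) \leq \|C\|_\infty$, hence $\Delta_\Phi \leq \|C\|_\infty$. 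Plugging in $\gamma_0 = 1/(k\|C\|_\infty)$, the bracket becomes $\gamma_0^{-1}\Delta_\Phi + 200 L_2^2\|C\|_\infty^2/k^3 + 200\|C\|_\infty^2/k^2 + 200\|C\|_\infty^2(\log(T)+1)/\sqrt{k}$, whose dominant piece is $\gamma_0^{-1}\Delta_\Phi \leq k\|C\|_\infty^2$; since $k \geq 1$ and $L_2$ is a constant, the remaining terms are $\bigOtil(\|C\|_\infty^2) = \bigOtil(k\|C\|_\infty^2)$, so the whole bracket is $\bigOtil(k\|C\|_\infty^2)$. Requiring $3(\text{bracket})/\sqrt{T} \le \epsilon^2/2$ is then equivalent to $T \geq \bigOtil(k^2\|C\|_\infty^4/\epsilon^4)$.

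The one point that needs care, and the step I expect to be the only real obstacle, is that the bracket still contains a $\log T$, so the condition on $T$ is implicit. I would resolve this with the standard self-consistency argument: writing the requirement as $T \geq a + b\log^2 T$ with $a = \bigO(k^2\|C\|_\infty^4/\epsilon^4)$ (the square of the log-free part of the bracket, scaled by $36/\epsilon^4$) and $b = \bigO(\|C\|_\infty^4/\epsilon^4)$ (from the $\log T$ term squared), it suffices to take $T = \bigO(a + b\log^2(a+b))$; since $\log(a+b)$ is polylogarithmic in $k$, $\|C\|_\infty$ and $1/\epsilon$, this choice is still $\bigOtil(k^2\|C\|_\infty^4/\epsilon^4)$, which gives the claim. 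Apart from this bookkeeping, the proof is a routine combination of Proposition~\ref{prop:obj-progress} with the definitions of $\Theta$ and $\gamma_0$; the only place a loose constant would spoil the stated rate is the bound on $\Delta_\Phi$, so I would be careful to use $f(U) \le \|C\|_\infty$ rather than the crude $f(U) \le k\|C\|_\infty$ (the latter would produce $k^4$ in place of $k^2$).
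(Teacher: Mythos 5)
Your argument is correct and follows the same skeleton as the paper's proof: substitute $\Theta(U_t)^2 = 144\|C\|_\infty^2\|p(U_t)-U_t\|_F^2$ into Proposition~\ref{prop:obj-progress}, bound $\Delta_\Phi$, plug in $\gamma_0$, and solve for $T$. The only substantive difference is in how $\Delta_\Phi$ and $\gamma_0$ are handled, and here your bookkeeping is actually the internally consistent one. The paper bounds $\Delta_\Phi \leq 40k\|C\|_\infty$ via the crude estimate $|\Phi(U)| \leq \max_U|f(U)| + 12k\|C\|_\infty$, and then---contrary to Algorithm~\ref{alg:supergrad-simplex}, which sets $\gamma_0 = 1/(k\|C\|_\infty)$---takes $\gamma_0 = 1/\|C\|_\infty$ inside the proof, so that $\gamma_0^{-1}\Delta_\Phi \approx k\|C\|_\infty^2$. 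You instead observe that $\Phi(U_0)\ge f(U_0)$ and $\Phi(U)\le \max f$ give the tight $\Delta_\Phi \le \|C\|_\infty$, and combine this with the algorithm's stated $\gamma_0 = 1/(k\|C\|_\infty)$, again landing on $\gamma_0^{-1}\Delta_\Phi \approx k\|C\|_\infty^2$. The two routes compensate and yield the same $T = \bigOtil(k^2\|C\|_\infty^4/\epsilon^4)$; yours has the merit of matching the stepsize actually used by the algorithm, and your caution that the crude $\Delta_\Phi = \bigO(k\|C\|_\infty)$ bound paired with $\gamma_0 = 1/(k\|C\|_\infty)$ would degrade the rate to $k^4$ is exactly right. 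Finally, the self-consistency step you spell out for absorbing the $\log T$ factor into $\bigOtil$ is correct; the paper leaves that implicit, stating the requirement on $T$ with the $\log T$ still present and invoking $\bigOtil$ directly.
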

\begin{proof}
By the definition of $\Theta(\bullet)$ and $p(\bullet)$, we have $\Theta(U_t) = 12\|C\|_\infty\|p(U_t) - U_t\|_F$. Using Proposition~\ref{prop:obj-progress}, we have
\begin{equation*}
\frac{\sum_{t=0}^{T-1} \gamma_{t+1}(\Theta(U_t))^2}{\sum_{t=0}^{T-1} \gamma_{t+1}} \ \leq \ \frac{3\gamma_0^{-1}\Delta\Phi + 600\gamma_0\|C\|_\infty^3(\gamma_0^2L_2^2\|C\|_\infty^2 + \gamma_0\|C\|_\infty + \sqrt{k}(\log(T)+1))}{\sqrt{T}} + \frac{\epsilon^2}{2}.  
\end{equation*}
Furthermore, by the definition $\Phi(\bullet)$, we have
\begin{eqnarray*}
|\Phi(U)| & \leq & \max\limits_{U' \in \St(d, k)} \ |f(U') + 6\|C\|_\infty\|U' - U\|_F^2| \\
& \leq & \max\limits_{U \in \St(d, k)} \max\limits_{U' \in \St(d, k)} \ |f(U') + 6\|C\|_\infty\|U' - U\|_F^2| \\
& \leq & \max\limits_{U \in \St(d, k)} |f(U)| + 12k\|C\|_\infty. 
\end{eqnarray*}
By the definition of $f(\bullet)$, we have $\max_{U \in \St(d, k)} |f(U)| \leq \|C\|_\infty$. Putting these pieces together with $k \geq 1$ implies that $|\Phi(U)| \leq 20k\|C\|_\infty$. By the definition of $\Delta_\Phi$, we conclude that $\Delta_\Phi \leq 40k\|C\|_\infty$. Given that $\gamma_0 = 1/\|C\|_\infty$ and $\Theta(U_t) > \epsilon$ for all $t = 0, 1, \ldots, T-1$, the upper bound $T$ must satisfy
\begin{equation*}
\epsilon^2 \ \leq \ \frac{240k\|C\|_\infty^2 + 1200\|C\|_\infty^2(L_2^2 + \sqrt{k}\log(T) + \sqrt{k} + 1)}{\sqrt{T}}. 
\end{equation*}
This implies the desired result. 
\end{proof}
Equipped with Theorem~\ref{Theorem:RSGAN-Total-Iteration} and Algorithm~\ref{alg:supergrad-simplex}, we establish the complexity bound of Algorithm~\ref{alg:supergrad-simplex}.
\begin{theorem}\label{Theorem:RSGAN-Total-Complexity}
The RSGAN algorithm (cf. Algorithm~\ref{alg:supergrad-simplex}) returns an $\epsilon$-approximate pair of near-optimal subspace projection and optimal transportation plan of computing the PRW distance in Eq.~\eqref{prob:main} (cf. Definition~\ref{def:near-optimal-pair}) in
\begin{equation*}
\bigOtil\left(\frac{n^2(n + d)\|C\|_\infty^4}{\epsilon^4}\right)
\end{equation*}
arithmetic operations. 
\end{theorem}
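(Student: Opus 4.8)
The plan is to combine the iteration-complexity bound of Theorem~\ref{Theorem:RSGAN-Total-Iteration} with a per-iteration arithmetic-cost accounting for Algorithm~\ref{alg:supergrad-simplex}, after first checking that the terminal iterate constitutes a valid pair in the sense of Definition~\ref{def:near-optimal-pair}.

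\textbf{Correctness of the output pair.} Suppose the loop halts at iteration $t$ with $\Theta(U_t)\leq\epsilon$. I would first recall the identity $\Theta(U_t)=12\|C\|_\infty\|p(U_t)-U_t\|_F$ established in the proof of Theorem~\ref{Theorem:RSGAN-Total-Iteration}, so that Definition~\ref{def:near-stationarity} immediately gives that $U_t$ is an $\epsilon$-approximate near-optimal subspace projection of $f$ over $\St(d,k)$. For the transportation plan, the subroutine $\textsc{OT}$ returns $\pi_{t+1}\in\Pi(\mu,\nu)$ with $\|\pi_{t+1}-\pi_t^\star\|_1\leq\widehat{\epsilon}=\epsilon/(10\|C\|_\infty)$, where $\pi_t^\star\in\argmin_{\pi\in\Pi(\mu,\nu)}\langle U_tU_t^\top,V_\pi\rangle$; since $\sum_{i,j}\pi_{i,j}\|U_t^\top x_i-U_t^\top y_j\|^2=\langle U_tU_t^\top,V_\pi\rangle$ and $|\langle U_tU_t^\top,V_{\pi_{t+1}}-V_{\pi_t^\star}\rangle|\leq\|C\|_\infty\|\pi_{t+1}-\pi_t^\star\|_1\leq\epsilon/10\leq\epsilon$, the plan $\pi_{t+1}$ is an $\epsilon$-approximate optimal transportation plan for $U_t$ (Definition~\ref{def:approx_transportation_plan}). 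Hence $(U_t,\pi_{t+1})$ meets Definition~\ref{def:near-optimal-pair}.

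\textbf{Cost of one iteration.} I would bound the four steps of the loop. First, assembling the projected cost matrix $(\|U_t^\top(x_i-y_j)\|^2)_{i,j}$ requires $U_t^\top x_i$ and $U_t^\top y_j$ for all $i,j$ ($\bigO(ndk)$ flops) and then $n^2$ pairwise squared norms in $\br^k$ ($\bigO(n^2k)$ flops). Second, $\textsc{OT}$ solves this linear program, exactly or under the inexact rule $\|\pi_{t+1}-\pi_t^\star\|_1\leq\widehat{\epsilon}$, with the network-simplex variant of the \textsc{POT} package; invoking the known polynomial complexity of network simplex for $n\times n$ transportation problems this costs $\bigOtil(n^3)$. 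Third, for $\xi_{t+1}$ the naive route of forming $V_{\pi_{t+1}}\in\br^{d\times d}$ and multiplying by $U_t$ costs $\bigO(n^2d^2+d^2k)$, which is prohibitive for large $d$; instead, exactly as in the proof of Theorem~\ref{Theorem:RGAS-RAGAS-Total-Complexity}, one exploits
\begin{equation*}
V_{\pi_{t+1}}U_t \ = \ \sum_{i=1}^n\sum_{j=1}^n(\pi_{t+1})_{i,j}\,(x_i-y_j)\big((x_i-y_j)^\top U_t\big),
\end{equation*}
so that each of the $n^2$ rank-one updates costs $\bigO(dk)$ flops, for $\bigO(n^2dk)$ overall, followed by the tangent-space projection $P_{\Tg_{U_t}\St}$ in $\bigO(dk^2+k^3)$. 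Fourth, the retraction costs $\bigO(dk^2+k^3)$. With $k=\bigOtil(1)$ the per-iteration cost is $\bigOtil(n^3+n^2d)=\bigOtil(n^2(n+d))$.

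\textbf{Combining.} Theorem~\ref{Theorem:RSGAN-Total-Iteration} with $k=\bigOtil(1)$ caps the number of iterations before $\Theta(U_t)\leq\epsilon$ at $\bigOtil(\|C\|_\infty^4\epsilon^{-4})$; multiplying by the per-iteration cost gives $\bigOtil(n^2(n+d)\|C\|_\infty^4\epsilon^{-4})$ arithmetic operations, which is the claim. The main obstacle is the cost accounting, not the iteration count: one must (a) never materialize a $d\times d$ matrix, so that the dependence on $d$ stays linear rather than quadratic, and (b) invoke a polynomial complexity guarantee for the network-simplex OT solver that is compatible with the inexact stopping rule. Everything else is a routine substitution into Theorem~\ref{Theorem:RSGAN-Total-Iteration}.
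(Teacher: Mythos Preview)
Your proposal is correct and follows essentially the same approach as the paper: you invoke Theorem~\ref{Theorem:RSGAN-Total-Iteration} for the iteration count, verify the output pair via Definition~\ref{def:near-optimal-pair} using the inexact $\textsc{OT}$ stopping rule, and bound the per-iteration cost by $\bigOtil(n^3)$ for the network-simplex subroutine plus $\bigO(n^2dk)$ for forming $V_{\pi_{t+1}}U_t$ without materializing $V_{\pi_{t+1}}$. The paper's proof is virtually identical, citing~\citet{Tarjan-1997-Dynamic} for the $\bigOtil(n^3)$ network-simplex bound and reusing the $V_\pi U$ trick from the proof of Theorem~\ref{Theorem:RGAS-RAGAS-Total-Complexity}.
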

\begin{proof}
First, Theorem~\ref{Theorem:RSGAN-Total-Iteration} implies that the iteration complexity of Algorithm~\ref{alg:supergrad-simplex} is
\begin{equation}\label{RSGAN-iteration}
\bigOtil\left(\frac{k^2\|C\|_\infty^4}{\epsilon^4}\right). 
\end{equation}
This implies that $U_t$ is an $\epsilon$-approximate near-optimal subspace projection of problem~\eqref{prob:Stiefel-nonsmooth}. Furthermore, $\widehat{\epsilon} = \min\{\epsilon, \epsilon^2/144\|C\|_\infty\}$. Since $\pi_{t+1} \leftarrow \textsc{OT}(\{(x_i, r_i)\}_{i \in [n]}, \{(y_j, c_j)\}_{j \in [n]}, U_t, \widehat{\epsilon})$, we have $\pi_{t+1} \in \Pi(\mu, \nu)$ and $\langle U_tU_t^\top, V_{\pi_{t+1}} - V_{\pi_t^\star}\rangle \leq \widehat{\epsilon} \leq \epsilon$. This implies that $\pi_{t+1}$ is an $\epsilon$-approximate optimal transportation plan for the subspace projection $U_t$. Therefore, we conclude that $(U_t, \pi_{t+1}) \in \St(d, k) \times \Pi(\mu, \nu)$ is an \emph{$\epsilon$-approximate pair of near-optimal subspace projection and optimal transportation plan} of problem~\eqref{prob:main}. 

The remaining step is to analyze the complexity bound. Note that the most of software packages, e.g., \textsc{POT}~\citep{Flamary-2017-Pot}, implement the OT subroutine using a variant of the network simplex method with a block search pivoting strategy~\citep{Damian-1991-Minimum, Bonneel-2011-Displacement}. The best known complexity bound is provided in~\citet{Tarjan-1997-Dynamic} and is $\bigOtil(n^3)$. Using the same argument in Theorem~\ref{Theorem:RGAS-RAGAS-Total-Complexity}, the number of arithmetic operations at each loop is
\begin{equation}\label{RSGAN-arithmetic-operation}
\bigOtil\left(n^2 dk + dk^2 + k^3 + n^3\right). 
\end{equation}
Putting Eq.~\eqref{RSGAN-iteration} and Eq.~\eqref{RSGAN-arithmetic-operation} together with $k = \bigOtil(1)$ yields the desired result.
\end{proof}
\begin{remark}
The complexity bound of Algorithm~\ref{alg:supergrad-simplex} is better than that of Algorithm~\ref{alg:grad-sinkhorn} and~\ref{alg:adagrad-sinkhorn} in terms of $\epsilon$ and $\|C\|_\infty$. This makes sense since Algorithm~\ref{alg:supergrad-simplex} only returns an $\epsilon$-approximate pair of \textit{near-optimal} subspace projection and optimal transportation plan which is weaker than an $\epsilon$-approximate pair of optimal subspace projection and optimal transportation plan. Furthermore, Algorithm~\ref{alg:supergrad-simplex} implements the network simplex method as the inner loop which might suffer when $n$ is large and yield unstable performance in practice.  
\end{remark}

\end{document}